\documentclass{article}

\PassOptionsToPackage{numbers, compress}{natbib}
\usepackage[preprint]{neurips_2026}
\usepackage[utf8]{inputenc}
\usepackage[T1]{fontenc}
\usepackage{microtype}
\usepackage{graphicx}
\usepackage{subcaption}
\usepackage{booktabs} %
\usepackage{makecell}
\usepackage{tabularx}
\usepackage{caption}
\usepackage{sidecap}
\usepackage[breaklinks=true,letterpaper=true,colorlinks,bookmarks=false]{hyperref}
\usepackage{xcolor}
\usepackage[page]{appendix}
\usepackage[most]{tcolorbox}
\usepackage{enumitem} %
\makeatletter
\let\oldappendix\appendices

\renewcommand{\appendices}{%
  \clearpage
  \RestoreAddContentsLine %
  \renewcommand{\thesection}{\Roman{section}}
  \let\tf@toc\tf@app
  \addtocontents{app}{\protect\setcounter{tocdepth}{2}}
  \immediate\write\@auxout{%
    \string\let\string\tf@toc\string\tf@app^^J
  }
  \oldappendix
}%

\newcommand{\listofappendices}{%
  \begingroup
  \renewcommand{\contentsname}{Contents}
  \let\@oldstarttoc\@starttoc
  \def\@starttoc##1{\@oldstarttoc{app}}
  \tableofcontents
  \endgroup
}

\makeatother

\makeatletter
\newcommand{\RestoreAddContentsLine}{%
  \ifcsname hyper@anchor\endcsname
    \def\addcontentsline##1##2##3{%
      \addtocontents{##1}{%
        \protect\contentsline{##2}{##3}{\thepage}{\@currentHref}%
      }%
    }%
  \else
    \def\addcontentsline##1##2##3{%
      \addtocontents{##1}{%
        \protect\contentsline{##2}{##3}{\thepage}%
      }%
    }%
  \fi
}
\makeatother
\definecolor{dark_green}{rgb}{.2,.5,0}
\definecolor{dark_blue}{rgb}{0,.15,.5}

\hypersetup{pdftex,  %
  breaklinks=true,  %
  colorlinks=true,
  linkcolor=dark_blue,
  citecolor=dark_green,
}

\DeclareFontEncoding{LS1}{}{}
\DeclareFontSubstitution{LS1}{stix}{m}{n}
\DeclareSymbolFont{stixsymbols4}{LS1}{stixbb}{m}{it}
\DeclareMathSymbol{\hexagonblack}{\mathord}{stixsymbols4}{"DE}
\DeclareFontEncoding{LS2}{}{}
\DeclareFontSubstitution{LS2}{stix2}{m}{n}
\DeclareSymbolFont{STIXTwoArrowsIII}{LS2}{stix2tt}{m}{n}
\SetSymbolFont{STIXTwoArrowsIII}{bold}{LS2}{stix2tt}{b}{n}

\makeatletter
\def\STIX@undefine#1{%
  \if\relax\noexpand#1\let#1=\@undefined\fi}
\STIX@undefine\mdblkcircle
\makeatother

\DeclareMathSymbol{\mdblkcircle}{\mathord}{STIXTwoArrowsIII}{"7E}

\usepackage{float}
\usepackage{enumitem}

\PassOptionsToPackage{hyphens}{url}
\usepackage{hyperref}
\usepackage{url}

\usepackage{xcolor}

\usepackage{amsmath}
\usepackage{amssymb}
\usepackage{mathtools}
\usepackage{amsthm}
\usepackage{wrapfig}

\usepackage[capitalize,nameinlink,noabbrev]{cleveref}

\usepackage{multirow}

\usepackage{snaptodo}

\snaptodoset{block rise=2em}
\snaptodoset{margin block/.style={font=\footnotesize}} 
\AtBeginDocument{\setlength{\marginparsep}{-0.05cm}}

\usepackage{soul}

\newcommand{\bbE}{\mathbb{E}}

\theoremstyle{plain}
\newtheorem{theorem}{Theorem}[section]
\newtheorem{proposition}{Proposition}[section]
\newtheorem{lemma}{Lemma}[section]
\newtheorem{corollary}{Corollary}[section]
\theoremstyle{definition}
\newtheorem{definition}[theorem]{Definition}
\newtheorem{assumption}{Assumption}[section]
\theoremstyle{remark}
\newtheorem{remark}[theorem]{Remark}

\crefname{proposition}{Proposition}{Propositions}
\crefname{lemma}{Lemma}{Lemmas}
\crefname{corollary}{Corollary}{Corollaries}
\crefname{assumption}{Assumption}{Assumptions}

\newcommand{\defeq}{\overset{\text{\tiny def}}{=}}

\usepackage[textsize=tiny]{todonotes}

\title{STRABLE: Benchmarking Tabular Machine Learning with Strings}

\author{
  \textbf{Gioia Blayer}$^{1}$ \quad
  \textbf{Myung Jun Kim}$^{1}$ \quad
  \textbf{F\'{e}lix Lefebvre}$^{1}$ \quad
  \textbf{Lennart Purucker}$^{4,3}$ \\
  \textbf{Alan Arazi}$^{4,6}$ \quad
  \textbf{Eilam Shapira}$^{6}$ \quad
  \textbf{Roi Reichart}$^{6}$ \quad
  \textbf{Frank Hutter}$^{4,5,3}$ \\
  \textbf{Marine Le Morvan}$^{1}$ \quad
  \textbf{David Holzm\"{u}ller}$^{1}$ \quad
  \textbf{Ga\"{e}l Varoquaux}$^{1,2}$ \\
  \vspace{0.5em} \\
  \small $^{1}$SODA Team, INRIA Saclay, Palaiseau, France \quad
  \small $^{2}$Probabl, France \quad
  \small $^{3}$University of Freiburg \\
  \small $^{4}$Prior Labs \quad
  \small $^{5}$ELLIS Institute T\"{u}bingen \quad
  \small $^{6}$Technion -- Israel Institute of Technology \\
  \texttt{gioia.blayer@inria.fr}
}

\begin{document}

\maketitle

\begin{abstract}
Benchmarking tabular learning has revealed the benefit of dedicated architectures, pushing the state of the art. But real-world tables often contain string entries, beyond numbers, and these settings have been understudied due to a lack of a solid benchmarking suite. They lead to new research questions: Are dedicated learners needed, with end-to-end modeling of strings and numbers? Or does it suffice to encode strings as numbers, as with a categorical encoding? And if so, do the resulting tables resemble numerical tabular data, calling for the same learners? To enable these studies, we contribute STRABLE, a benchmarking corpus of 108 tables, all real-world learning problems with strings and numbers across diverse application fields. We run the first large-scale empirical study of tabular learning with strings, evaluating 445 pipelines. These pipelines span end-to-end architectures and modular pipelines, where strings are first encoded, then post-processed, and finally passed to a tabular learner. We find that, because most tables in the wild are categorical-dominant, advanced tabular learners paired with simple string embeddings achieve good predictions at low computational cost. On free-text-dominant tables, large LLM encoders become competitive. Their performance also appears sensitive to post-processing, with differences across LLM families. Finally, we show that STRABLE is a good set of tables to study ``string tabular'' learning as it leads to generalizable pipeline rankings that are close to the oracle rankings. We thus establish STRABLE as a foundation for research on tabular learning with strings, an important yet understudied area. \footnote{Dataset: \url{https://huggingface.co/datasets/inria-soda/STRABLE-benchmark}} \footnote{Code: \url{https://github.com/soda-inria/strable}}
\end{abstract}

\section{Introduction: The importance of strings for tabular learning}
\label{sec:intro}

Benchmarking datasets have been central to progress in machine learning and artificial intelligence: for instance, MNIST \citep{726791} and ImageNet \citep{5206848} drove the deep-learning revolution. Standard benchmarking datasets for a given domain, such as vision or language, enable the emergence of new ideas, sorting out the good from the bad. The model rankings they produce are useful beyond the benchmark itself, as they transfer to new datasets from the same domain \citep{recht2019imagenet,roelofs2019meta,hardt2025emerging}.

The diversity of data tables, central to enterprise machine learning, may seem like a roadblock to this benchmarking methodology. Yet, tabular data has regularities of its own, such as a strong columnar structure (different quantities and distributions per column \citep{grinsztajn2022tree}). Benchmarks like %
TALENT \citep{ye2025closerlookdeeplearning} or TabArena \citep{erickson2025tabarenalivingbenchmarkmachine} have guided the development of new tabular-specific deep learning methods —such as foundation models \citep{hollmann2025accurate,qu2025tabicltabularfoundationmodel} and tailored architectures \cite{holzmuller2024better,gorishniy2025tabm} that challenge the dominance of industry standards such as XGBoost \citep{Chen_2016}. %

However, datasets in the wild rarely adhere to the strict numerical tables favored by current research; they are frequently populated with string entries, some short --- names or codes --- while other longer with richer semantic content. Despite this prevalence, the intersection of tabular learning and strings --- and the critical challenge of selecting their appropriate representation --- remains understudied due to the absence of a solid benchmarking suite. Existing benchmarks often prioritize readily-prepared numerical tables and effectively exclude the high-cardinality and semantic entries found in raw string-heavy tables. To bridge this gap, we need a robust set of datasets capable of supporting empirical work specifically on the domain of tables with strings.

Our contributions are: (i) \textbf{STRABLE}, a corpus of 108 real-world tables with raw strings and its string taxonomy; (ii) evidence that modular architectures currently dominate the Pareto frontier and that LLM embeddings need a critical post-processing step to perform well; (iii) an analysis establishing STRABLE's generalizability; (iv) evidence that lightweight encoders suffice on categorical-dominant tables, while large LLMs gain ground for free-text. 
The paper is organized as follows: Section~\ref{sec:tabular_learning_research} reviews the benchmarking landscape; Section~\ref{sec:building_the_benchmark} details STRABLE's construction and taxonomy; Section~\ref{sec:studying_tabular_learners} reports the empirical study; Section~\ref{sec:strable_generalizability} analyzes the stability of STRABLE's rankings; Section~\ref{sec:discussion_and_conclusion} concludes with the implications of our findings on future research.

\section{Context: tabular learning research}
\label{sec:tabular_learning_research}

\subsection{The tabular-learning benchmarking landscape}

\paragraph{A need for string tabular learning benchmarks}
The ``iron rule'' guiding machine-learning research is to compare pipelines on held-out data \citep{hardt2025emerging}. While %
model rankings remain surprisingly consistent across data splits \citep{roelofs2019meta,recht2019imagenet,hardt2025emerging}, %
no algorithm is optimal across all problem classes \citep{585893}. Rankings are domain-dependent, and models whose inductive biases match the data distribution perform best \citep{grinsztajn2022tree}. Introducing strings into a table fundamentally changes the data distribution. Can high-cardinality categorical encodings \citep{micci2001preprocessing,cerda2020encoding} suffice, or do we need models with different inductive biases that leverage string semantics \citep{kim2024cartepretrainingtransfertabular}? Can we identify the conditions that make each approach optimal? 
Answering these questions requires a benchmark that preserves raw, heterogeneous string entries. Existing benchmarks, while rigorous in their respective scopes, were not designed for this purpose. We organize them below by how they handle string features (see also \autoref{tab:benchmark_comparison}).

\paragraph{String-excluding benchmarks} Several widely-used benchmark suites focus on numerical or low-cardinality categorical features by design. PMLB \citep{romano2021pmlbv10opensource} explicitly replaces categorical and string-encoded features with numerical integer codes, and PMLBmini \citep{knauer2024pmlbminitabularclassificationbenchmark} inherits this property. OpenML-CC18 \citep{bischl2021openmlbenchmarkingsuites} filters out high-cardinality features through its post-one-hot feature-count cap. \citet{grinsztajn2022tree} removes categorical features with more than 20 categories and one-hot-encodes the rest, eliminating high-cardinality string content. TabReD \citep{rubachev2024tabredanalyzingpitfallsfilling} curates datasets that have already undergone feature-engineering, removing raw string content prior to evaluation. TabArena \citep{erickson2025tabarenalivingbenchmarkmachine} inherits similar curation choices from prior work.

\paragraph{String-flattening benchmarks} A larger group of benchmarks retains string columns but converts them to fixed numerical representations before evaluation, preventing the study of alternative string-handling strategies. \citet{mcelfresh2024neuralnetsoutperformboosted}%
, AMLB \citep{gijsbers2023amlbautomlbenchmark} and TabRepo \citep{salinas2024tabrepolargescalerepository} 
apply pipeline- or method-specific encodings; AMLB additionally excludes free-form text features at curation time. TALENT (\citet{ye2025closerlookdeeplearning}) and \citet{zabërgja2025tabulardatadeeplearning} use standard vectorization that treats strings as mathematical vectors. Concurrently, TEmBed \citep{vogel2026universaltabularembeddingsbenchmark} benchmarks tabular embeddings across cell, row, column, and table granularities on retrieval and similarity tasks, serializing tables to text for most encoders. %

\paragraph{Narrow string-aware benchmarks} Few benchmarks evaluate strings, though often with restrictive scopes. \citet{shi2021benchmarkingmultimodalautomltabular} pioneer the multimodal tabular setting with 18 text-rich datasets, but many of their tables rely almost entirely on text with only one or two tabular features. CARTE \citep{kim2024cartepretrainingtransfertabular} excludes predominantly missing columns and curates datasets with meaningful columns and discrete entries, it has a lower density of text columns that are moderately diverse (\autoref{fig:uniqueness_ratio_comparison_4_benchmarks}). TextTabBench \citep{mraz2025benchmarkingfoundationmodelstabular} identifies the mixed-modality gap in tables but focuses only on datasets with semantically rich ``free-text'' features and thus is limited to a small subset of the tables-with-string domain. 

\subsection{Progress in tabular learning}

\paragraph{Numerical tabular learners}
For years, Gradient-Boosted Decision Trees such as XGBoost and CatBoost \citep{prokhorenkova2019catboostunbiasedboostingcategorical} have dominated tabular data, outperforming deep learning on tabular benchmarks \citep{grinsztajn2022tree}. Recent tuned deep baselines like RealMLP \citep{holzmuller2024better} and parameter-efficient ensembles like TabM \citep{gorishniy2025tabm} have narrowed this gap, and the emergence of tabular foundation models -- %
TabPFN-2.5 \citep{grinsztajn2025tabpfn25advancingstateart} and TabICLv2 \citep{qu2026tabiclv2betterfasterscalable} -- has let in-context learning catch up with tree-based models.

\paragraph{End-to-end string table learners} Some end-to-end tabular learners accept tables with strings. Classical methods discard string surface form via static encodings: \texttt{CatBoost} treats string columns as categorical features and applies its native ordered target statistics scheme, while \texttt{Mambular} \citep{thielmann2025mambularsequentialmodeltabular} ordinally encodes strings and passes them through a learnable embedding lookup with a Mamba backbone. A more recent wave jointly models numbers, string entries, and column names to capture semantics: \texttt{CARTE} \citep{kim2024cartepretrainingtransfertabular} and \texttt{TARTE} \citep{kim2025table} pretrain transformers on string embeddings and numerical features; \texttt{TabSTAR} \citep{arazi_tabstar_2025} unfreezes a language-model encoder for target-aware semantics; and \texttt{ConTextTab} \citep{spinaci2025contexttabsemanticsawaretabularincontext} combines semantic encoders with a PFN backbone \citep{müller2024transformersbayesianinference}.

\section{Building the STRABLE benchmark corpus}
\label{sec:building_the_benchmark}

\paragraph{Data Collection} To construct our benchmark corpus, we aggregated data from $33$ distinct sources spanning 8 application fields (\autoref{tab:dataset_distribution_table1.5}), ranging from large institutional repositories to community-driven platforms (see \autoref{app:datasets}). %
The raw data format varied significantly across these domains: while sources like HIFLD (Infrastructure) and ClinicalTrials.gov typically provided structured CSV, other fields like Commerce and Food heavily relied on web-scraped HTML tables or nested JSONs. From the corresponding available datasets, we manually selected tables as follows.
First, we only considered tables with at least two string columns and a minimum of $500$ samples (limit inspired by \cite{erickson2025tabarenalivingbenchmarkmachine}). 
Each table was paired with a target variable for supervised learning. The final corpus comprises 13 binary classification, 19 multi-class classification, and 76 regression tasks.

\paragraph{Minimal Preprocessing} We minimize preprocessing to reflect the reality that data preparation is a major bottleneck \citep{datanami2020dataprep,stonebraker2019machine}. Providing raw data enables studying models that automate this burden. Moreover,  preprocessing choices (e.g., specific categorical encoders) lack consensus and risk biasing the benchmark against future architectures that might process strings differently. We likewise perform no feature engineering so STRABLE reflects how learners behave on data as-found rather than as-curated. %
We flattened nested structures, removed duplicate rows, and dropped single-value columns, all-null columns, and rows with missing labels. To prevent leakage, we removed features $X_i$ where $X_i$ is a trivial function of the target. Missing entries were handled by the encoder-learner pipelines.
Since the benchmark focuses on I.I.D. evaluation, we adopted a snapshot strategy—extracting only the most recent available year of data. We sub-sampled large tables to a maximum of 75,000 rows (sampling details can be found in the \Cref{app:downsampling}).
For regression tasks, targets often exhibit heavy skew (e.g., wages). While individual applications may benefit from domain-specific loss functions, to ensure fair evaluation across diverse domains we applied a skewness-minimization protocol: from a set of candidate functions—including $\log(y)$, $\log(1+y)$, $\sqrt[3]{y}$, $\operatorname{arcsinh}(y)$, and $\operatorname{sgn}(y) \cdot \log(1+|y|)$ —we selected the transformation that minimized the skewness in the target distribution following \citep{Kuhn_13}.
STRABLE's results should therefore be read as a lower bound from which practitioners can measure the added value of their own domain-specific engineering.

\begin{figure}[t]
  \centering

  \begin{subfigure}[t]{0.62\textwidth}
    \centering
    \textbf{\sffamily\footnotesize (a) STRABLE strings are short and repetitive.}
    \vspace{0.3em}
    \includegraphics[width=\linewidth, trim={0.3cm 0.3cm 0.2cm 0.55cm}, clip]%
      {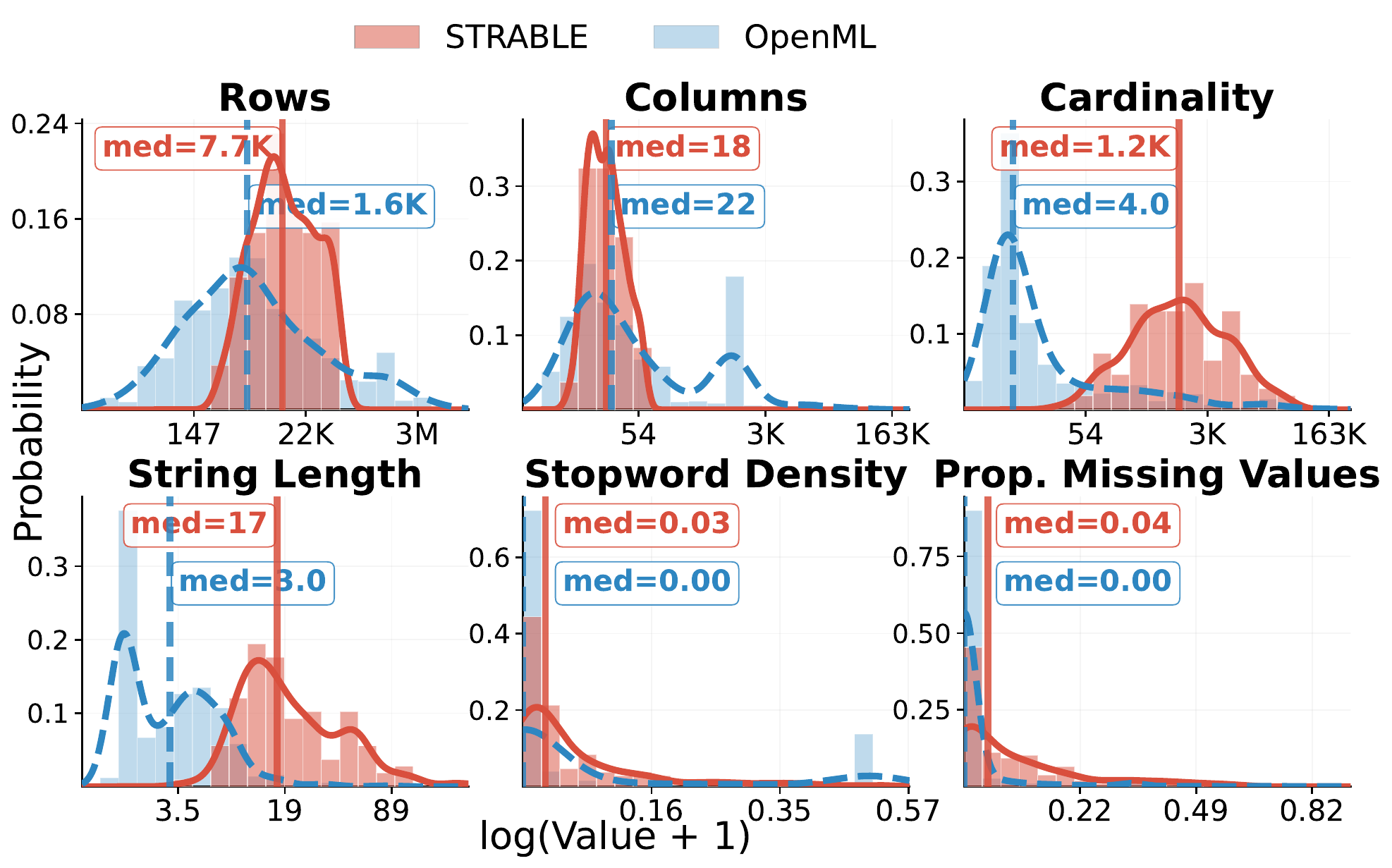}
    \phantomcaption
    \label{fig:strable_motivation_a}
  \end{subfigure}
  \hfill
  \begin{subfigure}[t]{0.37\textwidth}
    \centering
    \textbf{\sffamily\footnotesize (b) All models benefit from combining both modalities.}
    \vspace{0.3em}
    \includegraphics[width=\linewidth]%
      {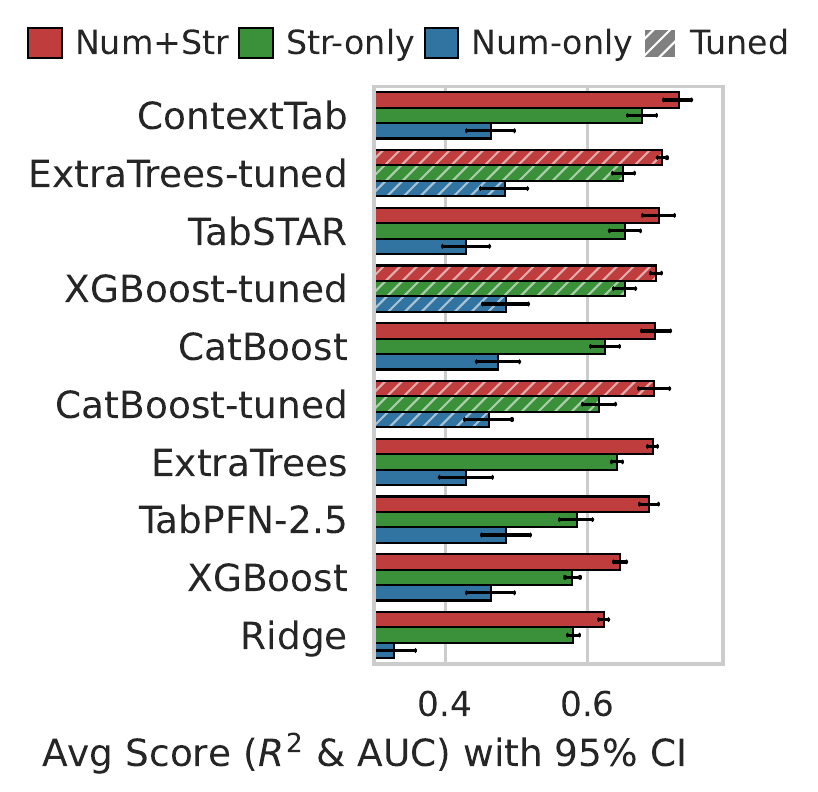}
    \phantomcaption
    \label{fig:strable_motivation_b}
  \end{subfigure}

  \caption{\textbf{(a)} STRABLE (solid) vs. OpenML (dashed); cardinality and string length aggregate over string columns. \textbf{(b)} Performance for Num-only, Str-only and full table (Num+Str) by learner.}
  \label{fig:strable_motivation}
\end{figure}

\paragraph{Insights into string tables in the wild captured by our corpus}
STRABLE's column metadata (\autoref{fig:strable_motivation_a}) follows heavy-tailed distributions: a median of 18 columns, 7.7K rows, 17-character strings per cell, and a median string-column cardinality of 1.2K. While the median number of columns is comparable to OpenML's -- a known tabular machine learning repository \citep{vanschoren2014openml,bischl2025openml} -- STRABLE tables contain roughly 5 times more rows. In addition, STRABLE's strings are shorter and more repetitive than in prior tabular-text studies (\autoref{fig:uniqueness_ratio_comparison_4_benchmarks}) because we include any string type rather than only long-form text. String columns can be broken into different categories (Appendix~\ref{subsec:string-taxonomy}): Names (22.78\%), Structured Codes (17.006\%), Free Text (8.23\%), Datetime (1.97\%) and Identifiers (0.5\%), with the remaining 49.45\% being plain Categoricals. Half (50.55\%) consist of modalities that string-excluding and string-flattening benchmarks typically ignore or destroy (e.g.: PMLB ignores Names, Structured Codes, Free Text, Datetimes, and Identifiers by collapsing them into integer labels; AMLB removes Free Text columns entirely, \citep{grinsztajn2022tree} drops categorical features with more than 20 items).

\section{Studying today's string tabular learners}
\label{sec:studying_tabular_learners}

Strings carry signal that tabular learning cannot afford to ignore. Indeed, \autoref{fig:strable_motivation_b} shows that integrating string columns yields a tangible performance improvement across the benchmark for every learner — averaged across encoders for modular pipelines, and across datasets for end-to-end ones — confirming that numeric and string features are complementary. This raises the question of which pipelines best exploit that signal. We run on the STRABLE corpus typical pipelines used today on tables with strings. We benchmark both modular and end-to-end (E2E) architectures.

\paragraph{Modular Pipelines}
Modular pipelines decompose the learning task into three distinct stages: a string encoder, a dimensionality reduction step for LLMs (post-processing), and a tabular learner.

\quad \textbf{Encoders.} We investigate a high-cardinality categorical encoder, \texttt{TargetEncoder} \citep{micci2001preprocessing}, as well as string encoders such as Tf-Idf on character-level n-grams \citep[StringEncoder in][]{skrub2026}, or Sentence Transformers \citep[using SBERT, ][]{reimers-2019-sentence-bert}. \Cref{app:encoder_learner_pipelines} drills down on encoders.  We use different LLMs (\autoref{tab:runtime_summary}) to embed strings, but focus on the following representative LLMs that were either used by an end-to-end model or appear prominently in the English MTEB benchmark \citep{muennighoff2023mtebmassivetextembedding}: 
All-MiniLM-L6-v2~\citep{reimers-2019-sentence-bert}, FastText~\citep{mikolov2017advancespretrainingdistributedword}, E5-small-v2~\citep{wang2022text}, LLaMA-3.1-8B~\citep{grattafiori2024llama3herdmodels}, Qwen3-Embedding-8B~\citep{qwen3embedding}, and Jasper-0.6B~ \citep{zhang2025jaspertokencompression600mtechnicalreport}.
Additionally, we include the TARTE encoder \citep{kim2025table}, which generates row-wise embeddings through pre-trained weights. 
We generate embeddings for each string cell independently. To obtain a tractable input for tabular learners while preserving relevant signal, these LLM representations are reduced to 30 dimensions. 

\quad \textbf{Post-processing.} We evaluate three distinct  strategies within this stage: (1) Principal Component Analysis\footnote{For Tf-Idf, we rather use an SVD, i.e. we do not center, to avoid densifying the sparse matrix, following \citet{skrub2026}.} (PCA) on the embeddings of each string column %
with 30 PC (2) Standard scaling before PCA, which equalises per-dimension variance, 
preventing high-variance dimensions from dominating the principal 
components \citep{10.1098/rsta.2015.0202,sklearn_feature_scaling} (3) Retain the first 30 embedding dimensions (No PCA). While being a natural choice for Matryoshka-trained models 
\citep{kusupati2024matryoshkarepresentationlearning} like Qwen-3-8B, 
whose leading dimensions are optimized for semantic content, this strategy is applied across all encoders to match the dimensionality of the PCA-based pipelines. 

\quad \textbf{Learners.} The resulting numerical tables are used as input to tabular learner of varying sophistication: Ridge \citep{a92f3c16-7c6e-31d3-b403-82d2b0a469e4}, Extra-Trees \citep{geurts2006extremely}, XGBoost, RealMLP, TabM, TabICLv2 and TabPFN-2.5. 

\textbf{E2E architectures} We apply on the raw tables end-to-end learners that handle strings without the need of an external encoder: CatBoost\footnote{CatBoost falls in this group as we do not couple it with an external encoder; instead, we declare string columns as categorical features and rely on CatBoost's native ordered target statistics.}, Mambular, TabSTAR, and ContextTab.

\subsection{Different LLM embeddings hint at different post-processing needs}
\label{subsec:postprocessing}

LLM-based encoders produce high-dimensional embeddings (up to 4096 dimensions for LLaMA-3.1-8B), which require dimensionality reduction before being passed to most tabular learners. The choice 
of reduction technique affects the downstream score: we compare the three variants %
and report the average score across five learners: Ridge, XGBoost, ExtraTrees, TabPFN-2.5, TabICLv2 in \autoref{fig:pca_postprocessing_comparison}.

\paragraph{Sensitivity to dimensionality reduction.} We observe distinct behaviors between model architectures when reducing embeddings to 30 dimensions. \textbf{Encoder-only models tolerate PCA reduction.} MiniLM-L6-v2, E5-base-v2 and BGE-large show only marginal gains under standard scaling or raw-dimension slicing, and Nemotron-1B is essentially flat: for these encoders, default PCA is a reasonable choice. Conversely, \textbf{decoder-only models hint at greater post-processing sensitivity.} LLaMA-3.1-8B, Qwen-3-8B, and OPT-6.7B all improve when default 30-PCA is replaced either by standard scaling prior to PCA or by retaining the first 30 raw dimensions. Standard scaling rescales each dimension to have unit variance before PCA is applied, making every dimension contribute equally. The performance recovery of \texttt{StandScal + PCA} therefore suggests the degradation comes from a few embedding dimensions with large variance dominating the leading principal components. We confirm this empirically: decoder embeddings concentrate most of their variance in a few dimensions, while encoders spread it more evenly (\autoref{tab:gini_appendix}, \autoref{tab:gini_pairwise}). This aligns with a known characteristic of next-token-prediction models: their representations tend to cluster together into a narrow cone \citep{ethayarajh2019contextualcontextualizedwordrepresentations, gao2019representationdegenerationproblemtraining}, which we observe in \autoref{fig:cosine_before}: LLaMA-3.1-8B and Qwen-3-8B have an average cosine similarity of $\approx 0.57$, compared to just 0.25 for MiniLM-L6-v2. %

\paragraph{The importance of reduced-dimension embeddings.}
Our pipelines reduce the embeddings to 30 dimensions, eg by PCA. Previous research indicates that %
larger dimensions offer diminishing returns \citep{grinsztajn2023vectorizingstringentriesdata}. Indeed, given the prevalence of short strings in the benchmark (median $\approx$ 18), high-dimensional representations are %
capture unnecessary complexity. Nevertheless, we vary the pipeline by using 60 and 120 PCA dimensions rather than 30; this yields worse results, increasing the runtime (\autoref{fig:llama_tabpfn_pca30_vs_60_vs_120_delta}). %

\begin{figure}[t!]
    \centering
    \includegraphics[width=\linewidth]{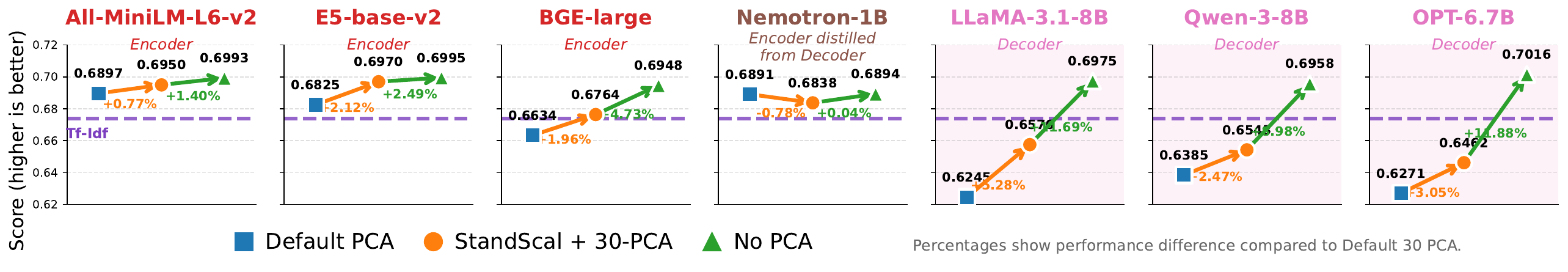}
    \caption{\textbf{Post-processing affects LLM-based embeddings, especially for decoder-only models.} Average score across 108 tables and five learners for 7 LM encoders under 
    three post-processing variants. Each panel is one encoder; 
    bars show the mean score under default 30-PCA (blue), standard 
    scaling before 30-PCA (orange), and direct slicing of the first 
    30 raw embedding dimensions (green). Percentages indicate 
    relative improvement over the default 30-PCA baseline. The dashed purple line marks Tf-Idf. A per-learner breakdown is provided in \autoref{fig:pca_postprocessing_comparison_appendix}.}
    \label{fig:pca_postprocessing_comparison}
\end{figure}

\subsection{Modular pipelines trump today's E2E learners on the Pareto frontier}
\label{subsec:modular_vs_e2e}

\paragraph{Modular pipelines set the ceiling for predictive performance.}
While E2E models are explicitly designed to learn joint representations of heterogeneous data they are outperformed by modular pipelines on our benchmark. In terms of absolute predictive rank (\autoref{fig:cd_diagram_encoder-learner}), modular pipelines using post-processed Large Language Models (e.g., Qwen-3-8B) coupled with advanced tabular learners achieve the highest overall performance, consistently outperforming native E2E architectures like TabSTAR and ContextTab. These E2E models were also shown to be weaker tabular learners than TabPFN on numeric-only tables \citep{erickson2025tabarenalivingbenchmarkmachine}, as also seen in \autoref{fig:strable_motivation_b}.

\paragraph{Lightweight encoders with advanced learners dominate the Pareto frontier.} Pure predictive rankings mask the computational overhead introduced by LLMs. When considering runtime, post-processed LLMs fail to dominate the Pareto frontier (\autoref{fig:comparative-pareto_optimality}). 
This gap is explained by the composition of the tables in the benchmark: for half of the tables, categorical strings are the leading string type (\autoref{fig:top10_ranking_by_leading_textfeature_type}). Consequently, lightweight encoders like Tf-Idf capture the necessary signal at a fraction of the computational cost, pushing them to the sweet spot of the Pareto frontier together with learners like TabICLv2 or TabPFN-2.5.
\begin{figure}[t!]
    \includegraphics[width=\textwidth]{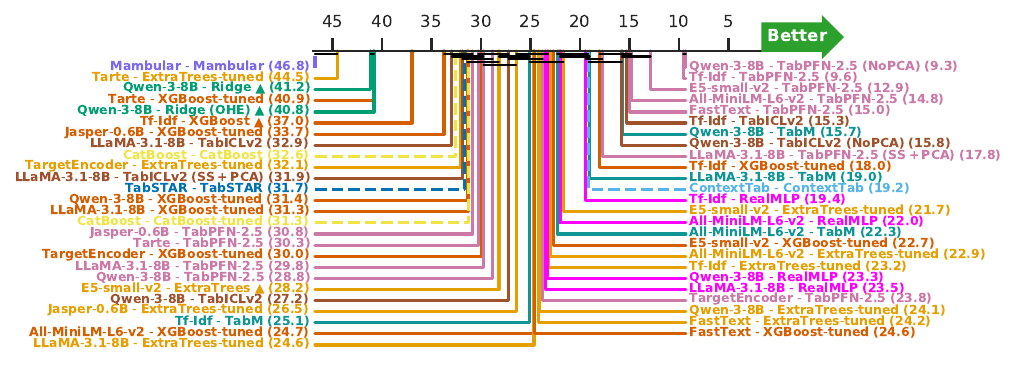}%
    \caption{\textbf{Critical difference diagram for 
    encoder-learner pipelines.} Pipelines' average rank across 
    the 108 datasets are shown in parentheses; lower is 
    better. Dashed lines are E2E, continuous lines are Modular. Pipelines connected by horizontal bars are not statistically distinguishable at the indicated level (test 
    statistic in \Cref{app:stat_tests}). \textbf{Modular pipelines cluster 
    at the top of the ranking.} Pipelines marked with $\blacktriangle$ 
    show only the best-performing encoder for that learner; \Cref{fig:cd_diagram_all} details the full 
    set of combinations. Abbreviations: \textit{SS+PCA} = standard scaling before 30-PCA; \textit{NoPCA} = first 30 raw embedding dimensions; \textit{OHE} = one-hot encoding (cardinality threshold 30).} 
    \label{fig:cd_diagram_encoder-learner}
    \definecolor{ridgegreen}{HTML}{009e73}
    \definecolor{pfnpink}{HTML}{cc79a7}
    \bigskip\medskip

    \begin{minipage}{0.55\textwidth}
        \centering
        \textbf{\sffamily\footnotesize (a) Trade-off between prediction performance and run time.}
        \vspace{0.3em}
        \includegraphics[width=\linewidth]{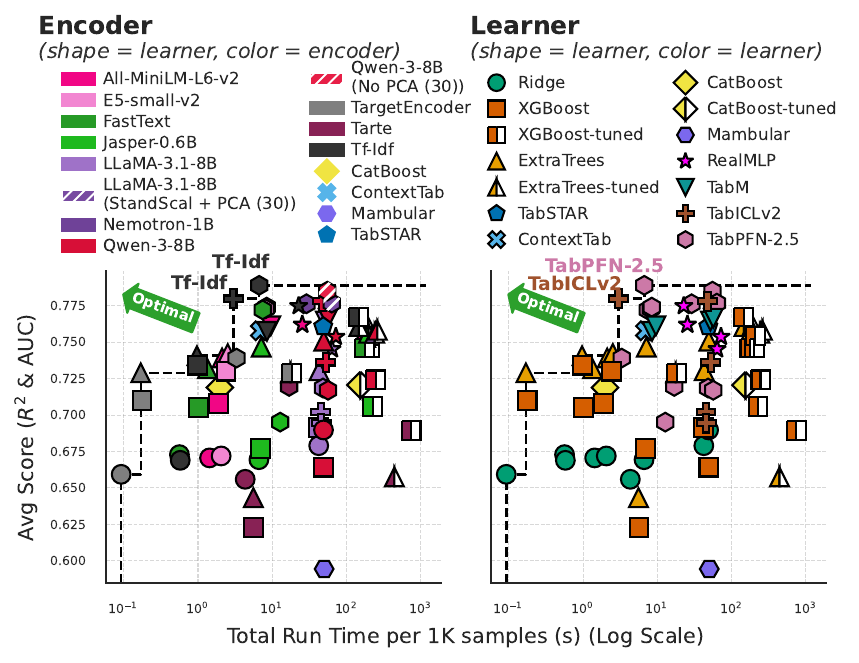}
    \end{minipage}
    \hfill
    \begin{minipage}{0.43\textwidth}
        \centering
        \textbf{\sffamily\footnotesize (b) Convergence of STRABLE benchmark rankings to the oracle.}
        \vspace{0.3em}
        \includegraphics[width=\linewidth]{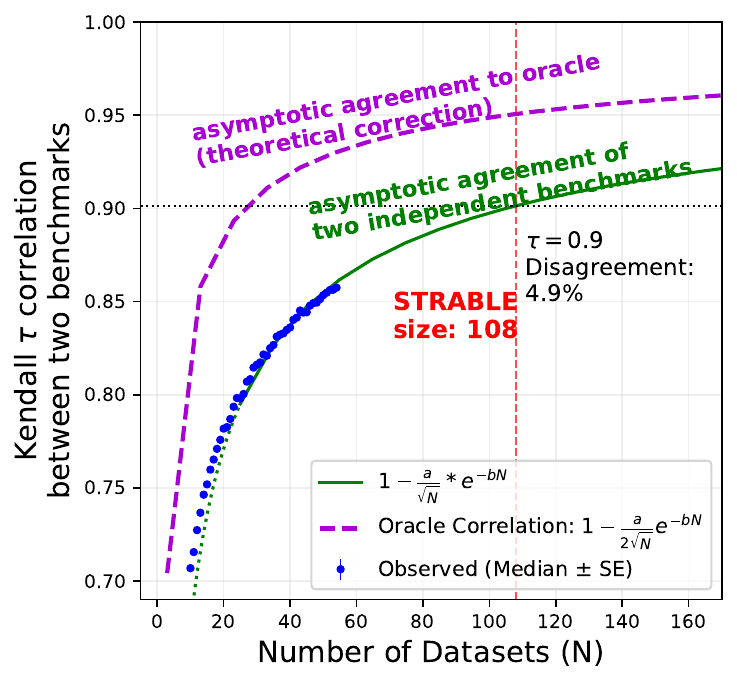}
    \end{minipage}

    \caption{\textbf{Pareto-optimality plot and benchmark ranking stability.} \textbf{(a)} Each point is a pipeline, colored by encoder on the left and by learner on the right. The dotted line is the pareto-optimality frontier. \textbf{Encoders explain much of the runtime}: for a given encoder, performance varies depending on the learner while runtime varies less (aside from tuning or not). \textbf{Simple and advanced learners benefit differently from varying encoders}: for a simple learner such as ridge \textcolor{ridgegreen}{$\mdblkcircle$}, complex encoders improve prediction performance. More sophisticated learners do not benefit from the most complex encoders (as with TabPFN-2.5 $\textcolor{pfnpink}{\hexagonblack}$), unless the encoder is treated with a post-processing method. \textbf{(b)} Blue points: observed Kendall-$\tau$ between disjoint STRABLE subsets (sub-sampling detailed in \autoref{fig:sampling_diagram_kendalltau}). Green curve: fitted asymptotic form. Purple curve: implied convergence to the oracle ranking, derived from the same parameters (\autoref{app:theory}). At $N=108$, the oracle agreement reaches $\tau \approx 0.95$. \textbf{In expectation, a single benchmark is closer to the oracle than to another finite benchmark}, so $\mathbb{E}[\tau]$ between two benchmarks is a \textbf{lower bound} on oracle agreement.}
    \label{fig:combined_pareto_oracle}%
            \label{fig:comparative-pareto_optimality}%
        \label{fig:benchmark_stability_kendalltau}%

\end{figure}

\paragraph{The simplest models benefit more from heavier encoders.} For the simplest learners - linear baselines and ExtraTrees - using sophisticated encoders, such as LLMs, leads to better performance than Tf-Idf. This effect is more visible on average performance (\autoref{fig:comparative-pareto_optimality}a) than on ranks (\autoref{fig:cd_diagram_encoder-learner}), suggesting gains are not consistent across datasets, but are substantial in magnitude when they occur. Among foundation models, TabPFN-2.5 ranks slightly above TabICLv2 — likely an artifact of pretraining rather than the methods themselves (\autoref{tab:pca_col_estim}). %

\section{STRABLE generalizes beyond its specific datasets}
\label{sec:strable_generalizability}

\subsection{Sufficient datasets to approach oracle rankings}
\label{sec:sufficient_datasets}

A benchmark should provide model comparisons that generalize to datasets outside of the benchmark. %
In the following, we quantify the stability of model rankings depending on the number of datasets. To this end, we model our benchmark datasets as sampled from an unknown \emph{population} distribution of datasets. We then want to know how close the model ranking $R_N$ on our benchmark with $N$ datasets is to the oracle ranking $R_\infty$ on the population distribution, or to the model ranking $R_N'$ on a second benchmark with $N$ datasets sampled independently from the population distribution. To model the agreement between rankings, we use the Kendall-$\tau$ correlation \citep{kendall1938new}.
For two model rankings, Kendall-$\tau$ measures the fraction of pairs of models whose order disagrees in the two rankings: %
\begin{equation*}
    \tau_{N, N} := \tau(R_N, R_N') = 1 - 2\frac{\#\text{disagreeing model pairs}}{\#\text{model pairs}} \in [-1, 1]~,
\end{equation*}
and similarly $\tau_{N, \infty} := \tau(R_N, R_\infty)$.
Theory (detailed in \Cref{app:theory}) can relate the agreement to the oracle ranking to agreement between two finite-size benchmarks, $\tau_{N, N}$: For two independent benchmarks $R_N, R_N'$, their deviations from $R_\infty$ are independent. Hence, asymptotically, we expect twice the disagreement: $\bbE[1-\tau_{N, N}] \approx 2\bbE[1-\tau_{N, \infty}]$.

In other words, a comparison of two finite-size benchmarks accumulates variance from both sides, thus represents a "worst-case" scenario. For any given number of datasets, a benchmark is closer to the truth (the oracle) than it is to another finite benchmark, and the agreement between two independent benchmarks gives a lower bound for the convergence of a single benchmark to the oracle ranking. 
 
In practice, we estimate $\bbE[\tau_{N, N}]$ by picking disjoint subsets $D, D'$ of STRABLE and averaging over different choices of subsets. As these subsets can be at most 54 datasets, half the size of STRABLE, we derive the asymptotic dependency on the number of datasets theoretically, fit its parameters on subsamples of different sizes $N \leq 54$, extrapolate it to $N=108$, and then use the theoretical correction to estimate the oracle correlation $\bbE[\tau_{N, \infty}]$ for $N=108$. 
\Cref{fig:benchmark_stability_kendalltau}b shows the empirical estimate and asymptotic fit. 
The fitted theoretical agreement to oracle (purple line) illustrates how a single benchmark converges faster to the oracle ranking than the convergence between two benchmarks.

The estimates on \cref{fig:benchmark_stability_kendalltau}b show that for two benchmarks of size $N=54$, $\bbE[\tau(R, R')] \approx 0.86$, corresponding to 7\% disagreeing model pairs. Extrapolated to $N=108$, we expect $\bbE[\tau(R, R')] \approx 0.9$, corresponding to 5\% disagreeing pairs, whereas for the oracle comparison at $N=108$ we get $\bbE[\tau(R, R')] \approx 0.95$, corresponding to 2.5\% disagreeing pairs. Overall, this shows that the size of STRABLE allows us to extract model rankings that are close to the oracle. 

\subsection{Pipeline rankings are stable across domains and preprocessing choices}
\label{sec:stability_benchmark}

\paragraph{Across application fields}
To assess generalizability beyond specific domains, we apply a Leave-One-Domain-Out procedure: for each of the 8 categories defined in \autoref{tab:dataset_distribution_table1.5}, we measure the Kendall-$\tau$ rank correlation between the model ranking on that category alone and the ranking on the full benchmark (\autoref{fig:combined_stability_a}). To separate genuine domain effects from sampling noise, we construct a size-matched null reference per domain by drawing $B = 200$ random subsets of size $n_D$ from the full benchmark and computing $\tau$ against the full-benchmark ranking. The 95\% confidence interval is shown as a grey band behind each bar; its width reflects the statistical power at each domain size. A bar falling inside its band is indistinguishable from a random sample of the same size, and thus representative of the full benchmark. Only Food ($n=6$, $\tau=0.27$) falls below its null band, indicating that its high-lexical-diversity product reviews disrupt the ranking more than a random size-6 subset would (domain-level meta-features in \autoref{tab:domain_metafeatures_stability}); Education is borderline ($\tau=0.54$). All other domains lie within their bands, supporting the conclusion that STRABLE's pipeline ranking is largely domain-independent.

\begin{figure}[t!]
    \centering

    \begin{subfigure}[t]{0.32\textwidth}
        \centering
        \textbf{\sffamily\footnotesize (a) STRABLE rankings are robust across application fields.}
        \vspace{0.3em}
        \includegraphics[width=\linewidth, height=4cm, keepaspectratio]{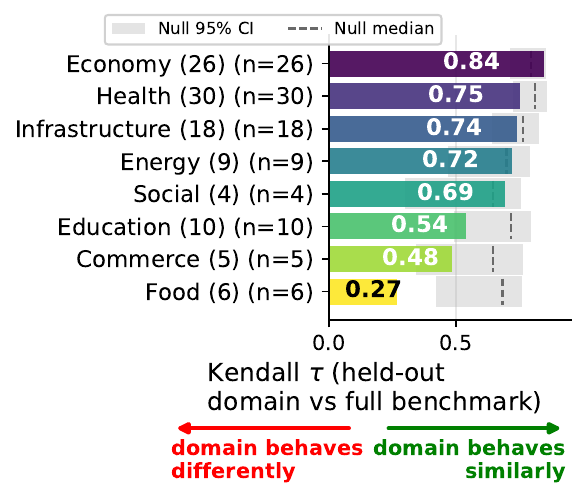}
        \phantomcaption
        \label{fig:combined_stability_a}
    \end{subfigure}\hfill
    \begin{subfigure}[t]{0.32\textwidth}
        \centering
        \textbf{\sffamily\footnotesize (b) Data preparation choices do not alter pipeline rankings.}
        \vspace{0.3em}
        \includegraphics[width=\linewidth, height=4cm, keepaspectratio]{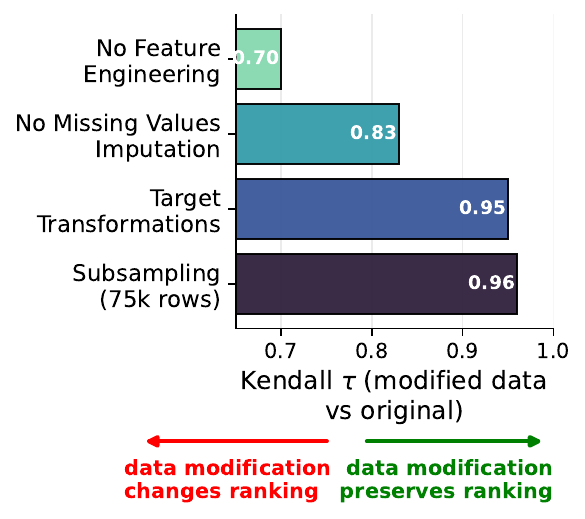}
        \phantomcaption
        \label{fig:combined_stability_b}
    \end{subfigure}\hfill
    \begin{subfigure}[t]{0.32\textwidth}
        \centering
        \textbf{\sffamily\footnotesize (c) Avg words/cell is the main ranking disruptor.}
        \vspace{0.3em}
        \includegraphics[width=\linewidth, height=4cm, keepaspectratio]{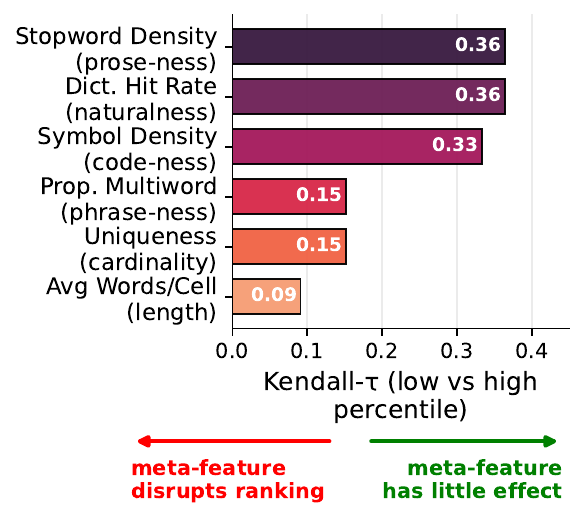}
        \phantomcaption
        \label{fig:combined_stability_c}
    \end{subfigure}

    \caption{\textbf{(a)} Kendall-$\tau$ correlation between application-specific subsets and the full benchmark (numbers in parentheses show the number of tables per application field). \textbf{(b)} Each row reports Kendall-$\tau$ between STRABLE's ranking and the ranking of the opposite data preparation (e.g., applying feature engineering or missing-value imputation, which STRABLE does not; or removing target transformations and subsampling, which STRABLE does). All values exceed $\tau \geq 0.7$. \textbf{(c)} Kendall-$\tau$ between the rankings on the lower and upper 33rd percentile of each string meta-feature; lower $\tau$ indicates stronger disruption.}
    \label{fig:combined_stability}
\end{figure}

\paragraph{Across data preparation choices}
\label{sec:stability}
Beyond domain selection, benchmark conclusions can also be 
sensitive to choices in the data-preparation pipeline. We test four such choices, detailed in Section~\ref{sec:building_the_benchmark}: 
subsampling large tables (default 75K rows vs. full size on the 8 datasets exceeding the cap), manual feature engineering (raw vs. parsing on 44/108 tables of dates, ranges, coordinates, drug strengths, fiscal periods), target transformation for skewed regression targets (skewness-minimizing transform applied to 61 regression tasks vs. raw target), and missing-value handling (per-learner native handling vs. mean/mode imputation). For each choice, we recompute the full pipeline ranking under an alternative policy and measure its Kendall-$\tau$ correlation with the default ranking. \autoref{fig:combined_stability_b} summarizes the result: rankings are highly preserved across all four choices ($\tau \in [0.7, 0.96]$). Per-policy rankings are reported in \autoref{fig:raw_vs_engineered}--\ref{fig:full_vs_sampled_comparison} in the Appendix.

\subsection{Dataset features that drive ranking shifts}
\label{sec:heterogeneity}

Section~\ref{sec:stability_benchmark} established that STRABLE 
rankings are robust to data preprocessing and applications. We now turn the question around: \emph{what} dataset features cause  
rankings to shift? We identify 
these regimes by computing the Kendall-$\tau$ between 
rankings on the upper and lower 33-percentile of six string 
meta-features.

\paragraph{String length is the dominant disruptor of model rankings.} 
We compute six meta-features capturing different facets of 
``string-ness'' (\autoref{sec:profiling_methodology}) and rank them by their ability to disrupt 
pipeline rankings (\autoref{fig:combined_stability_c}). 
Average words per cell is the single biggest disruptor 
($\tau \approx 0.09$), and after it cardinality ($\tau \approx 0.15$): long string cells and cardinality almost entirely overturn 
the ranking of pipelines. Content-type features (stopword 
density, dictionary hit rate, symbol density) form a secondary 
cluster ($\tau \approx 0.33$--$0.36$).

\paragraph{Free text is the regime where the ranking changes.}
On tables whose leading string type is Categorical, Names, or Structured Code
(\autoref{fig:top10_ranking_by_leading_textfeature_type}),
the top-10 pipelines mirror the global ranking:
Tf-Idf and lightweight LM encoders paired with TabPFN-2.5 dominate. The five tables dominated by Free Text tell a different story:
every large LLM (LLaMA-3.1-8B, Qwen-3-8B, Jasper-0.6B) enters the top-10 under default 30-PCA paired with TabPFN-2.5, \textbf{indicating that long text carries
LLM-accessible signal that TabPFN-2.5 can exploit}. Since most STRABLE string columns are categorical, Tf-Idf suffices for them, while free-text columns benefit from LLM encoders. A cardinality threshold (CT=30) operationalizes this split by routing high-cardinality columns to the LLM and low-cardinality ones to OHE or the learner's native handling (\autoref{fig:ridge_xgb_tabpfn_llama_qwen_nemotron_CT30}). Columns routed to the LLM under CT=30 contain 2.6$\times$ more words per cell (\autoref{fig:CT_30_threshold_vs_string_index}), confirming that the threshold isolates the long-text signal LLM encoders are best equipped to handle.

\begin{figure}[t]
    \makebox[\linewidth][c]{\includegraphics[width=\linewidth]{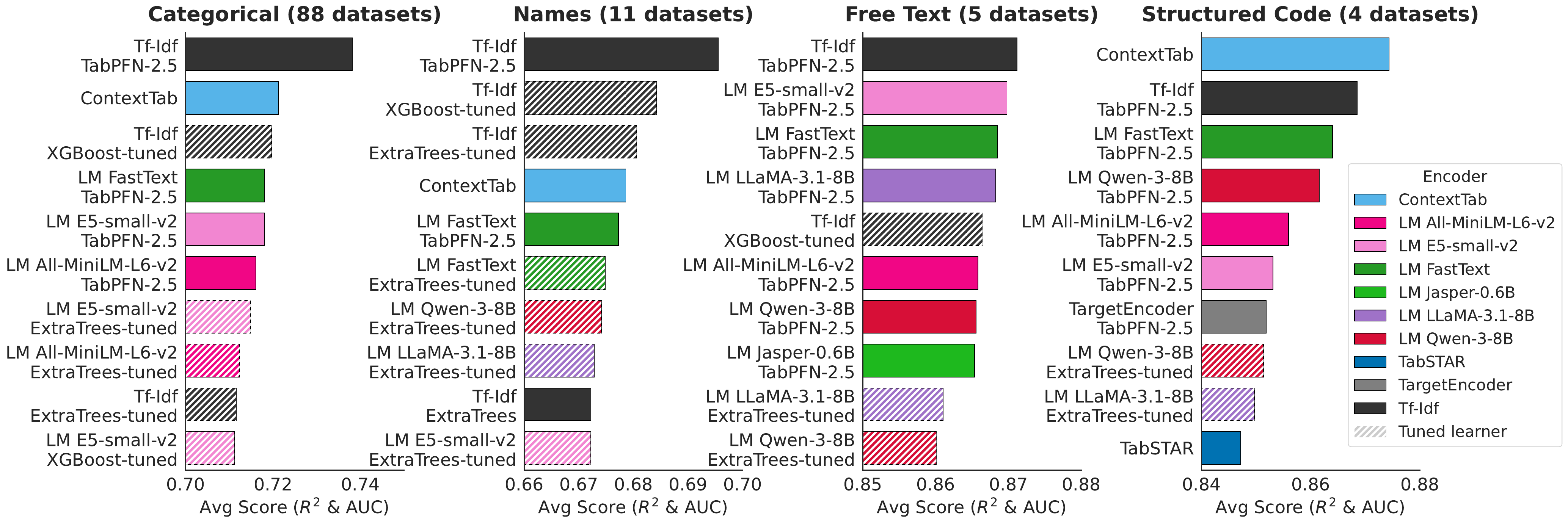}}
    \caption{\textbf{Top-10 pipelines per leading string type.} Datasets are grouped by their most frequent string type. In the Free Text regime large LLMs enter the top-10 paired with TabPFN-2.5; all other types mirror the global ranking (lightweight encoders at the top paired with TabPFN-2.5, and LM encoders paired with light learners like ExtraTrees). ConTextTab leads the Structured Code panel, plausibly aided by code-rich T4 pretraining tables — though this raises a contamination concern (\autoref{sec:contamination_contexttab}). Hatched bars indicate tuned models.}
\label{fig:top10_ranking_by_leading_textfeature_type}
\end{figure}

\section{Discussion and Conclusion}
\label{sec:discussion_and_conclusion}

\paragraph{STRABLE identifies effective pipelines.}
We introduce STRABLE to foster research in tabular learning with 
strings, providing a robust arena of 108 diverse learning 
problems that embraces the heterogeneity of real-world tables 
rather than the numerical purity of prior benchmarks. STRABLE provides a robust and stable benchmarking of learning on string tables: the 
ranking it produces converges toward the oracle ranking 
($\tau \approx 0.95$ at $N=108$) and remains robust to application fields as well as
data preparation choices
(Section~\ref{sec:stability}). Our benchmark yields concrete 
guidelines for practitioners: lightweight string encoders 
combined with sophisticated tabular learners (such as TabPFN-2.5) 
currently dominate the Pareto frontier of performance and runtime, 
outperforming end-to-end architectures designed for 
string-tabular learning. LLMs become performance-competitive 
once their embeddings undergo appropriate post-processing — 
standard scaling or direct slicing of the first $n$ dimensions — to overcome the bottleneck of traditional PCA.

\paragraph{Refining the role of large LLMs for tabular data with strings.}
Prior work has reported that larger language models improve performance on tabular data with strings \citep{grinsztajn2023vectorizingstringentriesdata}. STRABLE refines this picture along three axes. First, sophisticated encoders deliver consistent gains paired with simple baseline learners (Ridge, ExtraTrees), and --- once their embeddings receive appropriate post-processing --- become competitive with lightweight encoders even when paired with state-of-the-art tabular learners (TabPFN-2.5, TabICLv2). Second, STRABLE's strings are short (median 18 characters), which reduces the role of natural language understanding compared to settings with longer free-text fields. Third, decoder-only LLM embeddings are sensitive to default PCA-based dimensionality reduction, which can understate their performance. Together with the heterogeneity findings of Section~\ref{sec:heterogeneity}, these results suggest that the value of encoding with large LLMs is contingent on the string distribution, the learner they are paired with, and the dimensionality reduction strategy applied to their embeddings.

\paragraph{Limitations.}While our work establishes a new standard for this domain, it is not without limitations. STRABLE reflects the string distribution in data-science tables, with short categorical strings rather than long-form text. It thus enables only limited study of sentence-heavy tables. In addition, it does not address time-series specific validation protocols, leaving the exploration of temporal dynamics for future work. 

\paragraph{Looking forward} Adapting generative LLMs to tables requires careful alignment of their embedding geometries with tabular learners. Consequently, this research highlights the need for better dimensionality reduction methods for these embeddings. Through this benchmark we hope to catalyze a new wave of research into hybrid architectures that adapt to the diversity of tables, including leveraging semantic understanding when needed.

\begin{ack}
We thank Celestin Eve for helpful discussions, specifically for the idea of investigating whether disagreeing model pairs involved the top-ranked model, which led to the proof of the Disagreement at Position 1. The authors acknowledge support in part by the French Agence Nationale de la Recherche under the TaFoMo project and the Hi! PARIS research chair. This work was performed using HPC resources from GENCI–IDRIS (Grant 2025-AD011017153).
L.P. acknowledges funding by the Deutsche Forschungsgemeinschaft (DFG, German Research Foundation) under SFB 1597 (SmallData), grant number 499552394. F.H. acknowledges the financial support of the Hector Foundation. AA, ES, and RR are supported by an Israel Ministry of Science and Technology (MOST) grant on multi-modal AI. ES is supported by a Google PhD Fellowship.
\end{ack}

\bibliographystyle{plainnat}
\bibliography{references}

\begin{thebibliography}{68}
\providecommand{\natexlab}[1]{#1}
\providecommand{\url}[1]{\texttt{#1}}
\expandafter\ifx\csname urlstyle\endcsname\relax
  \providecommand{\doi}[1]{doi: #1}\else
  \providecommand{\doi}{doi: \begingroup \urlstyle{rm}\Url}\fi

\bibitem[Arazi et~al.(2025)Arazi, Shapira, and Reichart]{arazi_tabstar_2025}
Alan Arazi, Eilam Shapira, and Roi Reichart.
\newblock {TabSTAR}: {A} {Tabular} {Foundation} {Model} for {Tabular} {Data} with {Text} {Fields}.
\newblock In D.~Belgrave, C.~Zhang, H.~Lin, R.~Pascanu, P.~Koniusz, M.~Ghassemi, and N.~Chen, editors, \emph{Advances in {Neural} {Information} {Processing} {Systems}}, volume~38, pages 172108--172161. Curran Associates, Inc., 2025.
\newblock URL \url{https://proceedings.neurips.cc/paper_files/paper/2025/file/faf6e23e198314c7728eaa6ac44ae079-Paper-Conference.pdf}.

\bibitem[Bischl et~al.(2021)Bischl, Casalicchio, Feurer, Gijsbers, Hutter, Lang, Mantovani, van Rijn, and Vanschoren]{bischl2021openmlbenchmarkingsuites}
Bernd Bischl, Giuseppe Casalicchio, Matthias Feurer, Pieter Gijsbers, Frank Hutter, Michel Lang, Rafael~Gomes Mantovani, Jan~N van Rijn, and Joaquin Vanschoren.
\newblock Openml benchmarking suites.
\newblock In \emph{Proceedings of the NeurIPS 2021 Datasets and Benchmarks Track}, 2021.

\bibitem[Bischl et~al.(2025)Bischl, Casalicchio, Das, Feurer, Fischer, Gijsbers, Mukherjee, M{\"u}ller, N{\'e}meth, Oala, et~al.]{bischl2025openml}
Bernd Bischl, Giuseppe Casalicchio, Taniya Das, Matthias Feurer, Sebastian Fischer, Pieter Gijsbers, Subhaditya Mukherjee, Andreas~C M{\"u}ller, L{\'a}szl{\'o} N{\'e}meth, Luis Oala, et~al.
\newblock Openml: Insights from 10 years and more than a thousand papers.
\newblock \emph{Patterns}, 2025.

\bibitem[Cerda and Varoquaux(2020)]{cerda2020encoding}
Patricio Cerda and Ga{\"e}l Varoquaux.
\newblock Encoding high-cardinality string categorical variables.
\newblock \emph{IEEE Transactions on Knowledge and Data Engineering}, 34\penalty0 (3):\penalty0 1164--1176, 2020.

\bibitem[Chen and Guestrin(2016)]{Chen_2016}
Tianqi Chen and Carlos Guestrin.
\newblock Xgboost: A scalable tree boosting system.
\newblock In \emph{Proceedings of the 22nd ACM SIGKDD International Conference on Knowledge Discovery and Data Mining}, KDD ’16, page 785–794. ACM, August 2016.
\newblock \doi{10.1145/2939672.2939785}.

\bibitem[Conover and Iman(1979)]{conover1979multiple}
William~J Conover and Ronald~L Iman.
\newblock On multiple comparisons procedures.
\newblock \emph{Technical Report LA-7677-MS, Los Alamos Scientific Laboratory}, 1979.

\bibitem[Datanami(2020)]{datanami2020dataprep}
Datanami.
\newblock Data prep still dominates data scientists' time, survey finds, 2020.
\newblock URL \url{https://www.datanami.com/2020/07/06/data-prep-still-dominates-data-scientists-time-survey-finds/}.

\bibitem[Deng et~al.(2009)Deng, Dong, Socher, Li, Li, and Fei-Fei]{5206848}
Jia Deng, Wei Dong, Richard Socher, Li-Jia Li, Kai Li, and Li~Fei-Fei.
\newblock Imagenet: A large-scale hierarchical image database.
\newblock In \emph{2009 IEEE Conference on Computer Vision and Pattern Recognition}, pages 248--255, 2009.
\newblock \doi{10.1109/CVPR.2009.5206848}.

\bibitem[Dorfman(1979)]{RePEc:tpr:restat:v:61:y:1979:i:1:p:146-49}
Robert Dorfman.
\newblock A formula for the gini coefficient.
\newblock \emph{The Review of Economics and Statistics}, 61\penalty0 (1):\penalty0 146--49, 1979.
\newblock URL \url{https://EconPapers.repec.org/RePEc:tpr:restat:v:61:y:1979:i:1:p:146-49}.

\bibitem[Dubey et~al.(2024)Dubey, Jauhri, Pandey, Kadian, Al-Dahle, Letman, Mathur, Schelten, Yang, Fan, et~al.]{grattafiori2024llama3herdmodels}
Abhimanyu Dubey, Abhinav Jauhri, Abhinav Pandey, Abhishek Kadian, Ahmad Al-Dahle, Aiesha Letman, Akhil Mathur, Alan Schelten, Amy Yang, Angela Fan, et~al.
\newblock The llama 3 herd of models.
\newblock \emph{arXiv e-prints}, pages arXiv--2407, 2024.

\bibitem[Enevoldsen et~al.(2025)Enevoldsen, Chung, Kerboua, Kardos, Mathur, Stap, Gala, Siblini, Krzemiński, Winata, Sturua, Utpala, Ciancone, Schaeffer, Sequeira, Misra, Dhakal, Rystrøm, Solomatin, Ömer Çağatan, Kundu, Bernstorff, Xiao, Sukhlecha, Pahwa, Poświata, GV, Ashraf, Auras, Plüster, Harries, Magne, Mohr, Hendriksen, Zhu, Gisserot-Boukhlef, Aarsen, Kostkan, Wojtasik, Lee, Šuppa, Zhang, Rocca, Hamdy, Michail, Yang, Faysse, Vatolin, Thakur, Dey, Vasani, Chitale, Tedeschi, Tai, Snegirev, Günther, Xia, Shi, Lù, Clive, Krishnakumar, Maksimova, Wehrli, Tikhonova, Panchal, Abramov, Ostendorff, Liu, Clematide, Miranda, Fenogenova, Song, Safi, Li, Borghini, Cassano, Su, Lin, Yen, Hansen, Hooker, Xiao, Adlakha, Weller, Reddy, and Muennighoff]{enevoldsen2025mmtebmassivemultilingualtext}
Kenneth Enevoldsen, Isaac Chung, Imene Kerboua, Márton Kardos, Ashwin Mathur, David Stap, Jay Gala, Wissam Siblini, Dominik Krzemiński, Genta~Indra Winata, Saba Sturua, Saiteja Utpala, Mathieu Ciancone, Marion Schaeffer, Gabriel Sequeira, Diganta Misra, Shreeya Dhakal, Jonathan Rystrøm, Roman Solomatin, Ömer Çağatan, Akash Kundu, Martin Bernstorff, Shitao Xiao, Akshita Sukhlecha, Bhavish Pahwa, Rafał Poświata, Kranthi~Kiran GV, Shawon Ashraf, Daniel Auras, Björn Plüster, Jan~Philipp Harries, Loïc Magne, Isabelle Mohr, Mariya Hendriksen, Dawei Zhu, Hippolyte Gisserot-Boukhlef, Tom Aarsen, Jan Kostkan, Konrad Wojtasik, Taemin Lee, Marek Šuppa, Crystina Zhang, Roberta Rocca, Mohammed Hamdy, Andrianos Michail, John Yang, Manuel Faysse, Aleksei Vatolin, Nandan Thakur, Manan Dey, Dipam Vasani, Pranjal Chitale, Simone Tedeschi, Nguyen Tai, Artem Snegirev, Michael Günther, Mengzhou Xia, Weijia Shi, Xing~Han Lù, Jordan Clive, Gayatri Krishnakumar, Anna Maksimova, Silvan Wehrli, Maria Tikhonova, Henil
  Panchal, Aleksandr Abramov, Malte Ostendorff, Zheng Liu, Simon Clematide, Lester~James Miranda, Alena Fenogenova, Guangyu Song, Ruqiya~Bin Safi, Wen-Ding Li, Alessia Borghini, Federico Cassano, Hongjin Su, Jimmy Lin, Howard Yen, Lasse Hansen, Sara Hooker, Chenghao Xiao, Vaibhav Adlakha, Orion Weller, Siva Reddy, and Niklas Muennighoff.
\newblock Mmteb: Massive multilingual text embedding benchmark.
\newblock \emph{arXiv preprint arXiv:2502.13595}, 2025.
\newblock \doi{10.48550/arXiv.2502.13595}.
\newblock URL \url{https://arxiv.org/abs/2502.13595}.

\bibitem[Erickson et~al.(2025)Erickson, Purucker, Tschalzev, Holzm{\"u}ller, Desai, Salinas, and Hutter]{erickson2025tabarenalivingbenchmarkmachine}
Nick Erickson, Lennart Purucker, Andrej Tschalzev, David Holzm{\"u}ller, Prateek~Mutalik Desai, David Salinas, and Frank Hutter.
\newblock Tabarena: A living benchmark for machine learning on tabular data.
\newblock \emph{Advances in Neural Information Processing Systems}, 39, 2025.

\bibitem[Ethayarajh(2019)]{ethayarajh2019contextualcontextualizedwordrepresentations}
Kawin Ethayarajh.
\newblock How contextual are contextualized word representations? comparing the geometry of bert, elmo, and gpt-2 embeddings, 2019.
\newblock URL \url{https://arxiv.org/abs/1909.00512}.

\bibitem[Friedman(1940)]{10.1214/aoms/1177731944}
Milton Friedman.
\newblock {A Comparison of Alternative Tests of Significance for the Problem of $m$ Rankings}.
\newblock \emph{The Annals of Mathematical Statistics}, 11\penalty0 (1):\penalty0 86 -- 92, 1940.
\newblock \doi{10.1214/aoms/1177731944}.

\bibitem[Gao et~al.(2019)Gao, He, Tan, Qin, Wang, and Liu]{gao2019representationdegenerationproblemtraining}
Jun Gao, Di~He, Xu~Tan, Tao Qin, Liwei Wang, and Tie-Yan Liu.
\newblock Representation degeneration problem in training natural language generation models, 2019.
\newblock URL \url{https://arxiv.org/abs/1907.12009}.

\bibitem[Gardner et~al.(2024)Gardner, Perdomo, and Schmidt]{gardner2024large}
Josh Gardner, Juan~C Perdomo, and Ludwig Schmidt.
\newblock Large scale transfer learning for tabular data via language modeling.
\newblock \emph{Advances in Neural Information Processing Systems}, 37:\penalty0 45155--45205, 2024.

\bibitem[Geurts et~al.(2006)Geurts, Ernst, and Wehenkel]{geurts2006extremely}
Pierre Geurts, Damien Ernst, and Louis Wehenkel.
\newblock Extremely randomized trees.
\newblock \emph{Machine learning}, 63\penalty0 (1):\penalty0 3--42, 2006.

\bibitem[Gijsbers et~al.(2024)Gijsbers, Bueno, Coors, LeDell, Poirier, Thomas, Bischl, and Vanschoren]{gijsbers2023amlbautomlbenchmark}
Pieter Gijsbers, Marcos~LP Bueno, Stefan Coors, Erin LeDell, S{\'e}bastien Poirier, Janek Thomas, Bernd Bischl, and Joaquin Vanschoren.
\newblock Amlb: an automl benchmark.
\newblock \emph{Journal of Machine Learning Research}, 25\penalty0 (101):\penalty0 1--65, 2024.

\bibitem[Gorishniy et~al.(2025)Gorishniy, Kotelnikov, and Babenko]{gorishniy2025tabm}
Yury Gorishniy, Akim Kotelnikov, and Artem Babenko.
\newblock Tabm: Advancing tabular deep learning with parameter-efficient ensembling.
\newblock In \emph{The Thirteenth International Conference on Learning Representations}, 2025.

\bibitem[Gorla and Puduppully(2026)]{gorla2026illusiongeneralizationreexaminingtabular}
Aditya Gorla and Ratish Puduppully.
\newblock The illusion of generalization: Re-examining tabular language model evaluation, 2026.
\newblock URL \url{https://arxiv.org/abs/2602.04031}.

\bibitem[Grinsztajn et~al.(2022)Grinsztajn, Oyallon, and Varoquaux]{grinsztajn2022tree}
L{\'e}o Grinsztajn, Edouard Oyallon, and Ga{\"e}l Varoquaux.
\newblock Why do tree-based models still outperform deep learning on typical tabular data?
\newblock \emph{Advances in neural information processing systems}, 35:\penalty0 507--520, 2022.

\bibitem[Grinsztajn et~al.(2023)Grinsztajn, Oyallon, Kim, and Varoquaux]{grinsztajn2023vectorizingstringentriesdata}
Léo Grinsztajn, Edouard Oyallon, Myung~Jun Kim, and Gaël Varoquaux.
\newblock Vectorizing string entries for data processing on tables: when are larger language models better?, 2023.
\newblock URL \url{https://arxiv.org/abs/2312.09634}.

\bibitem[Grinsztajn et~al.(2025)Grinsztajn, Flöge, Key, Birkel, Jund, Roof, Jäger, Safaric, Alessi, Hayler, Manium, Yu, Jablonski, Hoo, Garg, Robertson, Bühler, Moroshan, Purucker, Cornu, Wehrhahn, Bonetto, Schölkopf, Gambhir, Hollmann, and Hutter]{grinsztajn2025tabpfn25advancingstateart}
Léo Grinsztajn, Klemens Flöge, Oscar Key, Felix Birkel, Philipp Jund, Brendan Roof, Benjamin Jäger, Dominik Safaric, Simone Alessi, Adrian Hayler, Mihir Manium, Rosen Yu, Felix Jablonski, Shi~Bin Hoo, Anurag Garg, Jake Robertson, Magnus Bühler, Vladyslav Moroshan, Lennart Purucker, Clara Cornu, Lilly~Charlotte Wehrhahn, Alessandro Bonetto, Bernhard Schölkopf, Sauraj Gambhir, Noah Hollmann, and Frank Hutter.
\newblock Tabpfn-2.5: Advancing the state of the art in tabular foundation models, 2025.
\newblock URL \url{https://arxiv.org/abs/2511.08667}.

\bibitem[Hardt(2025)]{hardt2025emerging}
Moritz Hardt.
\newblock The emerging science of machine learning benchmarks.
\newblock Online at \url{https://mlbenchmarks.org}, 2025.
\newblock Manuscript.

\bibitem[Haynes(2013)]{Haynes2013}
Winston Haynes.
\newblock \emph{Holm's Method}, pages 902--902.
\newblock Springer New York, New York, NY, 2013.
\newblock ISBN 978-1-4419-9863-7.
\newblock \doi{10.1007/978-1-4419-9863-7_1214}.
\newblock URL \url{https://doi.org/10.1007/978-1-4419-9863-7_1214}.

\bibitem[Hoerl and Kennard(1970)]{a92f3c16-7c6e-31d3-b403-82d2b0a469e4}
Arthur~E. Hoerl and Robert~W. Kennard.
\newblock Ridge regression: Biased estimation for nonorthogonal problems.
\newblock \emph{Technometrics}, 12\penalty0 (1):\penalty0 55--67, 1970.
\newblock ISSN 00401706.
\newblock URL \url{http://www.jstor.org/stable/1267351}.

\bibitem[Hollmann et~al.(2025)Hollmann, M{\"u}ller, Purucker, Krishnakumar, K{\"o}rfer, Hoo, Schirrmeister, and Hutter]{hollmann2025accurate}
Noah Hollmann, Samuel M{\"u}ller, Lennart Purucker, Arjun Krishnakumar, Max K{\"o}rfer, Shi~Bin Hoo, Robin~Tibor Schirrmeister, and Frank Hutter.
\newblock Accurate predictions on small data with a tabular foundation model.
\newblock \emph{Nature}, 637\penalty0 (8045):\penalty0 319--326, 2025.

\bibitem[Holzm{\"u}ller et~al.(2024)Holzm{\"u}ller, Grinsztajn, and Steinwart]{holzmuller2024better}
David Holzm{\"u}ller, L{\'e}o Grinsztajn, and Ingo Steinwart.
\newblock Better by default: Strong pre-tuned mlps and boosted trees on tabular data.
\newblock \emph{Advances in Neural Information Processing Systems}, 37:\penalty0 26577--26658, 2024.

\bibitem[Jolliffe and Cadima(2016)]{10.1098/rsta.2015.0202}
Ian~T. Jolliffe and Jorge Cadima.
\newblock Principal component analysis: a review and recent developments.
\newblock \emph{Philosophical Transactions of the Royal Society A: Mathematical, Physical and Engineering Sciences}, 374\penalty0 (2065):\penalty0 20150202, 04 2016.
\newblock ISSN 1364-503X.
\newblock \doi{10.1098/rsta.2015.0202}.
\newblock URL \url{https://doi.org/10.1098/rsta.2015.0202}.

\bibitem[Kendall(1938)]{kendall1938new}
Maurice~G Kendall.
\newblock A new measure of rank correlation.
\newblock \emph{Biometrika}, 30\penalty0 (1-2):\penalty0 81--93, 1938.

\bibitem[Kim et~al.(2024)Kim, Grinsztajn, and Varoquaux]{kim2024cartepretrainingtransfertabular}
Myung~Jun Kim, Léo Grinsztajn, and Gaël Varoquaux.
\newblock Carte: Pretraining and transfer for tabular learning.
\newblock \emph{ICML}, 2024.

\bibitem[Kim et~al.(2025)Kim, Lefebvre, Brison, Perez-Lebel, and Varoquaux]{kim2025table}
Myung~Jun Kim, F{\'e}lix Lefebvre, Ga{\"e}tan Brison, Alexandre Perez-Lebel, and Ga{\"e}l Varoquaux.
\newblock Table foundation models: on knowledge pre-training for tabular learning.
\newblock \emph{TMLR}, 2025.

\bibitem[Knauer et~al.(2024)Knauer, Grimm, and Rodner]{knauer2024pmlbminitabularclassificationbenchmark}
Ricardo Knauer, Marvin Grimm, and Erik Rodner.
\newblock Pmlbmini: A tabular classification benchmark suite for data-scarce applications.
\newblock In \emph{AutoML Conference 2024 (ABCD Track)}, 2024.

\bibitem[Kuhn and Johnson(2013)]{Kuhn_13}
Max Kuhn and Kjell Johnson.
\newblock \emph{Applied Predictive Modeling}.
\newblock Springer, 2013.
\newblock ISBN 978-1-4614-6848-6.

\bibitem[Kusupati et~al.(2024)Kusupati, Bhatt, Rege, Wallingford, Sinha, Ramanujan, Howard-Snyder, Chen, Kakade, Jain, and Farhadi]{kusupati2024matryoshkarepresentationlearning}
Aditya Kusupati, Gantavya Bhatt, Aniket Rege, Matthew Wallingford, Aditya Sinha, Vivek Ramanujan, William Howard-Snyder, Kaifeng Chen, Sham Kakade, Prateek Jain, and Ali Farhadi.
\newblock Matryoshka representation learning, 2024.
\newblock URL \url{https://arxiv.org/abs/2205.13147}.

\bibitem[Lecun et~al.(1998)Lecun, Bottou, Bengio, and Haffner]{726791}
Y.~Lecun, L.~Bottou, Y.~Bengio, and P.~Haffner.
\newblock Gradient-based learning applied to document recognition.
\newblock \emph{Proceedings of the IEEE}, 86\penalty0 (11):\penalty0 2278--2324, 1998.
\newblock \doi{10.1109/5.726791}.

\bibitem[McElfresh et~al.(2023)McElfresh, Khandagale, Valverde, Prasad~C, Ramakrishnan, Goldblum, and White]{mcelfresh2024neuralnetsoutperformboosted}
Duncan McElfresh, Sujay Khandagale, Jonathan Valverde, Vishak Prasad~C, Ganesh Ramakrishnan, Micah Goldblum, and Colin White.
\newblock When do neural nets outperform boosted trees on tabular data?
\newblock \emph{Advances in Neural Information Processing Systems}, 36:\penalty0 76336--76369, 2023.

\bibitem[Micci-Barreca(2001)]{micci2001preprocessing}
Daniele Micci-Barreca.
\newblock A preprocessing scheme for high-cardinality categorical attributes in classification and prediction problems.
\newblock \emph{ACM SIGKDD explorations newsletter}, 3\penalty0 (1):\penalty0 27--32, 2001.

\bibitem[Mikolov et~al.(2018)Mikolov, Grave, Bojanowski, Puhrsch, and Joulin]{mikolov2017advancespretrainingdistributedword}
Tomas Mikolov, Edouard Grave, Piotr Bojanowski, Christian Puhrsch, and Armand Joulin.
\newblock Advances in pre-training distributed word representations.
\newblock In \emph{Proceedings of the Eleventh International Conference on Language Resources and Evaluation ({LREC} 2018)}, Miyazaki, Japan, May 2018. European Language Resources Association (ELRA).

\bibitem[Mráz et~al.(2025)Mráz, Das, Gupta, Purucker, and Hutter]{mraz2025benchmarkingfoundationmodelstabular}
Martin Mráz, Breenda Das, Anshul Gupta, Lennart Purucker, and Frank Hutter.
\newblock Towards benchmarking foundation models for tabular data with text, 2025.
\newblock URL \url{https://arxiv.org/abs/2507.07829}.

\bibitem[Muennighoff et~al.(2023)Muennighoff, Tazi, Magne, and Reimers]{muennighoff2023mtebmassivetextembedding}
Niklas Muennighoff, Nouamane Tazi, Loic Magne, and Nils Reimers.
\newblock {MTEB}: Massive text embedding benchmark.
\newblock In \emph{Proceedings of the 17th Conference of the European Chapter of the Association for Computational Linguistics}, pages 2014--2037, Dubrovnik, Croatia, May 2023. Association for Computational Linguistics.

\bibitem[Müller et~al.(2024)Müller, Hollmann, Arango, Grabocka, and Hutter]{müller2024transformersbayesianinference}
Samuel Müller, Noah Hollmann, Sebastian~Pineda Arango, Josif Grabocka, and Frank Hutter.
\newblock Transformers can do bayesian inference, 2024.
\newblock URL \url{https://arxiv.org/abs/2112.10510}.

\bibitem[Olson et~al.(2017)Olson, La~Cava, Orzechowski, Urbanowicz, and Moore]{romano2021pmlbv10opensource}
Randal~S. Olson, William La~Cava, Patryk Orzechowski, Ryan~J. Urbanowicz, and Jason~H. Moore.
\newblock Pmlb: a large benchmark suite for machine learning evaluation and comparison.
\newblock \emph{BioData Mining}, 10\penalty0 (1):\penalty0 36, Dec 2017.
\newblock ISSN 1756-0381.
\newblock \doi{10.1186/s13040-017-0154-4}.

\bibitem[Pedregosa et~al.(2011)Pedregosa, Varoquaux, Gramfort, Michel, Thirion, Grisel, Blondel, Prettenhofer, Weiss, Dubourg, et~al.]{pedregosa2011scikit}
Fabian Pedregosa, Ga{\"e}l Varoquaux, Alexandre Gramfort, Vincent Michel, Bertrand Thirion, Olivier Grisel, Mathieu Blondel, Peter Prettenhofer, Ron Weiss, Vincent Dubourg, et~al.
\newblock Scikit-learn: Machine learning in python.
\newblock \emph{the Journal of machine Learning research}, 12:\penalty0 2825--2830, 2011.

\bibitem[Prokhorenkova et~al.(2018)Prokhorenkova, Gusev, Vorobev, Dorogush, and Gulin]{prokhorenkova2019catboostunbiasedboostingcategorical}
Liudmila Prokhorenkova, Gleb Gusev, Aleksandr Vorobev, Anna~Veronika Dorogush, and Andrey Gulin.
\newblock Catboost: unbiased boosting with categorical features.
\newblock \emph{Advances in neural information processing systems}, 31, 2018.

\bibitem[Qu et~al.(2025)Qu, Holzm{\"u}ller, Varoquaux, and Le~Morvan]{qu2025tabicltabularfoundationmodel}
Jingang Qu, David Holzm{\"u}ller, Ga{\"e}l Varoquaux, and Marine Le~Morvan.
\newblock Tabicl: A tabular foundation model for in-context learning on large data.
\newblock In \emph{Forty-second International Conference on Machine Learning}, 2025.

\bibitem[Qu et~al.(2026)Qu, Holzmüller, Varoquaux, and Morvan]{qu2026tabiclv2betterfasterscalable}
Jingang Qu, David Holzmüller, Gaël Varoquaux, and Marine~Le Morvan.
\newblock Tabiclv2: A better, faster, scalable, and open tabular foundation model, 2026.
\newblock URL \url{https://arxiv.org/abs/2602.11139}.

\bibitem[Recht et~al.(2019)Recht, Roelofs, Schmidt, and Shankar]{recht2019imagenet}
Benjamin Recht, Rebecca Roelofs, Ludwig Schmidt, and Vaishaal Shankar.
\newblock Do imagenet classifiers generalize to imagenet?
\newblock In \emph{International conference on machine learning}, pages 5389--5400. PMLR, 2019.

\bibitem[Reimers and Gurevych(2019)]{reimers-2019-sentence-bert}
Nils Reimers and Iryna Gurevych.
\newblock Sentence-bert: Sentence embeddings using siamese bert-networks.
\newblock In \emph{Proceedings of the 2019 Conference on Empirical Methods in Natural Language Processing}. Association for Computational Linguistics, 11 2019.

\bibitem[Roelofs et~al.(2019)Roelofs, Shankar, Recht, Fridovich-Keil, Hardt, Miller, and Schmidt]{roelofs2019meta}
Rebecca Roelofs, Vaishaal Shankar, Benjamin Recht, Sara Fridovich-Keil, Moritz Hardt, John Miller, and Ludwig Schmidt.
\newblock A meta-analysis of overfitting in machine learning.
\newblock \emph{Advances in neural information processing systems}, 32, 2019.

\bibitem[Rubachev et~al.(2024)Rubachev, Kartashev, Gorishniy, and Babenko]{rubachev2024tabredanalyzingpitfallsfilling}
Ivan Rubachev, Nikolay Kartashev, Yury Gorishniy, and Artem Babenko.
\newblock Tabred: Analyzing pitfalls and filling the gaps in tabular deep learning benchmarks.
\newblock In \emph{The Thirteenth International Conference on Learning Representations}, 2024.

\bibitem[Salinas and Erickson(2024)]{salinas2024tabrepolargescalerepository}
David Salinas and Nick Erickson.
\newblock Tabrepo: A large scale repository of tabular model evaluations and its automl applications.
\newblock In \emph{AutoML Conference 2024 (ABCD Track)}, 2024.

\bibitem[{scikit-learn developers}(2026)]{sklearn_feature_scaling}
{scikit-learn developers}.
\newblock Importance of feature scaling.
\newblock \url{https://scikit-learn.org/stable/auto_examples/preprocessing/plot_scaling_importance.html}, 2026.
\newblock scikit-learn documentation, accessed April 2026.

\bibitem[Shi et~al.(2021)Shi, Mueller, Erickson, Li, and Smola]{shi2021benchmarkingmultimodalautomltabular}
Xingjian Shi, Jonas Mueller, Nick Erickson, Mu~Li, and Alexander~J. Smola.
\newblock Benchmarking multimodal automl for tabular data with text fields, 2021.
\newblock URL \url{https://arxiv.org/abs/2111.02705}.

\bibitem[Skrub(2026)]{skrub2026}
Skrub.
\newblock Skrub software.
\newblock \url{https://skrub-data.org}, 2026.

\bibitem[Spinaci et~al.(2025)Spinaci, Polewczyk, Schambach, and Thelin]{spinaci2025contexttabsemanticsawaretabularincontext}
Marco Spinaci, Marek Polewczyk, Maximilian Schambach, and Sam Thelin.
\newblock Contexttab: A semantics-aware tabular in-context learner.
\newblock \emph{Advances in Neural Information Processing Systems}, 39, 2025.

\bibitem[Stonebraker and Rezig(2019)]{stonebraker2019machine}
Michael Stonebraker and El~Kindi Rezig.
\newblock Machine learning and big data: What is important?
\newblock \emph{IEEE Data Eng. Bull.}, 42\penalty0 (4):\penalty0 3--7, 2019.

\bibitem[Thielmann et~al.(2025)Thielmann, Kumar, Weisser, Reuter, Säfken, and Samiee]{thielmann2025mambularsequentialmodeltabular}
Anton~Frederik Thielmann, Manish Kumar, Christoph Weisser, Arik Reuter, Benjamin Säfken, and Soheila Samiee.
\newblock Mambular: A sequential model for tabular deep learning, 2025.
\newblock URL \url{https://arxiv.org/abs/2408.06291}.

\bibitem[Vanschoren et~al.(2014)Vanschoren, Van~Rijn, Bischl, and Torgo]{vanschoren2014openml}
Joaquin Vanschoren, Jan~N Van~Rijn, Bernd Bischl, and Luis Torgo.
\newblock Openml: networked science in machine learning.
\newblock \emph{ACM SIGKDD Explorations Newsletter}, 15\penalty0 (2):\penalty0 49--60, 2014.

\bibitem[Vershynin(2018)]{vershynin2018}
Roman Vershynin.
\newblock \emph{High-Dimensional Probability}.
\newblock Cambridge University Press, 2018.

\bibitem[Vogel et~al.(2026)Vogel, Srinivas, D'Souza, Shirai, Hassanzadeh, and Samulowitz]{vogel2026universaltabularembeddingsbenchmark}
Liane Vogel, Kavitha Srinivas, Niharika D'Souza, Sola Shirai, Oktie Hassanzadeh, and Horst Samulowitz.
\newblock Towards universal tabular embeddings: A benchmark across data tasks, 2026.
\newblock URL \url{https://arxiv.org/abs/2604.21696}.

\bibitem[Wang et~al.(2022)Wang, Yang, Huang, Jiao, Yang, Jiang, Majumder, and Wei]{wang2022text}
Liang Wang, Nan Yang, Xiaolong Huang, Binxing Jiao, Linjun Yang, Daxin Jiang, Rangan Majumder, and Furu Wei.
\newblock Text embeddings by weakly-supervised contrastive pre-training.
\newblock \emph{arXiv preprint arXiv:2212.03533}, 2022.

\bibitem[Wilcoxon(1945)]{c4091bd3-d888-3152-8886-c284bf66a93a}
Frank Wilcoxon.
\newblock Individual comparisons by ranking methods.
\newblock \emph{Biometrics Bulletin}, 1\penalty0 (6):\penalty0 80--83, 1945.
\newblock ISSN 00994987.
\newblock URL \url{http://www.jstor.org/stable/3001968}.

\bibitem[Wolpert and Macready(1997)]{585893}
D.H. Wolpert and W.G. Macready.
\newblock No free lunch theorems for optimization.
\newblock \emph{IEEE Transactions on Evolutionary Computation}, 1\penalty0 (1):\penalty0 67--82, 1997.
\newblock \doi{10.1109/4235.585893}.

\bibitem[Ye et~al.(2025)Ye, Liu, Cai, Zhou, and Zhan]{ye2025closerlookdeeplearning}
Han-Jia Ye, Si-Yang Liu, Hao-Run Cai, Qi-Le Zhou, and De-Chuan Zhan.
\newblock A closer look at deep learning methods on tabular datasets, 2025.
\newblock URL \url{https://arxiv.org/abs/2407.00956}.

\bibitem[Zabërgja et~al.(2025)Zabërgja, Kadra, Frey, and Grabocka]{zabërgja2025tabulardatadeeplearning}
Guri Zabërgja, Arlind Kadra, Christian M.~M. Frey, and Josif Grabocka.
\newblock Tabular data: Is deep learning all you need?, 2025.
\newblock URL \url{https://arxiv.org/abs/2402.03970}.

\bibitem[Zhang et~al.(2025{\natexlab{a}})Zhang, Zeng, Zhou, and Lu]{zhang2025jaspertokencompression600mtechnicalreport}
Dun Zhang, Ziyang Zeng, Yudong Zhou, and Shuyang Lu.
\newblock Jasper-token-compression-600m technical report, 2025{\natexlab{a}}.
\newblock URL \url{https://arxiv.org/abs/2511.14405}.

\bibitem[Zhang et~al.(2025{\natexlab{b}})Zhang, Li, Long, Zhang, Lin, Yang, Xie, Yang, Liu, Lin, Huang, and Zhou]{qwen3embedding}
Yanzhao Zhang, Mingxin Li, Dingkun Long, Xin Zhang, Huan Lin, Baosong Yang, Pengjun Xie, An~Yang, Dayiheng Liu, Junyang Lin, Fei Huang, and Jingren Zhou.
\newblock Qwen3 embedding: Advancing text embedding and reranking through foundation models.
\newblock \emph{arXiv preprint arXiv:2506.05176}, 2025{\natexlab{b}}.

\end{thebibliography}

\newpage
\begin{appendices}
\listofappendices
\counterwithin{figure}{section}
\counterwithin{table}{section}
\counterwithin{equation}{section}
\crefalias{section}{appendix}
\crefalias{subsection}{appendix}

\newpage
\section{Detailed theoretical analysis}\label{app:theory}
\subsection{Problem setting and assumptions}

We consider the problem of evaluating and ranking $k$ machine learning models, indexed by $i \in \{1, \dots, k\}$, over a population of tabular datasets.

\begin{assumption}[Dataset distribution]\label{assum:dataset_distribution}
    There exists an unknown distribution $\mathcal{D}$ of tabular datasets. A benchmark $B = \{d_1, \dots, d_N\}$ consists of $N$ datasets sampled independently and identically distributed (i.i.d.)~from $\mathcal{D}$.
\end{assumption}

\begin{assumption}[Performance decomposition]\label{assum:perf_model}
    Let $X_{i,d}$ denote the observed performance metric (e.g., accuracy or $R^2$ score) of model $i$ on dataset $d$. We assume this performance can be decomposed into mean performance and deviation:
    \begin{align}\label{eq:model_perf}
        X_{i,d} = \mu_i + \epsilon_{i,d},
    \end{align}
    where $\mu_i \defeq \mathbb{E}_{d \sim \mathcal{D}}[X_{i,d}]$ represents the intrinsic expected performance of model $i$ over the population of datasets.
\end{assumption}

To facilitate tractable theoretical analysis, we introduce the following assumption regarding the performance deviations.

\begin{assumption}[Homoskedastic Gaussian noise]\label{assum:gaussian_noise}
    The deviations $\epsilon_{i,d}$ are independent random variables following a Gaussian distribution with mean zero and constant variance $\sigma^2$ across all models and datasets, \emph{i.e.}, $\epsilon_{i,d} \sim \mathcal{N}(0, \sigma^2)$.
\end{assumption}

\begin{remark}[Limitations]
    The formulation in \autoref{eq:model_perf} simplifies the problem by treating performance variations as stochastic noise around a global mean $\mu_i$. This basic model has two main limitations: (1) it implies that deviations are purely specific to the instance $(i,d)$; and (2) it ignores structural factors such as dataset difficulty or specific model-dataset affinities (inductive biases). While restrictive, this allows us to derive closed-form analytical bounds on ranking reliability. We relax \cref{assum:perf_model} to incorporate explicit inductive biases in \autoref{sec:inductive_bias}.
\end{remark}

\subsection{Preliminary definitions and results}

Let $B$ be a benchmark of size $N$. We define the observed average performance of model $i$ on benchmark $B$ as:
\begin{align}
    \bar{X}_i^{(B)} \defeq \frac{1}{N} \sum_{d\in B} X_{i,d}.
\end{align}

We define the observed performance gap between two models $i$ and $j$ on benchmark $B$ as
\begin{align}\label{eq:perf_gap_def}
    Y_{ij}^{(B)} \defeq \bar{X}_i^{(B)} - \bar{X}_j^{(B)}.
\end{align}
Using \autoref{eq:model_perf}, it can be decomposed as:
\begin{align}\label{eq:perf_gap}
    Y_{ij}^{(B)} = \frac{1}{N} \sum_{d\in B} (X_{i,d} - X_{j,d}) = \Delta_{ij} + \bar\eta_{ij}^{(B)},
\end{align}
where $\Delta_{ij} \defeq \mu_i - \mu_j$ denotes the expected performance gap, and $\bar\eta_{ij}^{(B)} \defeq \frac{1}{N} \sum_{d\in B} (\epsilon_{i,d} - \epsilon_{j,d})$.

\begin{lemma}[Expected sign of the performance gap]\label{lem:expectation_sign}
For any pair of models $(i,j)$, the expected sign of the performance gap on a benchmark $B$ of size $N$ is given by
\begin{align}
    \mathbb{E}\left[\mathrm{sgn}\left(Y_{ij}^{(B)}\right)\right] = \mathrm{sgn}(\Delta_{ij}) \, \mathrm{erf}\left( \frac{|\Delta_{ij}|\sqrt{N}}{2\sigma} \right),
\end{align}
where $\mathrm{erf}$ is the error function, defined for all $x\ge0$ as $\mathrm{erf}(x) = \frac{2}{\sqrt{\pi}} \int_0^x e^{-t^2} dt$.
\end{lemma}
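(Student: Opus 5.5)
The plan is to compute the distribution of $Y_{ij}^{(B)}$ exactly and then evaluate the expectation of its sign in closed form.

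By \cref{assum:gaussian_noise}, for $i\neq j$ the variables $\epsilon_{i,d}$ and $\epsilon_{j,d}$ are independent $\mathcal{N}(0,\sigma^2)$, so $\epsilon_{i,d}-\epsilon_{j,d}\sim\mathcal{N}(0,2\sigma^2)$. These differences are independent across the $N$ i.i.d.\ datasets of \cref{assum:dataset_distribution}. Averaging them gives $\bar\eta_{ij}^{(B)}\sim\mathcal{N}(0,2\sigma^2/N)$. By the decomposition \eqref{eq:perf_gap}, it follows that $Y_{ij}^{(B)}\sim\mathcal{N}(\Delta_{ij},s^2)$ with $s=\sigma\sqrt{2/N}$. (The degenerate case $i=j$ gives $\Delta_{ii}=0$ and $Y\equiv 0$, and both sides vanish, so it is trivial.)

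Next, since $Y$ has a continuous law, $\mathbb{P}(Y=0)=0$. Hence
\begin{align*}
\mathbb{E}[\mathrm{sgn}(Y)] = \mathbb{P}(Y>0)-\mathbb{P}(Y<0) = 2\,\mathbb{P}(Y>0)-1 .
\end{align*}
Standardizing, $\mathbb{P}(Y>0)=\Phi(\Delta_{ij}/s)$, where $\Phi$ is the standard normal CDF.

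Then use the identity $2\Phi(x)-1=\mathrm{erf}(x/\sqrt2)$, which holds for all real $x$. It follows from $\Phi(x)=\tfrac12\bigl(1+\mathrm{erf}(x/\sqrt2)\bigr)$ after the substitution $t=u/\sqrt2$ in the Gaussian integral. This gives
\begin{align*}
\mathbb{E}[\mathrm{sgn}(Y)]=\mathrm{erf}\!\left(\frac{\Delta_{ij}}{s\sqrt2}\right)=\mathrm{erf}\!\left(\frac{\Delta_{ij}\sqrt N}{2\sigma}\right).
\end{align*}

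The last step is only a bookkeeping point. The statement defines $\mathrm{erf}$ only for nonnegative arguments, so I use the oddness of $\mathrm{erf}$ (equivalently the symmetry $\Phi(-x)=1-\Phi(x)$) to rewrite the result as $\mathrm{sgn}(\Delta_{ij})\,\mathrm{erf}(|\Delta_{ij}|\sqrt N/(2\sigma))$. When $\Delta_{ij}=0$, both expressions equal $0$.

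No real obstacle is expected. The only place needing care is the variance factor $2\sigma^2$: it comes from the independence of the two models' noise terms, and it is what produces the $2\sigma$ in the denominator.
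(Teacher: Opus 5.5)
Your proof is correct and follows essentially the same route as the paper: derive the Gaussian law $Y_{ij}^{(B)}\sim\mathcal{N}(\Delta_{ij},2\sigma^2/N)$, reduce the expectation to $2\,\mathbb{P}(Y_{ij}^{(B)}>0)-1$, and apply the identity $2\Phi(x)-1=\mathrm{erf}(x/\sqrt2)$. The paper phrases the reduction through the probability $p_{ij}(N)$ of matching the oracle sign, with $\Delta_{ij}>0$ assumed without loss of generality; your direct handling of negative arguments via the oddness of $\mathrm{erf}$, and of the case $\Delta_{ij}=0$, is a slightly cleaner version of the paper's ``by symmetry'' step.
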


\begin{proof}
Let $p_{ij}(N) \defeq \mathbb{P}\left[ \mathrm{sgn}(Y_{ij}^{(B)}) = \mathrm{sgn}(\Delta_{ij}) \right]$ denote the probability that the observed ranking of models $i$ and $j$ on benchmark $B$ matches the oracle ranking. The term $\mathrm{sgn}\left(Y_{ij}^{(B)}\right) \mathrm{sgn}(\Delta_{ij})$ takes the value $1$ with probability $p_{ij}(N)$ (when the signs match) and $-1$ with probability $1 - p_{ij}(N)$ (when they disagree). Therefore, its expected value is:
\begin{align}
    \mathbb{E}\left[ \mathrm{sgn}\left(Y_{ij}^{(B)}\right) \mathrm{sgn}(\Delta_{ij}) \right] &= 1 \cdot p_{ij}(N) + (-1) \cdot (1 - p_{ij}(N)) \\
    &= 2 p_{ij}(N) - 1.\label{eq:expectation_sign}
\end{align}
Without loss of generality, let us assume that the true gap is positive, i.e., $\Delta_{ij} > 0$. In this case, the probability that the ranking is correct corresponds to the probability that the observed sample mean difference is positive:
\begin{align}
    p_{ij}(N) = \mathbb{P}\left[ Y_{ij}^{(B)} > 0 \right].
\end{align}
Under the homoskedastic independent Gaussian noise assumption (\cref{assum:gaussian_noise}), we have $\bar\eta_{ij}^{(B)}\sim\mathcal{N}\left(0,\frac{2\sigma^2}{N}\right)$, and consequently
\begin{align}\label{eq:perf_gap_law}
    Y_{ij}^{(B)} \sim \mathcal{N}\left(\Delta_{ij},\frac{2\sigma^2}{N}\right).
\end{align}
From \autoref{eq:perf_gap_law}, we define the standardized variable $Z$ as:
\begin{align}
    Z \defeq \frac{Y_{ij}^{(B)} - \Delta_{ij}}{\sqrt{\frac{2\sigma^2}{N}}} \sim \mathcal{N}(0, 1).
\end{align}
The probability of correct ranking becomes:
\begin{align}
    p_{ij}(N) &= \mathbb{P}\left[ Y_{ij}^{(B)} > 0 \right] \\
    &= \mathbb{P}\left[ Z > -\frac{\Delta_{ij}}{\sqrt{\frac{2\sigma^2}{N}}} \right] \\
    &= \Phi\left( \frac{\Delta_{ij}\sqrt{N}}{\sigma\sqrt{2}} \right),
\end{align}
where $\Phi$ is the cumulative distribution function of the standard normal distribution. By symmetry, the result holds for $\Delta_{ij}<0$ using absolute values:
\begin{align}
    p_{ij}(N) = \Phi\left( \frac{|\Delta_{ij}|\sqrt{N}}{\sigma\sqrt{2}} \right).
\end{align}
The error function satisfies the identity $\mathrm{erf}(x) = 2\Phi(x\sqrt{2}) - 1$.
Using this relation, we can simplify the expectation term $2 p_{ij}(N) - 1$ to get
\begin{align}
    2 p_{ij}(N) - 1 = \mathrm{erf}\left( \frac{|\Delta_{ij}|\sqrt{N}}{2\sigma} \right).\label{eq:pij_erf}
\end{align}
Finally, from \autoref{eq:expectation_sign}, we know that $\mathbb{E}\left[\mathrm{sgn}\left(Y_{ij}^{(B)}\right)\right] = \mathrm{sgn}(\Delta_{ij}) (2p_{ij}(N) - 1)$, which completes the proof.
\end{proof}

\subsection{Agreement between two independent benchmarks}\label{sec:2_bench}
We investigate how the size of a benchmark affects its consistency by asking the following question: given two independent benchmarks $B_1$ and $B_2$, both of size $N$, how well do their model rankings agree? Specifically, we determine how large $N$ must be to achieve a certain level of agreement. To do so, we derive the expected Kendall-$\tau$ correlation $\mathbb{E}[\tau_{N,N}]$ between the rankings produced by two independent benchmarks of size $N$ as a function of $N$.

Let $Y_{ij}^{(B_1)}$ and $Y_{ij}^{(B_2)}$ denote the observed performance gaps between models $i$ and $j$ on benchmarks $B_1$ and $B_2$, respectively.

The Kendall-$\tau$ correlation between the rankings on $B_1$ and $B_2$ is defined as:
\begin{align}
    \tau_{N,N} \defeq \frac{2}{k(k-1)} \sum_{i<j} \mathrm{sgn}\left(Y_{ij}^{(B_1)}\right) \mathrm{sgn}\left(Y_{ij}^{(B_2)}\right).
\end{align}

\begin{proposition}[Expected ranking consistency] \label{prop:kendall_tau_2_bench}
Let $\tau_{N,N}$ denote the Kendall-$\tau$ rank correlation between the rankings of $k$ models on two independent benchmarks of size $N$. The expected correlation is given by
    \begin{align}\label{eq:kendall_tau_2_bench}
        \mathbb{E}[\tau_{N,N}] = \frac{2}{k(k-1)} \sum_{1 \le i < j \le k} \mathrm{erf}^2\left( \frac{|\Delta_{ij}|\sqrt{N}}{2\sigma} \right).
    \end{align}
\end{proposition}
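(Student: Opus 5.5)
By linearity of expectation, it suffices to compute, for each fixed pair $i<j$, the quantity $\mathbb{E}\bigl[\mathrm{sgn}(Y_{ij}^{(B_1)})\,\mathrm{sgn}(Y_{ij}^{(B_2)})\bigr]$. The constant prefactor $\frac{2}{k(k-1)}$ passes through the expectation unchanged, and the sum over pairs then reproduces \autoref{eq:kendall_tau_2_bench}.

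The key step is to factor this expectation. Under \cref{assum:dataset_distribution}, the benchmarks $B_1$ and $B_2$ are drawn independently. Under \cref{assum:perf_model,assum:gaussian_noise}, the gap $Y_{ij}^{(B)}$ depends on $B$ only through the noise terms $\epsilon_{\cdot,d}$ for $d\in B$, and $\Delta_{ij}$ is deterministic. Hence $Y_{ij}^{(B_1)}$ and $Y_{ij}^{(B_2)}$ are independent random variables, each distributed as $\mathcal{N}(\Delta_{ij},2\sigma^2/N)$ by \autoref{eq:perf_gap_law}. Independence gives
\begin{align*}
\mathbb{E}\bigl[\mathrm{sgn}(Y_{ij}^{(B_1)})\,\mathrm{sgn}(Y_{ij}^{(B_2)})\bigr]
=\mathbb{E}\bigl[\mathrm{sgn}(Y_{ij}^{(B_1)})\bigr]\,\mathbb{E}\bigl[\mathrm{sgn}(Y_{ij}^{(B_2)})\bigr].
\end{align*}
Applying \cref{lem:expectation_sign} to each factor yields $\mathrm{sgn}(\Delta_{ij})^2\,\mathrm{erf}^2\bigl(\tfrac{|\Delta_{ij}|\sqrt{N}}{2\sigma}\bigr)$.

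Two small points need care.

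\textbf{Ties.} If $\Delta_{ij}=0$, then $\mathrm{sgn}(\Delta_{ij})^2=0$ while $\mathrm{erf}(0)^2=0$, so both sides vanish and the formula still holds. If $\Delta_{ij}\neq 0$, then $\mathrm{sgn}(\Delta_{ij})^2=1$. In both cases the product reduces to $\mathrm{erf}^2(\cdot)$. The event $Y_{ij}^{(B)}=0$ has probability zero under the Gaussian law, so ties in the observed gaps do not affect the expectation.

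\textbf{Independence.} This is the main (mild) obstacle: the independence of the two sign variables must be justified carefully. The benchmarks are independent i.i.d. draws and the noise $\epsilon_{i,d}$ is tied to the dataset. Even if the same population dataset happened to be drawn twice, \cref{assum:gaussian_noise} treats the deviations attached to each draw as independent. So the noise vectors feeding $B_1$ and $B_2$ are independent, and therefore so are measurable functions of them.

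Summing over all pairs and multiplying by $\frac{2}{k(k-1)}$ completes the argument.
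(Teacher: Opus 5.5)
Your proof is correct and follows the paper's argument: expand $\mathbb{E}[\tau_{N,N}]$ by linearity, factor each pairwise term using the independence of the two benchmarks' gaps $Y_{ij}^{(B_1)}$ and $Y_{ij}^{(B_2)}$, and apply \cref{lem:expectation_sign} to each factor to obtain $\mathrm{erf}^2$. Your remarks on the case $\Delta_{ij}=0$ and on ties having probability zero are small refinements that the paper's proof leaves implicit.
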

\begin{proof}
Since the benchmarks are independent samples of size $N$ drawn from $\mathcal{D}$, the gaps $Y_{ij}^{(B_1)}$ and $Y_{ij}^{(B_2)}$ are independent and identically distributed (\cref{assum:dataset_distribution}).
Therefore, the expected Kendall-$\tau$ is
\begin{align}
    \mathbb{E}[\tau_{N,N}] = \frac{2}{k(k-1)} \sum_{i<j} \mathbb{E}\left[ \mathrm{sgn}\left(Y_{ij}^{(B_1)}\right) \right] \mathbb{E}\left[ \mathrm{sgn}\left(Y_{ij}^{(B_2)}\right) \right],
\end{align}
and, from \cref{lem:expectation_sign}, we have
\begin{align}
    \mathbb{E}\left[ \mathrm{sgn}\left(Y_{ij}^{(B_1)}\right) \right] \mathbb{E}\left[ \mathrm{sgn}\left(Y_{ij}^{(B_2)}\right) \right] &= \left[ \mathrm{sgn}(\Delta_{ij}) \, \mathrm{erf}\left( \frac{|\Delta_{ij}|\sqrt{N}}{2\sigma} \right) \right]^2, \\
    &= \mathrm{erf}^2\left( \frac{|\Delta_{ij}|\sqrt{N}}{2\sigma} \right),
\end{align}
which completes the proof.
\end{proof}

\autoref{eq:kendall_tau_2_bench} shows that $\tau_{N,N}$ converges to $1$ as $N$ increases, meaning that ultimately the two benchmarks agree on the ranking. The speed of this convergence is controlled by two parameters: the more separable the models are (large values of $|\Delta_{ij}|$), and the smaller the evaluation noise (small $\sigma$), the faster the convergence.

\paragraph{Asymptotic analysis} We are interested in the behaviour of the expected Kendall-$\tau$ when $N$ becomes large. Indeed, this can help quantify the number of datasets needed to reach a certain degree of agreement between two similarly-sized benchmarks.

\begin{corollary}[Asymptotic agreement rate] \label{cor:asymp_2_bench}
    Consider the expected disagreement $1 - \mathbb{E}[\tau_{N,N}]$ between two independent benchmarks of size $N$. As $N \to \infty$, this disagreement decays exponentially according to the smallest pairwise performance gap:
    \begin{align}\label{eq:asymp_kendall_tau_2_bench}
        1 - \mathbb{E}[\tau_{N,N}] \sim \frac{C}{\sqrt{N}} \exp\left(-\frac{\Delta_{\min}^2 N}{4\sigma^2}\right),
    \end{align}
    where $\Delta_{\min} \defeq \min_{i<j} |\Delta_{ij}|$ is the minimum expected performance gap, and $C = \frac{8\sigma M_{\mathrm{min}}}{k(k-1)\Delta_{\min}\sqrt{\pi}}$ is a constant determined by the number of pairs $M_{\mathrm{min}}$ achieving this minimum gap.
\end{corollary}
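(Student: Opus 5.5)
Starting from \cref{prop:kendall_tau_2_bench}, and using that the normalizing factor satisfies $\frac{2}{k(k-1)}\sum_{i<j}1 = 1$, we write the disagreement as
\begin{align*}
1-\mathbb{E}[\tau_{N,N}] = \frac{2}{k(k-1)}\sum_{i<j}\left(1-\mathrm{erf}^2(x_{ij}\sqrt{N})\right), \qquad x_{ij}\defeq\frac{|\Delta_{ij}|}{2\sigma}.
\end{align*}
We assume throughout that $\Delta_{\min}>0$, i.e.\ no ties in the oracle means; otherwise the statement is vacuous. The plan is to expand each summand asymptotically using the complementary error function, and then keep only the dominant terms.

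\textbf{Step 1 (single-term asymptotics).} Factor
\begin{align*}
1-\mathrm{erf}^2(y) = (1-\mathrm{erf}(y))(1+\mathrm{erf}(y)) = \mathrm{erfc}(y)\,(2-\mathrm{erfc}(y)).
\end{align*}
The standard tail expansion $\mathrm{erfc}(y)\sim \frac{e^{-y^2}}{y\sqrt{\pi}}$ as $y\to\infty$ then gives $1-\mathrm{erf}^2(y)\sim \frac{2e^{-y^2}}{y\sqrt{\pi}}$, since $2-\mathrm{erfc}(y)\to 2$. If needed, I would justify this expansion by integrating by parts in $\int_y^\infty e^{-t^2}dt$.

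\textbf{Step 2 (substitute).} Set $y=x_{ij}\sqrt N$. Each pair then contributes
\begin{align*}
\frac{2}{x_{ij}\sqrt{\pi N}}\exp\left(-\frac{\Delta_{ij}^2N}{4\sigma^2}\right)(1+o(1)) = \frac{4\sigma}{|\Delta_{ij}|\sqrt{\pi N}}\exp\left(-\frac{\Delta_{ij}^2N}{4\sigma^2}\right)(1+o(1)).
\end{align*}

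\textbf{Step 3 (dominance).} This is the only step needing care, though it is routine. Divide the whole sum by the candidate leading term, i.e.\ the $M_{\min}$ pairs with $|\Delta_{ij}|=\Delta_{\min}$. Every pair with $|\Delta_{ij}|>\Delta_{\min}$ gives a ratio bounded by a constant times $\exp(-(\Delta_{ij}^2-\Delta_{\min}^2)N/(4\sigma^2))$, which tends to $0$. The number of pairs $k$ is fixed, so these finitely many vanishing ratios sum to $o(1)$. The surviving terms are $M_{\min}$ identical contributions. Multiplying by $\frac{2}{k(k-1)}$ gives
\begin{align*}
1-\mathbb{E}[\tau_{N,N}] \sim \frac{8\sigma M_{\min}}{k(k-1)\Delta_{\min}\sqrt{\pi}}\cdot\frac{1}{\sqrt N}\exp\left(-\frac{\Delta_{\min}^2N}{4\sigma^2}\right),
\end{align*}
which is exactly the constant $C$ in the statement.

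I expect no real obstacle here. The main points to state explicitly are the erfc tail asymptotic, the positivity of $\Delta_{\min}$, and the fact that $k$ is fixed, so that the $o(1)$ terms sum uniformly.
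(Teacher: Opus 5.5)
Your proposal is correct and follows essentially the same route as the paper: you rewrite $1-\mathbb{E}[\tau_{N,N}]$ via \cref{prop:kendall_tau_2_bench}, use the tail $1-\mathrm{erf}(y)\sim e^{-y^2}/(y\sqrt{\pi})$ to get $1-\mathrm{erf}^2(y)\sim 2e^{-y^2}/(y\sqrt{\pi})$, substitute $y=|\Delta_{ij}|\sqrt{N}/(2\sigma)$, and keep only the $M_{\min}$ pairs with the slowest exponential decay. The differences are cosmetic: you factor $1-\mathrm{erf}^2=\mathrm{erfc}\,(2-\mathrm{erfc})$ where the paper squares the expansion with an $O(e^{-x^2}/x^3)$ remainder, and you spell out the dominance step and the hypotheses $\Delta_{\min}>0$ and fixed $k$, which the paper only asserts or leaves implicit.
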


\begin{proof}
For large real $x$, the error function has the following asymptotic expansion
\begin{align}
    \mathrm{erf}(x) = 1 - \frac{e^{-x^2}}{x\sqrt{\pi}}\left(1 + \sum_{n=1}^\infty (-1)^n \frac{(2n - 1)!!}{\left(2x^2\right)^n}\right)
\end{align}
where $(2n - 1)!!$ is the double factorial of $(2n-1)$, which is the product of all odd numbers up to $(2n-1)$.

Therefore, for large $x$,
\begin{align}\label{eq:equiv_erf}
    \mathrm{erf}(x) = 1 - \frac{e^{-x^2}}{x\sqrt{\pi}} + O\left(\frac{e^{-x^2}}{x^3}\right),
\end{align}
and thus,
\begin{align}
    \mathrm{erf}^2(x) = 1 - \frac{2e^{-x^2}}{x\sqrt{\pi}} + O\left(\frac{e^{-x^2}}{x^3}\right).
\end{align}

Plugging this into \autoref{eq:kendall_tau_2_bench}, with $x=\frac{|\Delta_{ij}|\sqrt{N}}{2\sigma}$, we get that for large $N$
\begin{align}
    \mathbb{E}[\tau_{N,N}] = 1 - \frac{2}{k(k-1)} \sum_{i<j} \left[ \frac{4\sigma \exp\left(-\frac{\Delta_{ij}^2N}{4\sigma^2}\right)}{|\Delta_{ij}|\sqrt{\pi N}} + O\left(\frac{\exp\left(-\frac{\Delta_{ij}^2N}{4\sigma^2}\right)}{N^{3/2}}\right) \right].
\end{align}

Let $\Delta_{\min} = \min_{i<j} |\Delta_{ij}|$. The summation is dominated by the terms corresponding to the smallest performance gap, as the exponential decay is slowest for $\Delta_{\min}$. Therefore, we have for large $N$:
\begin{align}
    1 - \mathbb{E}[\tau_{N,N}] \sim \frac{C}{\sqrt{N}} \exp\left(-\frac{\Delta_{\min}^2 N}{4\sigma^2}\right)
\end{align}
with $C = \frac{8\sigma M_{\mathrm{min}}}{k(k-1)\Delta_{\min}\sqrt{\pi}}$, and $M_{\mathrm{min}}$ the number of pairs achieving the minimum gap.
\end{proof}

\cref{cor:asymp_2_bench} shows that asymptotically the disagreement between benchmarks decays exponentially with $N$, and is only controlled by the hardest-to-distinguish pair of models.

\subsection{Convergence to the oracle ranking}\label{sec:1_bench}
We now address a complementary question: what benchmark size $N$ is required for the empirical ranking of models on $B=\{d_1,\ldots,d_N\}$ to reliably converge to the true oracle ranking over the population $\mathcal{D}$? To quantify this, we examine the expected Kendall-$\tau$ correlation, denoted $\mathbb{E}[\tau_{N,\infty}]$, between the empirical ranking observed on a sample of size $N$ and the ground-truth ranking determined by the population expectations $\{\mu_i\}_{i=1}^k$.

The Kendall-$\tau$ correlation between the observed ranking on $B$ and the oracle ranking is defined as:
\begin{align}
    \tau_{N,\infty} \defeq \frac{2}{k(k-1)} \sum_{i<j} \mathrm{sgn}\left(Y_{ij}^{(B)}\right) \mathrm{sgn}\left(\Delta_{ij}\right).
\end{align}

\begin{proposition}[Expected convergence rate to oracle] \label{prop:kendall_tau_1_bench}
Let $\tau_{N,\infty}$ be the Kendall-$\tau$ rank correlation between the ranking of $k$ models observed on a benchmark of size $N$ and the oracle ranking. The expected correlation is given by:
    \begin{align}\label{eq:kendall_tau_1_bench}
        \mathbb{E}[\tau_{N,\infty}] = \frac{2}{k(k-1)} \sum_{i<j} \mathrm{erf}\left( \frac{|\Delta_{ij}|\sqrt{N}}{2\sigma} \right).
    \end{align}
\end{proposition}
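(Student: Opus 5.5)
The plan is to use linearity of expectation on the definition of $\tau_{N,\infty}$ and then apply \cref{lem:expectation_sign} term by term. The oracle signs $\mathrm{sgn}(\Delta_{ij})$ are deterministic, so the only randomness in $\tau_{N,\infty}$ comes from the benchmark $B$ through $Y_{ij}^{(B)}$. This gives
\begin{align}
    \mathbb{E}[\tau_{N,\infty}] = \frac{2}{k(k-1)} \sum_{i<j} \mathrm{sgn}(\Delta_{ij})\, \mathbb{E}\left[\mathrm{sgn}\left(Y_{ij}^{(B)}\right)\right].
\end{align}
No independence between different pairs $(i,j)$ is needed, since linearity of expectation holds regardless of the dependence between the gaps for pairs sharing a model.

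Next I will substitute the closed form from \cref{lem:expectation_sign}, namely $\mathbb{E}[\mathrm{sgn}(Y_{ij}^{(B)})] = \mathrm{sgn}(\Delta_{ij})\,\mathrm{erf}(|\Delta_{ij}|\sqrt{N}/(2\sigma))$. Each summand then becomes $\mathrm{sgn}(\Delta_{ij})^2\,\mathrm{erf}(\cdot)$. Provided $\Delta_{ij}\neq 0$ we have $\mathrm{sgn}(\Delta_{ij})^2 = 1$, which yields \eqref{eq:kendall_tau_1_bench} directly. Equivalently, each summand equals $2p_{ij}(N)-1$ as computed in \eqref{eq:pij_erf}.

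The only subtle point is the degenerate case $\Delta_{ij}=0$, where the oracle ranking has a tie. There $\mathrm{sgn}(\Delta_{ij})=0$, so the summand vanishes, and the right-hand side also vanishes because $\mathrm{erf}(0)=0$. The formula therefore holds in this case as well, under the convention $\mathrm{sgn}(0)=0$ used in the definition of $\tau$. I will note this explicitly, together with the fact that $Y_{ij}^{(B)}=0$ has probability zero under \cref{assum:gaussian_noise}.

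I expect no real obstacle: the argument is a one-line reduction to the lemma. Unlike \cref{prop:kendall_tau_2_bench}, it does not square the erf term, because one of the two factors is deterministic. This is exactly what makes the oracle agreement larger than the agreement between two independent benchmarks.
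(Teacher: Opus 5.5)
Your proposal is correct and matches the paper's proof: linearity of expectation over the pairs, then \cref{lem:expectation_sign} together with $\mathrm{sgn}(\Delta_{ij})^2=1$. Your explicit treatment of the tie case $\Delta_{ij}=0$ is a harmless refinement that the paper leaves implicit.
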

\begin{proof}
By the linearity of expectation, we have:
\begin{align}
    \mathbb{E}[\tau_{N,\infty}] = \frac{2}{k(k-1)} \sum_{i<j} \mathbb{E}\left[ \mathrm{sgn}\left(Y_{ij}^{(B)}\right) \mathrm{sgn}(\Delta_{ij}) \right].
\end{align}
Combining \cref{lem:expectation_sign} with this equation completes the proof.
\end{proof}

\begin{corollary}[Asymptotic convergence rate to the oracle] \label{cor:asymp_1_bench}
    Let $\Delta_{\min} = \min_{i<j} |\Delta_{ij}|$ be the minimum expected performance gap. As $N \to \infty$, the expected disagreement $1 - \mathbb{E}[\tau_{N,\infty}]$ between the ranking on a benchmark of size $N$ and the oracle ranking decays exponentially as:
    \begin{align}\label{eq:asymp_kendall_tau_1_bench}
        1 - \mathbb{E}[\tau_{N,\infty}] \sim \frac{C}{2\sqrt{N}} \exp\left(-\frac{\Delta_{\min}^2 N}{4\sigma^2}\right),
    \end{align}
    where $C = \frac{8\sigma M_{\mathrm{min}}}{k(k-1)\Delta_{\min}\sqrt{\pi}}$ is the constant defined in \cref{cor:asymp_2_bench}, with $M_{\mathrm{min}}$ representing the number of model pairs separated by exactly $\Delta_{\min}$.
\end{corollary}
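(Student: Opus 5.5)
The argument mirrors the proof of \cref{cor:asymp_2_bench}, with \autoref{eq:kendall_tau_1_bench} from \cref{prop:kendall_tau_1_bench} as the starting point instead of \autoref{eq:kendall_tau_2_bench}. Since the normalizing factor $\frac{2}{k(k-1)}$ multiplies exactly $\binom{k}{2}$ terms, we have $1 = \frac{2}{k(k-1)}\sum_{i<j} 1$. Hence
\begin{align}
    1 - \mathbb{E}[\tau_{N,\infty}] = \frac{2}{k(k-1)} \sum_{i<j} \left[1 - \mathrm{erf}\left( \frac{|\Delta_{ij}|\sqrt{N}}{2\sigma} \right)\right].
\end{align}
Every summand is now a complementary error function. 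This requires $\Delta_{ij}\neq 0$ for all pairs, i.e.\ $\Delta_{\min}>0$, which is implicit in the statement.

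Next we apply the asymptotic expansion \autoref{eq:equiv_erf} already established in the proof of \cref{cor:asymp_2_bench}, namely $1-\mathrm{erf}(x) = \frac{e^{-x^2}}{x\sqrt{\pi}} + O(e^{-x^2}/x^3)$, with $x = |\Delta_{ij}|\sqrt{N}/(2\sigma)$. Each term then becomes
\begin{align}
    \frac{2\sigma}{|\Delta_{ij}|\sqrt{\pi N}}\exp\left(-\frac{\Delta_{ij}^2 N}{4\sigma^2}\right)\left(1+O(1/N)\right).
\end{align}
Compared with the two-benchmark case, the linear term in $e^{-x^2}$ appears with coefficient $1$ rather than the coefficient $2$ coming from $\mathrm{erf}^2(x)=1-2\frac{e^{-x^2}}{x\sqrt\pi}+\dots$. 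This is exactly the source of the factor $\frac12$ relative to $C$.

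The main step needing care is the dominance argument. We factor out $\exp(-\Delta_{\min}^2N/(4\sigma^2))/\sqrt{N}$. The $M_{\min}$ pairs with $|\Delta_{ij}|=\Delta_{\min}$ each contribute $\frac{2\sigma}{\Delta_{\min}\sqrt\pi}(1+o(1))$. Every other pair has $|\Delta_{ij}|>\Delta_{\min}$, so its contribution carries the extra factor $\exp(-(\Delta_{ij}^2-\Delta_{\min}^2)N/(4\sigma^2))\to 0$, and because there are finitely many pairs the total is $o(1)$. Summing and multiplying by $\frac{2}{k(k-1)}$ gives the leading coefficient
\begin{align}
    \frac{4\sigma M_{\min}}{k(k-1)\Delta_{\min}\sqrt{\pi}} = \frac{C}{2}.
\end{align}
This yields \autoref{eq:asymp_kendall_tau_1_bench}.

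As a consistency check, comparing the result with \cref{cor:asymp_2_bench} gives $1-\mathbb{E}[\tau_{N,N}] \sim 2\,(1-\mathbb{E}[\tau_{N,\infty}])$, which is the factor-of-two relation invoked in the main text.
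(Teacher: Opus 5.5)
Your proposal is correct and follows the paper's proof: you substitute the expansion \autoref{eq:equiv_erf} of the error function into \autoref{eq:kendall_tau_1_bench}, keep only the minimal-gap terms, and identify the leading coefficient as $C/2$. Your explicit dominance argument (factoring out $\exp(-\Delta_{\min}^2 N/(4\sigma^2))/\sqrt{N}$ and using that finitely many other terms vanish relative to it) and your remark that $\Delta_{\min}>0$ is required just spell out what the paper asserts in one line.
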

\begin{proof}
    Substituting the asymptotic expansion of the error function from \autoref{eq:equiv_erf} into \autoref{eq:kendall_tau_1_bench}, we obtain:
    \begin{align}
        \mathbb{E}[\tau_{N,\infty}] = 1 - \frac{2}{k(k-1)} \sum_{i<j} \left[ \frac{2\sigma\exp{\left(-\frac{|\Delta_{ij}|^2 N}{4\sigma^2}\right)}}{|\Delta_{ij}|\sqrt{\pi N}} + O\left(\frac{\exp{\left(-\frac{|\Delta_{ij}|^2 N}{4\sigma^2}\right)}}{N^{3/2}}\right) \right].
    \end{align}
    As $N \to \infty$, the summation is dominated by the terms corresponding to the minimal gap $\Delta_{\min}$, as the exponential decay is slowest for these terms. Retaining only the leading order terms yields:
    \begin{align}
        1 - \mathbb{E}[\tau_{N,\infty}] \sim \frac{1}{2} \left[ \frac{8\sigma M_{\mathrm{min}}}{k(k-1)\Delta_{\min}\sqrt{\pi}} \right] \frac{1}{\sqrt{N}} \exp\left(-\frac{\Delta_{\min}^2 N}{4\sigma^2}\right).
    \end{align}
    Identifying the bracketed term as the constant $C$ from \cref{cor:asymp_2_bench}, we recover the result.
\end{proof}
Notably, \cref{cor:asymp_1_bench} implies that the expected disagreement with the oracle is asymptotically half the disagreement between two independent benchmarks of size $N$.

\subsection{Extension: accounting for inductive biases}\label{sec:inductive_bias}

The simplified model described in \cref{assum:perf_model} treats deviations from the mean performance solely as random noise. However, in empirical machine learning, and especially in tabular learning, performance variance is often driven by the structural compatibility between an algorithm's inductive bias and the specific characteristics of a dataset \citep{grinsztajn2022tree}. To capture this heterogeneity, we introduce a latent variable model that accounts for model-dataset affinity.

\begin{assumption}[Inductive bias decomposition]\label{assum:inductive_bias_model}
    Each dataset $d \in \mathcal{D}$ is associated with a latent vector of meta-features $z_d$. We decompose the observed performance $X_{i,d}$ into a population mean, an interaction term capturing inductive bias, and residual noise:
    \begin{align}\label{eq:model_perf_full}
        X_{i,d} = \mu_i + \beta_i^\top z_d + \epsilon_{i,d},
    \end{align}
    where $\beta_i$ represents the sensitivity vector of model $i$ to the dataset meta-features. This assumption replaces \cref{assum:perf_model} for the remainder of this analysis.
\end{assumption}

Under this formulation, the term $\beta_i^\top z_d$ quantifies the specific affinity between model $i$ and dataset $d$. To facilitate the derivation of closed-form bounds, we specify the distributional properties of these latent features.

\begin{assumption}[Gaussian meta-features]\label{assum:gaussian_meta}
    The latent meta-features are distributed according to a multivariate Gaussian distribution centered at zero, $z_d \sim \mathcal{N}(0, \Sigma_z)$, where $\Sigma_z$ denotes the covariance of dataset characteristics across the domain $\mathcal{D}$. Furthermore, $z_d$ is independent of the observation noise $\epsilon_{i,d}$.
\end{assumption}

\begin{remark}[Centering assumption]
    The centering assumption $\mathbb{E}_{d}[z_d]=0$ is made without loss of generality, as any non-zero mean in the latent distribution can be absorbed into the intrinsic performance term $\mu_i$. Indeed, if $\mathbb{E}[z_d] = \bar{z} \neq 0$, we can rewrite \autoref{eq:model_perf_full} as $X_{i,d} = (\mu_i + \beta_i^\top \bar{z}) + \beta_i^\top (z_d - \bar{z}) + \epsilon_{i,d}$. Defining $\tilde{\mu}_i \defeq \mu_i + \beta_i^\top \bar{z}$ and $\tilde{z}_d \defeq z_d - \bar{z}$, we recover that $\mathbb{E}[\tilde{z}_d] = 0$.
\end{remark}

\begin{remark}[Relaxing the Gaussian assumption]
    We emphasize that the formal assumption of Gaussian latent meta-features $z_d$ serve primarily to simplify exposition. Our following derivations rely solely on the distribution of the benchmark average $\bar z_B = \frac{1}{N} \sum_{d \in B} z_d$. Under mild regularity conditions (specifically, the existence of a finite second moment), the central limit theorem ensures that $\bar z_B$ converges asymptotically to a Gaussian distribution as $N \to \infty$. Consequently, \cref{assum:gaussian_meta} can be relaxed without affecting the asymptotic validity of our results.
\end{remark}

\begin{lemma}[Distribution of performance gap under inductive bias]\label{lem:gap_distribution_inductive}
    Under Assumptions \ref{assum:dataset_distribution}, \ref{assum:gaussian_noise}, \ref{assum:inductive_bias_model}, and \ref{assum:gaussian_meta}, the observed performance gap $Y_{ij}^{(B)}$ between two models $i$ and $j$ on a benchmark $B$ of size $N$ follows a Gaussian distribution:
    \begin{align}\label{eq:gap_dist_inductive}
        Y_{ij}^{(B)} \sim \mathcal{N}\left(\Delta_{ij}, \frac{\nu^2_{ij}}{N}\right),
    \end{align}
    where $\nu^2_{ij} \defeq (\beta_i - \beta_j)^\top \Sigma_z (\beta_i - \beta_j) + 2\sigma^2$ represents the effective pairwise variance, which combines the variance from the observation noise with the one induced by the difference of inductive biases between the models.
\end{lemma}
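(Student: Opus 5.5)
Substituting the decomposition of \cref{assum:inductive_bias_model} into the definition of the performance gap (\autoref{eq:perf_gap_def}) gives
\begin{align*}
    Y_{ij}^{(B)} = \frac{1}{N}\sum_{d\in B}\left(X_{i,d}-X_{j,d}\right) = \Delta_{ij} + (\beta_i-\beta_j)^\top \bar z_B + \bar\eta_{ij}^{(B)},
\end{align*}
where $\bar z_B \defeq \frac{1}{N}\sum_{d\in B} z_d$ and $\bar\eta_{ij}^{(B)}$ is defined as in \autoref{eq:perf_gap}. Since $\Delta_{ij}$ is deterministic, the claim reduces to showing that the random part $W \defeq (\beta_i-\beta_j)^\top \bar z_B + \bar\eta_{ij}^{(B)}$ is centered Gaussian with variance $\nu_{ij}^2/N$.

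\textbf{Step 1: the latent term.} By \cref{assum:dataset_distribution}, the datasets $d_1,\dots,d_N$ are i.i.d., so the $z_d$ are i.i.d.\ $\mathcal{N}(0,\Sigma_z)$ by \cref{assum:gaussian_meta}. Hence $\bar z_B \sim \mathcal{N}(0,\Sigma_z/N)$. The linear functional $(\beta_i-\beta_j)^\top \bar z_B$ is therefore $\mathcal{N}\big(0,(\beta_i-\beta_j)^\top\Sigma_z(\beta_i-\beta_j)/N\big)$.

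\textbf{Step 2: the noise term.} By \cref{assum:gaussian_noise}, the variables $\epsilon_{i,d}-\epsilon_{j,d}$ are independent $\mathcal{N}(0,2\sigma^2)$ across $d$. This is the same computation as in the proof of \cref{lem:expectation_sign}, and it gives $\bar\eta_{ij}^{(B)}\sim\mathcal{N}(0,2\sigma^2/N)$.

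\textbf{Step 3: combining the two.} This step needs the most care, since it is where the argument uses the structure of the assumptions. The collection $\{z_d\}_{d\in B}\cup\{\epsilon_{i,d},\epsilon_{j,d}\}_{d\in B}$ is jointly Gaussian and mutually independent: the $\epsilon$'s are independent of each other by \cref{assum:gaussian_noise}, and independent of the $z_d$ by \cref{assum:gaussian_meta}. So the two terms from Steps 1 and 2 are independent Gaussians. A sum of independent Gaussians is Gaussian with means and variances adding, which gives
\begin{align*}
    W \sim \mathcal{N}\left(0,\frac{(\beta_i-\beta_j)^\top\Sigma_z(\beta_i-\beta_j)+2\sigma^2}{N}\right).
\end{align*}
Shifting by $\Delta_{ij}$ yields \autoref{eq:gap_dist_inductive}. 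Equivalently, one can compute the characteristic function of $W$ and factor it using independence.

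One subtlety should be stated explicitly: the centering remark ensures $\mathbb{E}[z_d]=0$, so that the mean of $Y_{ij}^{(B)}$ is exactly $\Delta_{ij}$ with $\mu_i$ as in \autoref{eq:model_perf_full}.
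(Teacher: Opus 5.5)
Your proof is correct and follows the paper's argument essentially step for step. You write $Y_{ij}^{(B)} = \Delta_{ij} + (\beta_i-\beta_j)^\top \bar z_B + \bar\eta_{ij}^{(B)}$, use $\bar z_B \sim \mathcal{N}(0,\Sigma_z/N)$ and $\bar\eta_{ij}^{(B)} \sim \mathcal{N}(0,2\sigma^2/N)$, and add the variances of these two independent centered Gaussians, exactly as the paper does (the paper additionally remarks that a CLT argument relaxes the Gaussianity of $z_d$, which the stated lemma does not need).
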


\begin{proof}
    Substituting the decomposition from \autoref{eq:model_perf_full} into the definition of the performance gap (\autoref{eq:perf_gap_def}), we obtain:
    \begin{align}
        Y_{ij}^{(B)} &= \frac{1}{N} \sum_{d \in B} \left( (\mu_i - \mu_j) + (\beta_i - \beta_j)^\top z_d + (\epsilon_{i,d} - \epsilon_{j,d}) \right) \\
        &= \Delta_{ij} + \gamma_{ij}^\top \bar{z}_B + \bar{\eta}_{i,j}^{(B)},
    \end{align}
    where $\gamma_{ij} \defeq \beta_i - \beta_j$ denotes the differential sensitivity vector, $\bar{z}_B \defeq \frac{1}{N} \sum_{d \in B} z_d$ is the average latent meta-feature vector of the benchmark, and $\bar{\eta}_{i,j}^{(B)}$ is the averaged noise term defined in \autoref{eq:perf_gap}.
    
    Under \cref{assum:gaussian_meta}, the linear combination of Gaussian random variables remains Gaussian. Specifically, $\bar{z}_B \sim \mathcal{N}(0, \frac{1}{N}\Sigma_z)$. Since the noise terms $\epsilon$ are independent of $z$ (\cref{assum:gaussian_meta}) and independent across datasets, the terms $\gamma_{ij}^\top \bar{z}_B$ and $\bar{\eta}_{i,j}^{(B)}$ are independent Gaussian variables with zero means. The total variance is thus the sum of their variances:
    \begin{align}
        \mathrm{Var}\left(Y_{ij}^{(B)}\right) &= \mathrm{Var}\left(\gamma_{ij}^\top \bar{z}_B\right) + \mathrm{Var}\left(\bar{\eta}_{i,j}^{(B)}\right) \\
        &= \gamma_{ij}^\top \left( \frac{1}{N}\Sigma_z \right) \gamma_{ij} + \frac{2\sigma^2}{N} \\
        &= \frac{1}{N} \left( \gamma_{ij}^\top \Sigma_z \gamma_{ij} + 2\sigma^2 \right).
    \end{align}
    Defining $\nu^2_{ij} \defeq \gamma_{ij}^\top \Sigma_z \gamma_{ij} + 2\sigma^2$ yields the result. Note that even if $z_d$ is not strictly Gaussian, the central limit theorem ensures that $\bar{z}_B$ converges to this distribution for sufficiently large $N$, making the result asymptotically valid under weaker assumptions.
\end{proof}

Starting from \cref{lem:gap_distribution_inductive}, we can follow the same derivations as in \autoref{sec:2_bench} and \autoref{sec:1_bench} to obtain, under Assumptions \ref{assum:dataset_distribution}, \ref{assum:gaussian_noise}, \ref{assum:inductive_bias_model}, and \ref{assum:gaussian_meta}, the four results that follow.

\begin{proposition}[Expected ranking consistency] \label{prop:kendall_tau_2_bench_induc}
Let $\tau_{N,N}$ denote the Kendall-$\tau$ rank correlation between the rankings of $k$ models on two independent benchmarks of size $N$. The expected correlation is given by
    \begin{align}\label{eq:kendall_tau_2_bench_induc}
        \mathbb{E}[\tau_{N,N}] = \frac{2}{k(k-1)} \sum_{1 \le i < j \le k} \mathrm{erf}^2\left( \sqrt{\frac{N}{2}} \cdot \frac{|\Delta_{ij}|}{\nu_{ij}} \right).
    \end{align}
\end{proposition}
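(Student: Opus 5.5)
The plan is to reproduce the proof of \cref{prop:kendall_tau_2_bench}, replacing the noise law of \cref{lem:expectation_sign} by the one from \cref{lem:gap_distribution_inductive}. It has three steps: a generalized sign lemma, independence of the two benchmarks, and linearity of expectation.

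\emph{Step 1: generalized sign lemma.} Fix a pair $(i,j)$. By \cref{lem:gap_distribution_inductive}, $Y_{ij}^{(B)} \sim \mathcal{N}(\Delta_{ij}, \nu_{ij}^2/N)$. Assume without loss of generality that $\Delta_{ij}>0$ (the case $\Delta_{ij}<0$ follows by symmetry, and $\Delta_{ij}=0$ gives both sides zero). Standardizing as in the proof of \cref{lem:expectation_sign} gives
\begin{align}
p_{ij}(N) = \mathbb{P}\left[Y_{ij}^{(B)}>0\right] = \Phi\left(\frac{|\Delta_{ij}|\sqrt{N}}{\nu_{ij}}\right).
\end{align}
The identity $\mathrm{erf}(x)=2\Phi(x\sqrt2)-1$ then yields
\begin{align}
\mathbb{E}\left[\mathrm{sgn}\left(Y_{ij}^{(B)}\right)\right] = \mathrm{sgn}(\Delta_{ij})\,\mathrm{erf}\left(\sqrt{\frac{N}{2}}\cdot\frac{|\Delta_{ij}|}{\nu_{ij}}\right).
\end{align}
This is exactly \cref{lem:expectation_sign} with $2\sigma^2$ replaced by $\nu_{ij}^2$. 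As a sanity check, setting $\beta_i=\beta_j$ gives $\nu_{ij}=\sigma\sqrt2$ and recovers the argument $|\Delta_{ij}|\sqrt N/(2\sigma)$.

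\emph{Step 2: independence across benchmarks.} I need $Y_{ij}^{(B_1)}$ and $Y_{ij}^{(B_2)}$ to be independent. Under \cref{assum:dataset_distribution}, each benchmark consists of its own i.i.d.\ datasets. Each dataset carries its own latent vector $z_d$ and its own noise terms $\epsilon_{\cdot,d}$, and these are independent across datasets by \cref{assum:gaussian_noise} and \cref{assum:gaussian_meta}. Hence $(\bar z_{B_1},\bar\eta^{(B_1)})$ is independent of $(\bar z_{B_2},\bar\eta^{(B_2)})$. Since the $\mu_i$ and $\beta_i$ are deterministic, the two gaps are independent functions of independent variables.

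This is the one step I would state carefully, because it is the only place where the inductive-bias extension could break the argument. The shared quantities $\beta_i$ and $\Sigma_z$ are fixed parameters, not random draws, so they do not couple the two benchmarks. By contrast, within a single benchmark the gaps for different pairs are correlated through the common $\bar z_B$. That correlation does no harm, because only pairwise expectations enter the computation.

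\emph{Step 3: linearity of expectation.} Write $\tau_{N,N}$ as the normalized sum over pairs and apply linearity of expectation. The independence from Step~2 factorizes each term as $\mathbb{E}[\mathrm{sgn}\,Y_{ij}^{(B_1)}]\,\mathbb{E}[\mathrm{sgn}\,Y_{ij}^{(B_2)}]$. Substituting the formula from Step~1 and using $\mathrm{sgn}(\Delta_{ij})^2=1$ (for $\Delta_{ij}\neq 0$) yields \eqref{eq:kendall_tau_2_bench_induc}.
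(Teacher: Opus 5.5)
Your proposal is correct and follows the paper's route. The paper proves this simply by saying it is the same as \cref{prop:kendall_tau_2_bench}: rederive the sign lemma with the variance $\nu_{ij}^2/N$ from \cref{lem:gap_distribution_inductive}, use independence of the two benchmarks, and factorize under linearity of expectation, which are exactly your three steps. Your explicit check that the fixed parameters $\beta_i$ and $\Sigma_z$ do not couple the two benchmarks, and your treatment of the case $\Delta_{ij}=0$, are details the paper leaves implicit.
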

\begin{proof}
    Same as \cref{prop:kendall_tau_2_bench}.
\end{proof}

\begin{corollary}[Asymptotic agreement rate] \label{cor:asymp_2_bench_induc}
    Let $\rho_{ij} \defeq |\Delta_{ij}|/\nu_{ij}$ denote the signal-to-noise ratio for the pair $(i,j)$, and let $\rho_{\min} \defeq \min_{i<j} \rho_{ij}$. As $N \to \infty$, the expected disagreement decays as:
    \begin{align}\label{eq:asymp_kendall_tau_2_bench_induc}
        1 - \mathbb{E}[\tau_{N,N}] \sim \frac{\tilde C}{\sqrt{N}} \exp\left(-\frac{\rho_{\min}^2 N}{2}\right),
    \end{align}
    where $\tilde C = \frac{4\sqrt{2} M_{\min}}{k(k-1)\rho_{\min}\sqrt{\pi}}$ and $M_{\min}$ is the number of pairs achieving the minimum ratio $\rho_{\min}$.
\end{corollary}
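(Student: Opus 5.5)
The plan mirrors the proof of \cref{cor:asymp_2_bench}, with the argument $\frac{|\Delta_{ij}|\sqrt{N}}{2\sigma}$ replaced by $x_{ij}(N) \defeq \sqrt{N/2}\,\rho_{ij}$. Start from the exact expression of \cref{prop:kendall_tau_2_bench_induc}. Since $\frac{2}{k(k-1)}\sum_{i<j} 1 = 1$, it can be rewritten as
\begin{align*}
    1 - \mathbb{E}[\tau_{N,N}] = \frac{2}{k(k-1)} \sum_{i<j} \left(1 - \mathrm{erf}^2\left(x_{ij}(N)\right)\right).
\end{align*}
We assume every $\rho_{ij} > 0$, i.e.\ no ties in $\mu$. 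This is needed for the statement to be meaningful, and it makes $x_{ij}(N)\to\infty$ for every pair.

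Next, apply the expansion \autoref{eq:equiv_erf} and its squared version, which were already derived in the proof of \cref{cor:asymp_2_bench}:
\begin{align*}
    1 - \mathrm{erf}^2(x) = \frac{2e^{-x^2}}{x\sqrt{\pi}} + O\left(\frac{e^{-x^2}}{x^3}\right).
\end{align*}
With $x = \sqrt{N/2}\,\rho_{ij}$ we have $x^2 = \rho_{ij}^2 N/2$. Each pair therefore contributes
\begin{align*}
    \frac{2\sqrt{2}}{\rho_{ij}\sqrt{\pi N}}\exp\left(-\frac{\rho_{ij}^2 N}{2}\right)\bigl(1+O(1/N)\bigr).
\end{align*}

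Then split the sum into two groups. The first group is the $M_{\min}$ pairs with $\rho_{ij} = \rho_{\min}$. The second group is the remaining pairs, which have $\rho_{ij} > \rho_{\min}$. For a pair in the second group, divide its term by $N^{-1/2}e^{-\rho_{\min}^2 N/2}$. The ratio is bounded by a constant times $\exp\left(-(\rho_{ij}^2-\rho_{\min}^2)N/2\right)$, which tends to $0$. There are finitely many pairs, so the whole second group is $o\left(N^{-1/2}e^{-\rho_{\min}^2N/2}\right)$. The first group sums to
\begin{align*}
    M_{\min}\,\frac{2\sqrt{2}}{\rho_{\min}\sqrt{\pi N}}\,e^{-\rho_{\min}^2N/2}\bigl(1+o(1)\bigr).
\end{align*}
Multiplying by $\frac{2}{k(k-1)}$ gives the prefactor $\frac{4\sqrt{2}M_{\min}}{k(k-1)\rho_{\min}\sqrt{\pi}} = \tilde C$. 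This is the claimed equivalence.

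The main obstacle is making the dominance argument rigorous rather than just stating it. The point is that the $O(\cdot)$ remainders are uniform across the finitely many pairs. This holds because each $x_{ij}(N)$ tends to infinity at rate $\sqrt N$ with a fixed coefficient, so a common threshold $N_0$ works for the remainder bounds of all pairs. A secondary check is the constant: squaring $\mathrm{erf}$ doubles the leading correction to $\frac{2e^{-x^2}}{x\sqrt\pi}$, and the factor $1/x = \sqrt{2}/(\rho_{ij}\sqrt N)$ supplies the extra $\sqrt2$. As a sanity check, specializing to $\nu_{ij}^2=2\sigma^2$ (that is, $\beta_i=\beta_j$) gives $\rho_{\min}=\Delta_{\min}/(\sigma\sqrt2)$, and $\tilde C$ then reduces to the constant $C$ of \cref{cor:asymp_2_bench}.
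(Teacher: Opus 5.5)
Your proposal is correct and follows the same route as the paper. You plug the expansion $1-\mathrm{erf}^2(x) = \frac{2e^{-x^2}}{x\sqrt{\pi}} + O(x^{-3}e^{-x^2})$ at $x_{ij}=\rho_{ij}\sqrt{N/2}$ into \cref{prop:kendall_tau_2_bench_induc} and keep only the $M_{\min}$ slowest-decaying terms, exactly as the paper does. Your explicit split into minimal and non-minimal pairs, the uniformity remark, and the assumption $\rho_{\min}>0$ (no ties) make rigorous what the paper states in one line, and your constant computation and reduction of $\tilde C$ to $C$ are both correct.
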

\begin{proof}
    We apply the asymptotic expansion $\mathrm{erf}^2(x) = 1 - \frac{2e^{-x^2}}{x\sqrt{\pi}} + O(x^{-3}e^{-x^2})$ for $x \to \infty$ to \autoref{eq:kendall_tau_2_bench_induc}, with argument $x_{ij} = \rho_{ij}\sqrt{N/2}$. The summation is dominated by the terms with the slowest exponential decay, which corresponds to the smallest coefficient $\rho_{ij}$. Unlike the homoskedastic setting in \cref{cor:asymp_2_bench}, where the convergence rate was dictated by the performance gap $\Delta_{\min}$, here it is governed by the signal-to-noise ratio $\rho_{\min}$. This implies that a pair of models with a large performance gap $\Delta_{ij}$ may still be the bottleneck for ranking convergence if their relative performance has a high variance (high $\nu_{ij}$) due to different inductive biases (high $\gamma_{ij}$).
\end{proof}

\begin{proposition}[Expected convergence rate to oracle] \label{prop:kendall_tau_1_bench_induc}
Let $\tau_{N,\infty}$ be the Kendall-$\tau$ rank correlation between the ranking of $k$ models observed on a benchmark of size $N$ and the oracle ranking. The expected correlation is given by:
    \begin{align}\label{eq:kendall_tau_1_bench_induc}
        \mathbb{E}[\tau_{N,\infty}] = \frac{2}{k(k-1)} \sum_{i<j} \mathrm{erf}\left( \sqrt{\frac{N}{2}} \cdot \frac{|\Delta_{ij}|}{\nu_{ij}} \right).
    \end{align}
\end{proposition}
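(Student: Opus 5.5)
The plan mirrors the proof of \cref{prop:kendall_tau_1_bench}, with \cref{lem:gap_distribution_inductive} replacing the homoskedastic gap law \autoref{eq:perf_gap_law}. First, by linearity of expectation applied to the definition of $\tau_{N,\infty}$,
\begin{align}
    \mathbb{E}[\tau_{N,\infty}] = \frac{2}{k(k-1)} \sum_{i<j} \mathrm{sgn}(\Delta_{ij})\, \mathbb{E}\left[ \mathrm{sgn}\left(Y_{ij}^{(B)}\right) \right],
\end{align}
since $\mathrm{sgn}(\Delta_{ij})$ is deterministic. It therefore suffices to recompute the analogue of \cref{lem:expectation_sign} for each pair under the new variance.

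For a fixed pair $(i,j)$, assume without loss of generality that $\Delta_{ij}>0$; the case $\Delta_{ij}<0$ follows by swapping $i$ and $j$. We implicitly assume $\Delta_{ij}\neq 0$, as the paper does throughout. By \cref{lem:gap_distribution_inductive}, $Y_{ij}^{(B)} \sim \mathcal{N}(\Delta_{ij}, \nu_{ij}^2/N)$. Standardizing gives the probability of a correctly ordered pair:
\begin{align}
    p_{ij}(N) = \mathbb{P}\left[Y_{ij}^{(B)}>0\right] = \Phi\left(\frac{|\Delta_{ij}|\sqrt{N}}{\nu_{ij}}\right).
\end{align}
Exactly as in \autoref{eq:expectation_sign}, the product $\mathrm{sgn}(Y_{ij}^{(B)})\,\mathrm{sgn}(\Delta_{ij})$ equals $1$ with probability $p_{ij}(N)$ and $-1$ otherwise, so its expectation is $2p_{ij}(N)-1$. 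Applying the identity $\mathrm{erf}(x)=2\Phi(x\sqrt2)-1$ with $x\sqrt{2} = |\Delta_{ij}|\sqrt{N}/\nu_{ij}$, that is $x = \sqrt{N/2}\,|\Delta_{ij}|/\nu_{ij}$, gives
\begin{align}
    2p_{ij}(N)-1 = \mathrm{erf}\left(\sqrt{\frac{N}{2}}\cdot\frac{|\Delta_{ij}|}{\nu_{ij}}\right).
\end{align}
Summing over pairs yields \autoref{eq:kendall_tau_1_bench_induc}.

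There is no real obstacle here. The only points needing care are these:
\begin{itemize}
    \item \emph{Nondegenerate variance.} We need $\nu_{ij}>0$, which holds because $\nu_{ij}^2 \ge 2\sigma^2$ and $\sigma>0$.
    \item \emph{Consistency check.} The argument of $\mathrm{erf}$ should reduce to the homoskedastic case. Setting $\gamma_{ij}=0$ gives $\nu_{ij}=\sigma\sqrt2$, so $\sqrt{N/2}\,|\Delta_{ij}|/(\sigma\sqrt2) = |\Delta_{ij}|\sqrt N/(2\sigma)$, which recovers \cref{prop:kendall_tau_1_bench}.
    \item \emph{Independence.} No cross-pair independence is needed, since only linearity of expectation is used. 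The correlations between different $Y_{ij}^{(B)}$ induced by the shared $\bar z_B$ therefore do not matter.
\end{itemize}
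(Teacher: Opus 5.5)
Your proposal is correct and follows the paper's route: the paper's proof is simply ``same as \cref{prop:kendall_tau_1_bench}''. That means linearity of expectation combined with the analogue of \cref{lem:expectation_sign}, using the variance $\nu_{ij}^2/N$ from \cref{lem:gap_distribution_inductive} in place of $2\sigma^2/N$, which is exactly what you carry out. Your side checks — $\nu_{ij}>0$, reduction to the homoskedastic case when $\gamma_{ij}=0$, and that no cross-pair independence is needed — are all accurate.
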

\begin{proof}
    Same as \cref{prop:kendall_tau_1_bench}.
\end{proof}

\begin{corollary}[Asymptotic convergence rate to the oracle] \label{cor:asymp_1_bench_induc}
    As $N \to \infty$, the expected disagreement $1 - \mathbb{E}[\tau_{N,\infty}]$ between the ranking on a benchmark of size $N$ and the oracle ranking decays as:
    \begin{align}\label{eq:asymp_kendall_tau_1_bench_induc}
        1 - \mathbb{E}[\tau_{N,\infty}] \sim \frac{\tilde C}{2\sqrt{N}} \exp\left(-\frac{\rho_{\min}^2 N}{2}\right),
    \end{align}
    where $\tilde C$ and $\rho_{\min}$ are the constants defined in \cref{cor:asymp_2_bench_induc}.
\end{corollary}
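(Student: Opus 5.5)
The plan is to argue exactly as in \cref{cor:asymp_1_bench}, but starting from \cref{prop:kendall_tau_1_bench_induc} instead of \cref{prop:kendall_tau_1_bench}. Write $x_{ij} \defeq \rho_{ij}\sqrt{N/2}$ with $\rho_{ij} = |\Delta_{ij}|/\nu_{ij}$, so that \autoref{eq:kendall_tau_1_bench_induc} reads $\mathbb{E}[\tau_{N,\infty}] = \frac{2}{k(k-1)}\sum_{i<j}\mathrm{erf}(x_{ij})$. Since the normalizing factor times the number of pairs is $1$, we can rewrite
\begin{align}
    1-\mathbb{E}[\tau_{N,\infty}] = \frac{2}{k(k-1)}\sum_{i<j}\bigl(1-\mathrm{erf}(x_{ij})\bigr).
\end{align}

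Next, apply the expansion \autoref{eq:equiv_erf} to each term: $1-\mathrm{erf}(x) = \frac{e^{-x^2}}{x\sqrt{\pi}}(1+O(x^{-2}))$. This requires every $x_{ij}\to\infty$, so we must assume $\Delta_{ij}\neq 0$ for all pairs; otherwise $\rho_{\min}=0$ and the statement degenerates. We also need $\nu_{ij}>0$, which holds because $\nu_{ij}^2 \ge 2\sigma^2 > 0$. With $x_{ij}^2 = \rho_{ij}^2 N/2$, each term becomes
\begin{align}
    1-\mathrm{erf}(x_{ij}) = \frac{\sqrt{2}}{\rho_{ij}\sqrt{\pi N}}\exp\left(-\frac{\rho_{ij}^2 N}{2}\right)\bigl(1+O(N^{-1})\bigr).
\end{align}

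Then split the sum into the $M_{\min}$ pairs with $\rho_{ij}=\rho_{\min}$ and the remaining pairs. For a remaining pair, $\rho_{ij}>\rho_{\min}$, so its ratio to a minimal term is $O\bigl(\exp(-(\rho_{ij}^2-\rho_{\min}^2)N/2)\bigr)\to 0$. This finite collection of terms is therefore $o$ of the leading term. The minimal pairs together contribute
\begin{align}
    \frac{2}{k(k-1)}\cdot M_{\min}\cdot \frac{\sqrt{2}}{\rho_{\min}\sqrt{\pi N}}\, e^{-\rho_{\min}^2 N/2} = \frac{2\sqrt{2}M_{\min}}{k(k-1)\rho_{\min}\sqrt{\pi}}\cdot\frac{1}{\sqrt{N}}\, e^{-\rho_{\min}^2 N/2}.
\end{align}
The constant here is $\tilde C/2$ with $\tilde C = \frac{4\sqrt{2}M_{\min}}{k(k-1)\rho_{\min}\sqrt{\pi}}$, which gives the claim.

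As a consistency check, the factor $1/2$ relative to \cref{cor:asymp_2_bench_induc} comes from $1-\mathrm{erf}^2(x)\sim 2(1-\mathrm{erf}(x))$. There is no genuine obstacle in this argument. The only care needed is the bookkeeping of constants, namely that $1/x_{ij} = \sqrt{2}/(\rho_{ij}\sqrt{N})$, and the justification that the non-minimal pairs are negligible, which holds because the number of pairs $k(k-1)/2$ is finite and fixed.
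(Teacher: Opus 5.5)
Your proposal is correct and follows essentially the same route as the paper: substitute the tail expansion of $\mathrm{erf}$ into the formula of \cref{prop:kendall_tau_1_bench_induc} and keep only the pairs attaining $\rho_{\min}$. The only difference is that you compute the constant $\tilde C/2$ directly (your bookkeeping $1/x_{ij}=\sqrt{2}/(\rho_{ij}\sqrt{N})$ checks out), whereas the paper gets the factor $1/2$ by comparison with \cref{cor:asymp_2_bench_induc} via $1-\mathrm{erf}^2(x)\approx 2(1-\mathrm{erf}(x))$, which you use only as a consistency check.
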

\begin{proof}
    Applying the asymptotic expansion of the error function $\mathrm{erf}(x) \approx 1 - \frac{e^{-x^2}}{x\sqrt{\pi}}$ to \autoref{eq:kendall_tau_1_bench_induc} yields a summation dominated by the terms corresponding to the minimum signal-to-noise ratio $\rho_{\min}$. Comparing this to the expansion of $1 - \mathbb{E}[\tau_{N,N}]$ derived in \cref{cor:asymp_2_bench_induc}, which relies on $1 - \mathrm{erf}^2(x) \approx 2(1 - \mathrm{erf}(x))$, we observe that the expected disagreement with the oracle is asymptotically half the expected disagreement between two independent benchmarks. The exponential decay rate remains controlled by $\rho_{\min}$.
\end{proof}

\subsection{Disagreement at position 1: identifying the best model}
\label{app:position1}
 
The Kendall-$\tau$ analysis characterises the ranking disagreement across \emph{all} model pairs.
A practically more direct question is: how often do two independent benchmarks
disagree on \emph{which model ranks first}?
We derive a bound on this probability and compare its convergence rate
to that of Kendall-$\tau$.
 
Let $i^* \defeq \operatorname*{arg\,max}_i \mu_i$ be the oracle best model and
$\Delta_1 \defeq \mu_{i^*} - \max_{j \neq i^*} \mu_j > 0$ the margin between
the best and second-best model.
 
\begin{definition}[Position-1 disagreement]
\label{def:pos1}
Let $R_N$ and $R'_N$ be rankings produced by two independent benchmarks
$B, B'$ of size $N$.
The \emph{position-1 disagreement event} is
\[
  \varepsilon_1(N) \;\defeq\;
  \Bigl\{\,\operatorname*{arg\,max}_i \,\bar{X}_i^{(B)} \;\neq\; \operatorname*{arg\,max}_i \,\bar{X}_i^{(B')}\,\Bigr\}.
\]
\end{definition}
 
\begin{proposition}[Position-1 disagreement probability]
\label{prop:pos1}
Assumptions \ref{assum:dataset_distribution}-- \ref{assum:gaussian_noise},
\[
  \mathbb{P}\bigl[\varepsilon_1(N)\bigr]
  \;\leq\;
  2(k-1)\;\Phi\!\left(-\frac{\Delta_1\sqrt{N}}{\sigma\sqrt{2}}\right),
\]
where $\Phi$ is the standard normal CDF and $k$ is the number of models.
\end{proposition}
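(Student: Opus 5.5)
The plan is to reduce position-1 disagreement to the event that at least one of the two benchmarks fails to identify the oracle best model $i^*$, and then to control each such failure with a union bound over pairwise comparisons against $i^*$.

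\emph{Reduction.} If both benchmarks place $i^*$ first, they agree at position 1. Hence
\[
  \varepsilon_1(N) \subseteq \Bigl\{\operatorname*{arg\,max}_i \bar X_i^{(B)} \neq i^*\Bigr\} \cup \Bigl\{\operatorname*{arg\,max}_i \bar X_i^{(B')} \neq i^*\Bigr\}.
\]
The two events on the right have the same probability, because $B$ and $B'$ are identically distributed by \cref{assum:dataset_distribution}. It therefore suffices to show that
\[
  \mathbb{P}\Bigl[\operatorname*{arg\,max}_i \bar X_i^{(B)} \neq i^*\Bigr] \le (k-1)\,\Phi\!\left(-\frac{\Delta_1\sqrt{N}}{\sigma\sqrt{2}}\right).
\]
Independence of $B$ and $B'$ is not needed for the upper bound.

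\emph{Union bound.} Ties occur with probability zero under Gaussian noise, so they can be ignored. The best model is misidentified only if some $j\neq i^*$ satisfies $\bar X_j^{(B)} \ge \bar X_{i^*}^{(B)}$, that is, $Y_{i^*j}^{(B)} \le 0$. A union bound over the $k-1$ competitors reduces the task to bounding each $\mathbb{P}[Y_{i^*j}^{(B)}\le 0]$.

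\emph{Pairwise term.} In the proof of \cref{lem:expectation_sign} we have $Y_{i^*j}^{(B)}\sim\mathcal N(\Delta_{i^*j},2\sigma^2/N)$ and $\Delta_{i^*j}>0$. It follows that
\[
  \mathbb{P}\bigl[Y_{i^*j}^{(B)}\le 0\bigr] = 1-p_{i^*j}(N) = \Phi\!\left(-\frac{\Delta_{i^*j}\sqrt N}{\sigma\sqrt2}\right).
\]
Since $\Delta_{i^*j} = \mu_{i^*}-\mu_j \ge \Delta_1$ by definition of the margin, and $\Phi$ is increasing, each term is at most $\Phi\bigl(-\Delta_1\sqrt N/(\sigma\sqrt2)\bigr)$. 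Summing over the $k-1$ competitors and doubling for the two benchmarks gives the stated bound.

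The argument has no substantive obstacle. The only step needing care is the first inclusion, together with the remark that ties have measure zero, so that strict versus non-strict inequalities do not matter.
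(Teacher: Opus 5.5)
Your proposal is correct and follows the paper's proof essentially step for step: the same inclusion of $\varepsilon_1(N)$ into the union of the two events ``the benchmark does not rank $i^*$ first'', the same union bound over the $k-1$ rivals of $i^*$, and the same Gaussian pairwise tail, bounded using $\Delta_{i^*j}\ge\Delta_1$ and the monotonicity of $\Phi$. Your two extra remarks are correct refinements that the paper leaves implicit: ties occur with probability zero, and the union bound uses only that $B$ and $B'$ are identically distributed, not that they are independent.
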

 
\begin{proof}
\textbf{Step 1.}
The event $\varepsilon_1(N)$ requires at least one benchmark to rank the wrong
model first, so
\[
  \varepsilon_1(N)
  \;\subseteq\;
  \bigl\{\operatorname*{arg\,max}_i \,\bar{X}_i^{(B)} \neq i^*\bigr\}
  \;\cup\;
  \bigl\{\operatorname*{arg\,max}_i \,\bar{X}_i^{(B')}  \neq i^*\bigr\}.
\]
Since $B$ and $B'$ are i.i.d., both events have equal probability, and
by the union bound:
\[
  \mathbb{P}[\varepsilon_1(N)]
  \;\leq\;
  2\,\mathbb{P}\bigl[\operatorname*{arg\,max}_B \neq i^*\bigr].
\]
 
\textbf{Step 2.}
The best model fails to rank first on $B$ iff at least one rival beats it:
\[
  \bigl\{\operatorname*{arg\,max}_i \,\bar{X}_i^{(B)} \neq i^*\bigr\}
  \;=\;
  \bigcup_{j \neq i^*}
  \bigl\{\bar{X}_j^{(B)} > \bar{X}_{i^*}^{(B)}\bigr\}.
\]
A second union bound over the $k-1$ rivals gives
\[
  \mathbb{P}\bigl[\operatorname*{arg\,max}_i \,\bar{X}_i^{(B)} \neq i^*\bigr]
  \;\leq\;
  \sum_{j \neq i^*}
  \mathbb{P}\bigl[\bar{X}_j^{(B)} > \bar{X}_{i^*}^{(B)}\bigr].
\]
 
\textbf{Step 3.}
Under Assumption~\ref{assum:gaussian_noise},
$\bar{X}_{i^*}^{(B)} - \bar{X}_j^{(B)} \sim \mathcal{N}(\Delta_{i^*j},\, 2\sigma^2/N)$,
where $\Delta_{i^*j} = \mu_{i^*} - \mu_j$.
Standardising:
\[
  \mathbb{P}\bigl[\bar{X}_j^{(B)} > \bar{X}_{i^*}^{(B)}\bigr]
  \;=\;
  \Phi\!\left(-\frac{\Delta_{i^*j}\sqrt{N}}{\sigma\sqrt{2}}\right).
\]
Since $\Delta_{i^*j} \geq \Delta_1$ for all $j \neq i^*$, and $\Phi$ is
increasing, a larger gap yields a more negative argument and thus a smaller
value:
\[
  \Phi\!\left(-\frac{\Delta_{i^*j}\sqrt{N}}{\sigma\sqrt{2}}\right)
  \;\leq\;
  \Phi\!\left(-\frac{\Delta_1\sqrt{N}}{\sigma\sqrt{2}}\right).
\]
Summing over $k-1$ rivals and combining with Steps~1--2 gives the result.
\end{proof}
 
\begin{corollary}[Asymptotic decay of position-1 disagreement]
\label{cor:pos1-decay}
As $N\to\infty$,
\[
  \mathbb{P}[\varepsilon_1(N)]
  \;\leq\;
  C_1 \cdot N^{-1/2} \cdot \exp\!\left(-\frac{\Delta_1^2 N}{4\sigma^2}\right),
  \qquad
  C_1 = \frac{2(k-1)\,\sigma}{\Delta_1\sqrt{\pi}},
\]
obtained from the Gaussian tail bound
$\Phi(-t) \leq e^{-t^2/2}/(t\sqrt{2\pi})$ \citep{vershynin2018}
applied with $t = \Delta_1\sqrt{N}/(\sigma\sqrt{2})$.
\end{corollary}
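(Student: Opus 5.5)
The corollary is a direct consequence of \cref{prop:pos1} combined with a standard Gaussian tail bound. So the plan is simply to bound the single $\Phi$ term appearing there. \Cref{prop:pos1} gives
\[
  \mathbb{P}[\varepsilon_1(N)] \le 2(k-1)\,\Phi(-t_N), \qquad t_N \defeq \frac{\Delta_1\sqrt{N}}{\sigma\sqrt{2}},
\]
and $\Delta_1>0$ by definition of $i^*$, so $t_N>0$ for every $N\ge 1$.

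The second step is to apply the Mills-ratio bound $\Phi(-t)\le e^{-t^2/2}/(t\sqrt{2\pi})$, valid for all $t>0$. If a self-contained justification is wanted, I would write
\[
  \Phi(-t)=\int_t^\infty \frac{e^{-u^2/2}}{\sqrt{2\pi}}\,du \le \int_t^\infty \frac{u}{t}\,\frac{e^{-u^2/2}}{\sqrt{2\pi}}\,du = \frac{e^{-t^2/2}}{t\sqrt{2\pi}},
\]
using $u/t\ge 1$ on the domain of integration.

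Substituting $t=t_N$ gives $t_N^2/2=\Delta_1^2N/(4\sigma^2)$ and $1/(t_N\sqrt{2\pi}) = \sigma\sqrt{2}/(\Delta_1\sqrt{N}\sqrt{2\pi}) = \sigma/(\Delta_1\sqrt{\pi N})$. Multiplying by $2(k-1)$ yields exactly $C_1 N^{-1/2}\exp(-\Delta_1^2N/(4\sigma^2))$ with $C_1=2(k-1)\sigma/(\Delta_1\sqrt{\pi})$.

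There is no real obstacle. The only points to check are the constant bookkeeping (the $\sqrt 2$ factors cancel as shown) and that the bound holds for every $N\ge1$, not only asymptotically; the phrase ``as $N\to\infty$'' just emphasizes the decay rate. Finally, I would remark that this rate matches the exponent of \cref{cor:asymp_2_bench}, now governed by $\Delta_1$ instead of $\Delta_{\min}$. Since $\Delta_1\ge\Delta_{\min}$, position-1 disagreement decays at least as fast as the Kendall-$\tau$ disagreement.
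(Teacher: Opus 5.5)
Your proposal is correct and follows the same route as the paper: apply the Mills-ratio bound $\Phi(-t)\le e^{-t^2/2}/(t\sqrt{2\pi})$ with $t=\Delta_1\sqrt{N}/(\sigma\sqrt{2})$ to the single $\Phi$ term of \cref{prop:pos1}, and your constant bookkeeping recovers $C_1$ exactly. Your added remarks (the one-line derivation of the tail bound, and the fact that the bound holds for every $N\ge 1$ rather than only asymptotically) are both accurate.
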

 
\paragraph{Comparison with Kendall-$\tau$ convergence.}
Corollary~\ref{cor:asymp_2_bench} showed that Kendall-$\tau$ disagreement
decays with exponent $-\Delta_{\min}^2/(4\sigma^2)$, where
$\Delta_{\min} = \min_{i < j}|\Delta_{ij}|$ is the smallest gap over
\emph{all} model pairs.
Since $\Delta_1$ is a gap between $i^*$ and its nearest rival,
whereas $\Delta_{\min}$ is the smallest gap between any two models in the entire set (which may be achieved by two closely matched mid-ranking models rather than by the top pair) we have $\Delta_1 \geq \Delta_{\min}$.
Setting each bound equal to a threshold $\varepsilon$ and solving,
the required benchmark sizes satisfy
\[
  N^*_{\text{pos-1}}
  \;\propto\;
  \frac{4\sigma^2}{\Delta_1^2}\log\frac{1}{\varepsilon},
  \qquad
  N^*_{\text{Kendall}}
  \;\propto\;
  \frac{4\sigma^2}{\Delta_{\min}^2}\log\frac{1}{\varepsilon},
  \qquad
  \frac{N^*_{\text{pos-1}}}{N^*_{\text{Kendall}}}
  = \left(\frac{\Delta_{\min}}{\Delta_1}\right)^2 \leq 1.
\]
Position-1 identification is therefore guaranteed at a smaller benchmark
size than full ranking stability, by a factor of $(\Delta_{\min}/\Delta_1)^2$.

\begin{remark}[Tightness of the bound]
The union bound in Proposition~\ref{prop:pos1} is not tight: it treats all
$k-1$ rivals as if each had probability $\Phi(-\Delta_1\sqrt{N}/(\sigma\sqrt{2}))$
of beating $i^*$, whereas rivals with larger gaps $\Delta_{i^*j} \gg \Delta_1$
contribute negligibly.
The bound suffices to establish the exponential decay rate and the
qualitative comparison with Kendall-$\tau$.
\end{remark}

\subsection{Top-1 disagreement under inductive biases}
\label{app:position1-bias}
 
Section~\ref{app:position1} assumed that the difficulty of distinguishing
$i^*$ from a rival $j$ is determined solely by their mean gap $\Delta_{i^*j}$,
with all model pairs sharing the same noise level $\sigma$.
This assumption breaks down when models have inductive biases: systematic
tendencies to perform well on certain dataset types and poorly on others.
Two models with different inductive biases will have a performance difference
that varies \emph{systematically} across datasets, not just randomly.
 
Lemma~\ref{lem:gap_distribution_inductive} showed that the sample average
difference satisfies
\[
  \bar{X}_{i^*}^{(B)} - \bar{X}_j^{(B)}
  \;\sim\;
  \mathcal{N}\!\left(\Delta_{i^*j},\;\frac{\nu_{i^*j}^2}{N}\right),
\]
where $\nu_{i^*j}^2 = (\beta_{i^*}-\beta_j)^\top \Sigma_z (\beta_{i^*}-\beta_j) + 2\sigma^2$.
The first term captures inductive bias variability: it is large when $i^*$ and $j$
respond differently to dataset characteristics ($\beta_{i^*} \neq \beta_j$) and
when the benchmark spans diverse dataset types (large $\Sigma_z$).
The second term $2\sigma^2$ is the pure noise contribution from Section~\ref{app:position1}.
Critically, $\nu_{i^*j}^2$ differs across pairs: a rival whose biases mirror
$i^*$'s recovers $\nu_{i^*j}^2 \approx 2\sigma^2$, while a rival with
very different biases has $\nu_{i^*j}^2 \gg 2\sigma^2$.
The probability that rival $j$ beats $i^*$ on benchmark $B$ is
$\Phi(-\Delta_{i^*j}\sqrt{N}/\nu_{i^*j})$, which depends on $\Delta_{i^*j}$
and $\nu_{i^*j}$ only through their ratio $\rho_{i^*j} = \Delta_{i^*j}/\nu_{i^*j}$.
A large mean gap $\Delta_{i^*j}$ does not therefore guarantee a small error
probability if $\nu_{i^*j}$ is comparably large: what governs the difficulty
of beating rival $j$ is whether the gap is large \emph{relative to} the
variability of their performance difference across datasets.
 
\begin{definition}[Per-pair SNR for top-1]
\label{def:snr-pos1}
For each rival $j \neq i^*$, define the signal-to-noise ratio
$\rho_{i^*j} = \Delta_{i^*j} / \nu_{i^*j}$,
and let $\rho_1 \defeq \min_{j \neq i^*} \rho_{i^*j}$ be the minimum SNR
over all rivals of $i^*$.
\end{definition}
 
The rival $j^\dagger$ achieving $\rho_1$ is the hardest to reliably beat at
position~1.
Unlike Section~\ref{app:position1}, $j^\dagger$ need not be the second-best
model in expectation.
To see why, suppose the second-best model $j_1$ has gap $\Delta_{i^*j_1} = 0.05$
and similar biases to $i^*$, giving $\nu_{i^*j_1} = \sigma\sqrt{2}$ and
$\rho_{i^*j_1} = 0.05/(\sigma\sqrt{2})$.
A weaker model $j_2$ has $\Delta_{i^*j_2} = 0.20$ but very different biases,
giving $\nu_{i^*j_2} = 5\sigma\sqrt{2}$ and
$\rho_{i^*j_2} = 0.04/(\sigma\sqrt{2}) < \rho_{i^*j_1}$.
Despite being further behind on average, $j_2$ is harder to reliably beat
because its performance relative to $i^*$ swings widely across datasets.
 
\begin{proposition}[Position-1 disagreement under inductive biases]
\label{prop:pos1-bias}
Under Assumptions~\ref{assum:dataset_distribution}--\ref{assum:gaussian_meta},
\[
  \mathbb{P}[\varepsilon_1(N)]
  \;\leq\;
  2(k-1)\;\Phi\!\left(-\rho_1\sqrt{N}\right).
\]
\end{proposition}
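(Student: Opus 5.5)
The plan is to reproduce the three-step argument of \cref{prop:pos1}, replacing the homoskedastic gap distribution by the one in \cref{lem:gap_distribution_inductive}.

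\textbf{Step 1} is unchanged. The event $\varepsilon_1(N)$ requires that at least one of $B$, $B'$ places a model other than $i^*$ first. Because $B$ and $B'$ are i.i.d.\ (\cref{assum:dataset_distribution}), the union bound gives
$\mathbb{P}[\varepsilon_1(N)] \le 2\,\mathbb{P}[\operatorname*{arg\,max}_i \bar X_i^{(B)} \neq i^*]$.

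\textbf{Step 2} is also unchanged. Write the failure event as $\bigcup_{j\neq i^*}\{\bar X_j^{(B)} > \bar X_{i^*}^{(B)}\}$ and apply a second union bound over the $k-1$ rivals. Neither step uses any distributional assumption, so both carry over verbatim.

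\textbf{Step 3} changes. By \cref{lem:gap_distribution_inductive}, under Assumptions~\ref{assum:dataset_distribution}--\ref{assum:gaussian_meta} (with \cref{assum:inductive_bias_model} replacing \cref{assum:perf_model}) we have
$Y_{i^*j}^{(B)} \sim \mathcal{N}(\Delta_{i^*j}, \nu_{i^*j}^2/N)$.
Standardising gives
\[
\mathbb{P}\bigl[\bar X_j^{(B)} > \bar X_{i^*}^{(B)}\bigr] = \Phi\!\left(-\frac{\Delta_{i^*j}\sqrt N}{\nu_{i^*j}}\right) = \Phi\!\left(-\rho_{i^*j}\sqrt N\right).
\]
Here we need $\Delta_{i^*j}>0$, which holds because $i^*$ is the unique maximiser of $\mu_i$ (so $\Delta_1>0$). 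We also need $\nu_{i^*j}>0$, which holds because $\nu_{i^*j}^2 \ge 2\sigma^2>0$. By \cref{def:snr-pos1}, $\rho_{i^*j}\ge\rho_1$ for every rival $j$, and $\Phi$ is increasing, so each term is at most $\Phi(-\rho_1\sqrt N)$. Summing over the $k-1$ rivals and multiplying by the factor $2$ from Step 1 yields the claim.

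The only point needing care is the choice of distribution in Step 3. Under the inductive-bias model, the rivals' gaps $Y_{i^*j}^{(B)}$ are correlated through the shared $\bar z_B$, and the pairwise variances differ. The union bound only requires the marginal law of each pairwise gap, so this correlation is harmless. Had a sharper joint argument been attempted instead, it would have been the main obstacle.

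A notational check on the constant: in the homoskedastic case $\nu=\sigma\sqrt2$, and $\Phi(-\rho_1\sqrt N)$ reduces exactly to $\Phi(-\Delta_1\sqrt N/(\sigma\sqrt2))$, consistent with \cref{prop:pos1}. The scaling here is $\sqrt N$, not the $\sqrt{N/2}$ that appears inside $\mathrm{erf}$ in \cref{prop:kendall_tau_2_bench_induc}.
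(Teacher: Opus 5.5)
Your proposal is correct and follows the paper's proof essentially verbatim. You reuse Steps 1--2 from \cref{prop:pos1}, substitute the law $\mathcal{N}(\Delta_{i^*j},\nu_{i^*j}^2/N)$ from \cref{lem:gap_distribution_inductive} in Step 3, and bound each term by $\Phi(-\rho_1\sqrt{N})$; your extra remarks (the union bound needs only marginal laws despite the correlation through $\bar z_B$, and $\nu_{i^*j}\ge\sigma\sqrt{2}>0$) are correct points the paper leaves implicit.
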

 
\begin{proof}
Steps~1--2 are identical to Proposition~\ref{prop:pos1}.
In Step~3, Lemma~\ref{lem:gap_distribution_inductive} gives
$\bar{X}_{i^*}^{(B)} - \bar{X}_j^{(B)} \sim \mathcal{N}(\Delta_{i^*j}, \nu_{i^*j}^2/N)$,
so standardising with standard deviation $\nu_{i^*j}/\sqrt{N}$ yields
\[
  \mathbb{P}\bigl[\bar{X}_j^{(B)} > \bar{X}_{i^*}^{(B)}\bigr]
  \;=\;
  \Phi(-\rho_{i^*j}\sqrt{N}).
\]
Since $\rho_{i^*j} \geq \rho_1$ for all $j \neq i^*$ and $\Phi$ is increasing,
$\Phi(-\rho_{i^*j}\sqrt{N}) \leq \Phi(-\rho_1\sqrt{N})$.
Summing over $k-1$ rivals and applying Step~1 gives the result.
\end{proof}
 
\begin{corollary}[Asymptotic decay under inductive biases]
\label{cor:pos1-bias-decay}
As $N \to \infty$,
\[
  \mathbb{P}[\varepsilon_1(N)]
  \;\leq\;
  \tilde{C}_1 \cdot N^{-1/2} \cdot \exp\!\left(-\frac{\rho_1^2 N}{2}\right),
  \qquad
  \tilde{C}_1 = \frac{2(k-1)}{\rho_1\sqrt{2\pi}}.
\]
The decay exponent $-\rho_1^2/2$ reduces to $-\Delta_1^2/(4\sigma^2)$ of
Corollary~\ref{cor:pos1-decay} when $\nu_{i^*j^\dagger} = \sigma\sqrt{2}$
(no inductive bias), and is smaller whenever the inductive bias inflates
$\nu_{i^*j^\dagger}$.
\end{corollary}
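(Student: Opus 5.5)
The plan is to start from Proposition~\ref{prop:pos1-bias}, which already gives
\[
  \mathbb{P}[\varepsilon_1(N)] \le 2(k-1)\,\Phi\!\left(-\rho_1\sqrt{N}\right),
\]
and then apply the same Gaussian tail inequality invoked in Corollary~\ref{cor:pos1-decay}. That inequality is $\Phi(-t)\le e^{-t^2/2}/(t\sqrt{2\pi})$ for $t>0$ \citep{vershynin2018}. Here we take $t=\rho_1\sqrt{N}$. The only prerequisite is $\rho_1>0$, which holds because every $\Delta_{i^*j}\ge\Delta_1>0$ and every $\nu_{i^*j}\ge\sigma\sqrt{2}>0$ (the quadratic form in $\nu^2_{i^*j}$ is nonnegative since $\Sigma_z$ is a covariance matrix). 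Hence $t>0$ for every $N\ge1$, so the bound is non-asymptotic, and the phrase ``as $N\to\infty$'' is only interpretive.

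Substituting gives
\[
  \Phi(-\rho_1\sqrt N)\le \frac{e^{-\rho_1^2N/2}}{\rho_1\sqrt{N}\sqrt{2\pi}}.
\]
Multiplying by $2(k-1)$ yields the stated bound with $\tilde C_1=2(k-1)/(\rho_1\sqrt{2\pi})$ and the factor $N^{-1/2}$. For completeness I would sketch the tail bound itself: for $x\ge t$ we have $x/t\ge1$, so $\int_t^\infty \varphi(x)\,dx\le \frac1t\int_t^\infty x\varphi(x)\,dx=\varphi(t)/t$.

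For the consistency remark, I would check that in the no-bias case $\nu_{i^*j^\dagger}=\sigma\sqrt2$ we get $\rho_1=\Delta_1/(\sigma\sqrt2)$. This holds provided the minimizing rival is then the second-best model, which is true because all $\nu$ are equal and so $\rho$ is monotone in $\Delta$. It follows that $\rho_1^2/2=\Delta_1^2/(4\sigma^2)$ and $\tilde C_1=2(k-1)\sigma/(\Delta_1\sqrt{\pi})=C_1$, recovering Corollary~\ref{cor:pos1-decay}.

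The final claim is that the exponent is smaller whenever bias inflates $\nu$. This follows from $\nu_{i^*j}\ge\sigma\sqrt2$ for all $j$, which gives $\rho_{i^*j}\le\Delta_{i^*j}/(\sigma\sqrt2)$. Taking the minimum over $j$, and noting that the pure-noise minimum is attained at the second-best model, gives $\rho_1\le\Delta_1/(\sigma\sqrt2)$. The inequality is strict if inflation occurs at the pair that attains the pure-noise minimum. The main subtlety is stating this comparison precisely, namely comparing against the pure-noise $\rho$ with the same $\Delta$'s; the analytic steps themselves are routine.
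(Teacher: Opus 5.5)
Your proposal is correct and takes the same route the paper does (by analogy with Corollary~\ref{cor:pos1-decay}): you apply the tail bound $\Phi(-t)\le e^{-t^2/2}/(t\sqrt{2\pi})$ with $t=\rho_1\sqrt{N}$ to Proposition~\ref{prop:pos1-bias}, then get the reduction and the comparison from $\nu_{i^*j}\ge\sigma\sqrt{2}$, i.e.\ $\rho_1\le\Delta_1/(\sigma\sqrt{2})$; your remark that the bound in fact holds for every $N\ge 1$ is a valid sharpening. One small refinement: the reduction needs only the literal hypothesis $\nu_{i^*j^\dagger}=\sigma\sqrt{2}$ at the minimizing rival, not all $\nu_{i^*j}$ equal, since then $\Delta_1/(\sigma\sqrt{2})\le\Delta_{i^*j^\dagger}/(\sigma\sqrt{2})=\rho_1\le\rho_{i^*j_1}\le\Delta_1/(\sigma\sqrt{2})$, where $j_1$ is the second-best model.
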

 
\paragraph{Comparison with Corollary~\ref{cor:asymp_2_bench_induc}.}
Corollary~\ref{cor:asymp_2_bench_induc} showed that Kendall-$\tau$ disagreement
under inductive biases decays with exponent $-\rho_{\min}^2/2$, where
$\rho_{\min} = \min_{i < j}|\Delta_{ij}|/\nu_{ij}$ is the minimum SNR over
all model pairs.
Since $\rho_1$ is a minimum over only the pairs involving $i^*$, while
$\rho_{\min}$ is a minimum over all pairs including those with no relation
to $i^*$, we have $\rho_1 \geq \rho_{\min}$.
This means the position-1 bound decays with a more negative exponent
$-\rho_1^2/2 \leq -\rho_{\min}^2/2$, so fewer datasets are needed to reach
any fixed error threshold $\varepsilon$.
Setting each bound equal to $\varepsilon$ and solving:
\[
  N^*_{\text{pos-1}}
  \;\propto\;
  \frac{2}{\rho_1^2}\log\frac{1}{\varepsilon},
  \qquad
  N^*_{\text{Kendall}}
  \;\propto\;
  \frac{2}{\rho_{\min}^2}\log\frac{1}{\varepsilon},
  \qquad
  \frac{N^*_{\text{pos-1}}}{N^*_{\text{Kendall}}}
  \;=\;
  \left(\frac{\rho_{\min}}{\rho_1}\right)^2 \leq 1.
\]
In other words, guaranteeing that two benchmarks agree on the top-ranked model
requires fewer datasets than guaranteeing agreement on the full ranking.
How many fewer depends on how much larger $\rho_1$ is than $\rho_{\min}$:
if the hardest pair to separate is one involving $i^*$ then $\rho_1 = \rho_{\min}$
and there is no advantage; if the hardest pair involves two mid-ranking models
with nothing to do with $i^*$ then $\rho_1 \gg \rho_{\min}$ and the advantage
can be substantial.
 
\begin{remark}[Implications for STRABLE]
\label{rem:strable-pos1}
Using Proposition~\ref{prop:pos1} alone, a practitioner would assess the
difficulty of identifying Tf-Idf + TabPFN-2.5 as the top-1 model by looking
at the mean gap $\Delta_1$ to the second-best pipeline: large mean gap implies
easy identification.
Proposition~\ref{prop:pos1-bias} asks a harder question: is that gap
\emph{consistent} across dataset types?
In STRABLE, Section \ref{sec:heterogeneity} shows that dataset characteristics such as string
length and string diversity cause substantial shifts in the relative ordering
of pipelines, meaning that the performance difference between the top pipeline
and its rivals varies systematically with dataset type. This is a direct empirical signature of non-zero inductive bias terms
$(\beta_{i^*} - \beta_j)^\top \Sigma_z (\beta_{i^*} - \beta_j)$,
which inflate $\nu_{i^*j}$ and lower the SNR $\rho_{i^*j}$.
Since $\rho_1 \leq \Delta_1/(\sigma\sqrt{2})$, Proposition~A.6 requires
at least as many datasets as Proposition~A.5 to reach the same guarantee,
and strictly more whenever any rival has inductive biases different from $i^*$.

Inductive biases introduce a new way for a rival model to be dangerous: a rival model may be far behind on average performance with respect to $i^*$ but highly variable, therefore it can still frequently outscore $i^*$ on specific benchmarks. Once accounted for that, one may need more datasets to wash out those accidental wins and stably identify i* as the winner.
\end{remark}

\section{Current benchmark landscape}

\autoref{tab:benchmark_comparison} compares STRABLE against existing tabular benchmarks. Existing suites either focus on numerical data, rely on heavy curation to remove raw strings, or lack the scale of general-purpose benchmarks.

\begin{table}[t]
\centering
\caption{Comparison of STRABLE against existing tabular benchmarks.}
\label{tab:benchmark_comparison}
\resizebox{\textwidth}{!}{%
\begin{tabular}{@{}llp{8cm}@{}}
\toprule
\textbf{Benchmark} & \textbf{Attention to string features} & \textbf{Size and Scope} \\ \midrule
\multicolumn{3}{@{}l}{\textit{String-excluding benchmarks}} \\
\textbf{OpenML-CC18} \citep{bischl2021openmlbenchmarkingsuites} & Mostly numerical & High ($N=72$); curated classification tasks. \\
\textbf{PMLB / PMLBmini} \citep{romano2021pmlbv10opensource} & Numerical / Low-cardinality & High ($N \approx 290$); inclusive of simplified datasets. \\
\textbf{Grinsztajn et al. (2022)} \citep{grinsztajn2022tree} & One-Hot Encoding & Moderate ($N=45$); removes high-cardinality features. \\
\textbf{TabReD} \citep{rubachev2024tabredanalyzingpitfallsfilling} & Removed / Numerical & Low ($N=8$); industry-grade but removes string signals. \\
\textbf{TabArena} \citep{erickson2025tabarenalivingbenchmarkmachine} & Curated / IID & Moderate ($N=51$); does not include complex string signals. \\ \midrule
\multicolumn{3}{@{}l}{\textit{String-flattening benchmarks}} \\
\textbf{McElfresh et al. (2024)} \citep{mcelfresh2024neuralnetsoutperformboosted} & Standard Vectorization & High ($N=176$); relies on OpenML pre-processed formats. \\
\textbf{AMLB} \citep{gijsbers2023amlbautomlbenchmark} & AutoML System Specific Encoding & High ($N \approx 104$); focuses on AutoML framework evaluation. \\
\textbf{TabRepo} \citep{salinas2024tabrepolargescalerepository} & N-gram and Method Specific Encoding & High ($N=211$); large-scale repository of model evaluations. \\
\textbf{TALENT} \citep{ye2025closerlookdeeplearning} & Standard Vectorization & High ($N=300$); broad scope but standard numerical focus. \\
\textbf{Zab\"ergja et al. (2025)} \citep{zabërgja2025tabulardatadeeplearning} & Standard Vectorization & Moderate ($N=68$); formally treats strings as mathematical vectors. \\
\textbf{TEmBed} \citep{vogel2026universaltabularembeddingsbenchmark} & Text Serialization & Moderate ($N=69$); evaluates embeddings across cell, row, column, and table levels.\\
\midrule
\multicolumn{3}{@{}l}{\textit{Narrow string-aware benchmarks}} \\
\textbf{Shi et al. (2021)} \citep{shi2021benchmarkingmultimodalautomltabular} & Raw free-text & Low ($N=18$); text-dominant tables with few tabular features. \\
\textbf{CARTE} \citep{kim2024cartepretrainingtransfertabular} & LLM-embedded strings & Moderate ($N=51$); curated datasets with discrete entries; lower density of text columns. \\
\textbf{TextTabBench} \citep{mraz2025benchmarkingfoundationmodelstabular} & Raw free-text & Low ($N=13$); limited dataset diversity. \\ \midrule
\textbf{STRABLE (Ours)} & \textbf{Raw heterogeneous strings} & \textbf{High ($N=108$); raw, uncurated, diverse string data.} \\
\bottomrule
\end{tabular}%
}
\end{table}

\section{Dataset collection}\label{app:datasets}

\subsection{Dataset sources and characteristics}

STRABLE is collected from $33$ sources, spanning across 8 different domains.

\begin{table}
  \caption{Tables per field and task type.}
  \label{tab:dataset_distribution_table1.5}
  \centering
  \begin{tabular}{lcccc}
    \toprule
    \textbf{Field} & \textbf{b-cls} & \textbf{m-cls} & \textbf{reg} & \textbf{Total} \\
    \midrule
    Commerce        & 2 & 2 & 1  & \textbf{5} \\
    Economy         & 2 & 1 & 23 & \textbf{26} \\
    Education       & 1 & 0 & 9  & \textbf{10} \\
    Energy          & 0 & 0 & 9  & \textbf{9} \\
    Food            & 0 & 3 & 3  & \textbf{6} \\
    Health          & 8 & 9 & 13 & \textbf{30} \\
    Infra.          & 0 & 4 & 14 & \textbf{18} \\
    Social          & 0 & 0 & 4  & \textbf{4} \\
    \textbf{Total}  & \textbf{13} & \textbf{19} & \textbf{76} & \textbf{108} \\
    \bottomrule
    \end{tabular}
\end{table}

\begin{description}[itemsep=1pt,topsep=0pt]
    \item [\textbf{Commerce (4 sources):}] European-Commission, webrobots.io, mercari.com, Yelp Open Dataset.
    \item [\textbf{Economy (7 sources):}] aijobs.net, kaggle, Consumer-Financial-Protection-Bureau, Federal-Deposit-Insurance-Corporation, data.ct.gov, lendingclub.com, worldbankfinancesone.
    \item [\textbf{Education (4 sources):}] commonlit.org, FSA, Institute of Museum and Library Services, SCIMAGO.
    \item [\textbf{Energy (3 sources):}] energydata.info, fueleconomy.gov, world-resource-institute.
    \item [\textbf{Food (6 sources)}:] BeerAdvocate.com, flavorsofcacao.com, whiskyanalysis.com, Michelin, theramenrater.com, majestic.co.uk.
    \item [\textbf{Health (6 sources)}:] ClinicalTrials.gov, European-Medicines-Agency, FDA, HRSA, Medicaid, osha.gov.
    \item [\textbf{Infrastructure (2 sources)}:] HIFLD, data.sfgov.org.
    \item [\textbf{Social (1 source)}:] OHCA
\end{description}

In addition, \autoref{fig:dataset_exploration} shows the datasets distribution per application field and year of assembling of the dataset (when unavailable, publication year was used); distribution of respective performances across all regression and classification tasks. \autoref{tab:dataset_features_part2} shows the Median and inter-quartile range of different datasets features. In comparison, TextTaBench \citep{mraz2025benchmarkingfoundationmodelstabular} is distinguished by significantly longer average string lengths, while CARTE \citep{kim2024cartepretrainingtransfertabular} exhibits markedly higher cardinality distributions driven by categorical variations. STRABLE occupies a structural middle ground, featuring information-dense entries of moderate length and cardinality, distinct from both the long-context requirements of TextTaBench and the high-cardinality entity matching tasks of CARTE.

\begin{figure}[t!]
\begin{minipage}{.33\linewidth}
    \includegraphics[width=\linewidth]{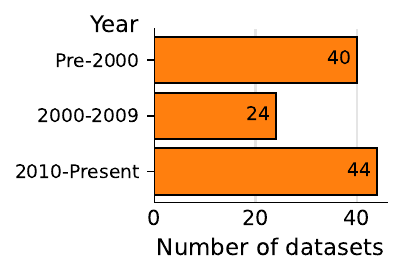}%
\end{minipage}%
\begin{minipage}{.335\linewidth}
    \includegraphics[width=\linewidth]{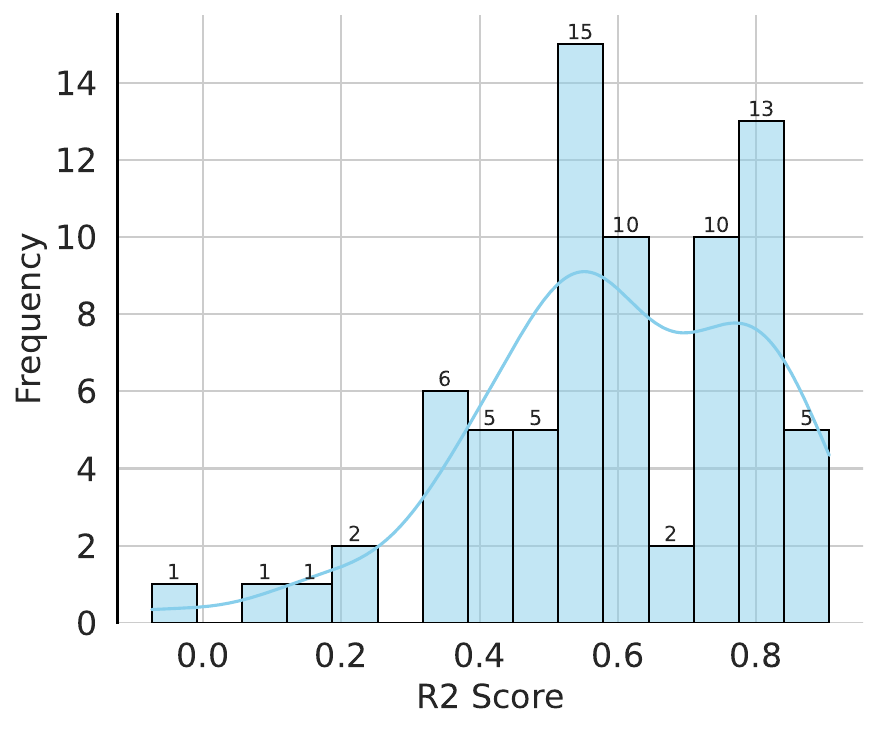}%
\end{minipage}%
\begin{minipage}{.335\linewidth}
    \includegraphics[width=\linewidth]{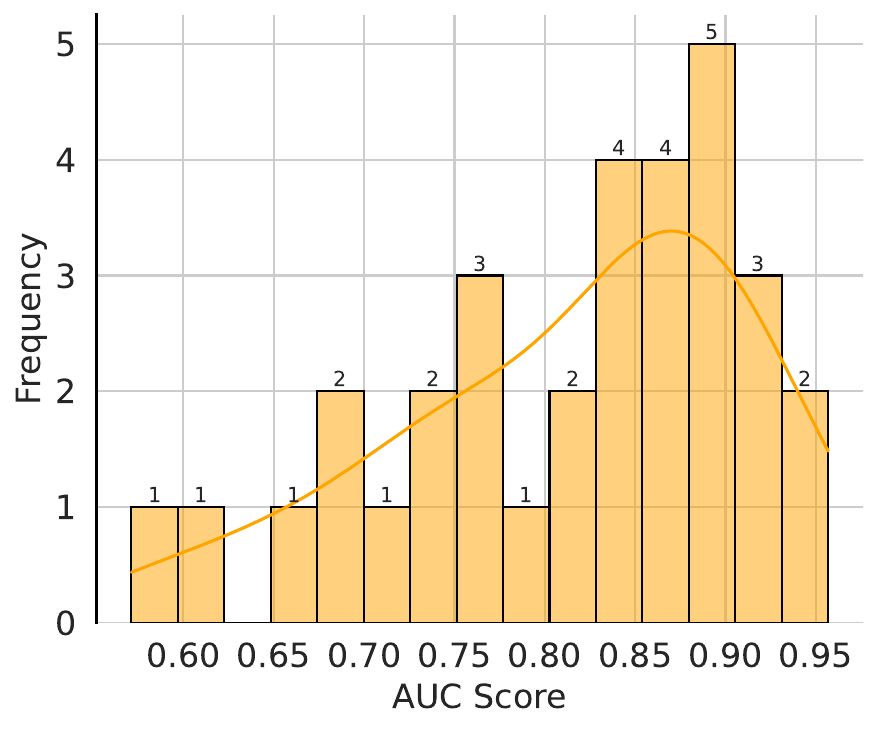}%
\end{minipage}%
    \caption{From left to right: Number of datasets per publication or collection year, distribution of the R2 across all regression tasks, distribution of AUC across all classification tasks.}
    \label{fig:dataset_exploration}
\end{figure}

\begin{table}[t]
\centering
\caption{Summary statistics of curated datasets by category: Median [IQR]}
\label{tab:dataset_features_part2}
\setlength{\tabcolsep}{3pt} \footnotesize
\begin{tabular}{lccccc}
\toprule
Category & Number of Rows & Number of Columns & Number of String Columns & \makecell{Avg. String\\Length} & Cardinality \\
\midrule
Commerce & 75000 [42186, 75000] & 14 [14, 19] & 12 [9, 13] & 49 [22, 58] & 19158 [11476, 35516] \\
Economy & 7796 [4723, 16160] & 13 [11, 19] & 10 [9, 17] & 17 [12, 24] & 1110 [288, 2567] \\
Education & 10928 [5045, 21906] & 16 [13, 29] & 10 [8, 12] & 16 [11, 26] & 1894 [1518, 4850] \\
Energy & 3978 [1238, 31448] & 21 [19, 29] & 17 [14, 23] & 13 [12, 22] & 416 [135, 3287] \\
Food & 2993 [2056, 4554] & 13 [8, 18] & 8 [6, 13] & 19 [10, 46] & 888 [377, 1648] \\
Health & 4366 [1743, 14988] & 16 [10, 25] & 12 [8, 22] & 31 [14, 47] & 696 [380, 1554] \\
Infrastructure & 12504 [4530, 48014] & 28 [24, 38] & 17 [15, 21] & 15 [13, 17] & 2195 [1013, 5674] \\
Social & 10081 [5637, 22476] & 14 [12, 20] & 12 [10, 16] & 14 [12, 17] & 490 [305, 672] \\
\bottomrule
\end{tabular}
\end{table}

\subsection{Detailed description of datasets}

We provide detailed description and url links to the datasets. For broken links, refer to the link in the abstract to access the datasets.

\begin{enumerate}[topsep=0pt, itemsep=0pt]
\item \textbf{ACA Federal Upper Limits}\footnote{\url{https://www.medicaid.gov/medicaid/prescription-drugs/federal-upper-limit}} Price limits for multi-source drugs under the Medicaid program. The task is to predict the federal upper price limit. 
\item \textbf{AI/ML Salaries}\footnote{\url{https://ai-jobs.net/salaries/download/salaries.csv}} Salary and basic information for workers in the machine learning and data science industry. The task is to predict worker salaries. 
\item \textbf{Animal and Veterinary Event}\footnote{\url{https://open.fda.gov/apis/animalandveterinary/event/}} Health problems reported in animals following the use of drug products. The task is to predict the severity of clinical signs. 
\item \textbf{Antenna Structure Registration}\footnote{\url{https://hifld-geoplatform.opendata.arcgis.com/datasets/geoplatform::antenna-structure-registration/}} FCC registration data for antenna structures. The task is to predict the height of the structures. 
\item \textbf{Awarded Grants IMLS}\footnote{\url{https://www.imls.gov/grants/awarded-grants}} Grants awarded by the Institute of Museum and Library Services. The task is to predict the specific grant amount. 
\item \textbf{Beer Ratings}\footnote{\url{https://www.kaggle.com/datasets/ruthgn/beer-profile-and-ratings-data-set}} Tasting profiles and consumer reviews for over 3,000 unique beers. The task is to predict overall review ratings. \item \textbf{Broadband Availability}\footnote{\url{https://www.fcc.gov/economics-analytics/broadband-data-collection}} Data on internet speed and availability across the US. The task is to predict the maximum available download speed. 
\item \textbf{California Housing}\footnote{\url{https://www.kaggle.com/datasets/camnugent/california-housing-prices}} Median house values and demographics from the 1990 California census. The task is to predict median house prices. 
\item \textbf{Child Adult Healthcare Quality}\footnote{\url{https://data.medicaid.gov/dataset/quality-of-care-child-core-set}} Quality of care metrics for Medicaid and CHIP beneficiaries. The task is to predict healthcare performance scores. \item \textbf{China Overseas Finance Inventory}\footnote{\url{https://www.wri.org/data/china-overseas-finance-inventory-database}} Chinese investments in power generation projects worldwide. The task is to predict the total investment amount. 
\item \textbf{Chocolate Bar Ratings}\footnote{\url{https://www.kaggle.com/datasets/rtatman/chocolate-bar-ratings}} Expert ratings and information on cocoa batches. The task is to predict the professional rating score. 
\item \textbf{Clear Corpus}\footnote{\url{https://www.commonlit.org/blog/introducing-the-clear-corpus-an-open-dataset-to-advance-research-28ff8cfea84a}} Reading passage excerpts for elementary school students. The task is to predict the readability of the text. 
\item \textbf{Cohort Default Rate}\footnote{\url{https://nsldsfap.ed.gov/cdr-searchable-database}} Student loan default rates for US postsecondary institutions. The task is to predict the default rate percentage. 
\item \textbf{College Credit Card Marketing}\footnote{\url{https://www.consumerfinance.gov/data-research/student-banking/marketing-agreements-and-data/}} Marketing agreements between credit card issuers and universities. The task is to predict the number of open accounts. 
\item \textbf{College Deposit Product Marketing}\footnote{\url{https://www.consumerfinance.gov/data-research/student-banking/marketing-agreements-and-data/}} Agreements regarding deposit products offered to college students. The task is to predict associated financial metrics. 
\item \textbf{Colleges and Universities}\footnote{\url{https://hifld-geoplatform.opendata.arcgis.com/datasets/geoplatform::colleges-and-universities/}} Locations and characteristics of US postsecondary institutions. The task is to predict student enrollment. 
\item \textbf{Commitments in Trust Funds}\footnote{\url{https://finances.worldbank.org/Trust-Funds-and-FIFs/Commitments-in-Trust-funds/DS00072}} Approved commitments in World Bank trust fund ledgers. The task is to predict the total commitment amount. 
\item \textbf{Community Banking}\footnote{\url{https://www.fdic.gov/community-banking-research-program/reference-data}} Financial metrics for community banks in the US. The task is to predict bank asset sizes or performance ratios. 
\item \textbf{Conflict Events}\footnote{\url{https://data.humdata.org/dataset/acled-data}} Geospatial data on political violence and protest events. The task is to predict the number of fatalities. 
\item \textbf{Contract Awards IPF}\footnote{\url{https://finances.worldbank.org/Procurement/Contract-Awards-in-Investment-Project-Financing/DS00073}} Contracts financed by the World Bank under Investment Project Financing. The task is to predict the award amount. 
\item \textbf{Contributions to FIFs}\footnote{\url{https://finances.worldbank.org/Trust-Funds-and-FIFs/Contributions-to-Financial-Intermediary-Funds/DS00051}} Financial contributions to multilateral Financial Intermediary Funds. The task is to predict donor contribution levels. 
\item \textbf{Corporate Procurement Contracts}\footnote{\url{https://finances.worldbank.org/Procurement/Corporate-Procurement-Contract-Awards/DS00028}} Listing of contract awards executed by the World Bank Group. The task is to predict the contract value. 
\item \textbf{Cosmetic Event}\footnote{\url{https://open.fda.gov/apis/cosmetic/event/}} Adverse events reported for cosmetic products to the FDA. The task is to predict the severity of the reported reaction. 
\item \textbf{COVID-19 Clinical Trials}\footnote{\url{https://clinicaltrials.gov/ct2/results?cond=COVID-19}} Metadata for clinical trials related to COVID-19. The task is to predict trial status or enrollment numbers. 
\item \textbf{Device Classification}\footnote{\url{https://open.fda.gov/apis/device/classification/}} FDA classification of medical devices based on intended use. The task is to predict the specific device class. 
\item \textbf{Device COVID-19 Serology}\footnote{\url{https://open.fda.gov/apis/device/covid19serology/}} Performance data for COVID-19 antibody tests. The task is to predict test sensitivity or specificity. 
\item \textbf{Device PMA}\footnote{\url{https://open.fda.gov/apis/device/pma/}} Premarket approval applications for high-risk medical devices. The task is to predict the final decision status. 
\item \textbf{Disbursements in Trust Funds}\footnote{\url{https://finances.worldbank.org/Trust-Funds-and-FIFs/Disbursements-in-Trust-Funds/DS00074}} Cash payments made to recipients of World Bank trust funds. The task is to predict the disbursement amount. 
\item \textbf{Discretionary Grant}\footnote{\url{https://data.hrsa.gov/data/reports}} Grants awarded based on competitive applications by HRSA. The task is to predict the total grant award amount. 
\item \textbf{Drug Drugs@FDA}\footnote{\url{https://open.fda.gov/apis/drug/drugsfda/}} Information about FDA-approved brand name and generic drugs. The task is to predict approval years. 
\item \textbf{Drug Enforcement}\footnote{\url{https://open.fda.gov/apis/drug/enforcement/}} FDA enforcement actions related to drug products. The task is to predict the recall classification level. 
\item \textbf{Drug NDC}\footnote{\url{https://open.fda.gov/apis/drug/ndc/}} National Drug Code Directory containing all drugs in the US. The task is to predict the drug category. 
\item \textbf{Drug Shortages}\footnote{\url{https://www.fda.gov/drugs/drug-safety-and-availability/drug-shortages}} Information on current and resolved drug shortages. The task is to predict the duration of the shortage. 
\item \textbf{Electric Generating Plants}\footnote{\url{https://hifld-geoplatform.opendata.arcgis.com/datasets/geoplatform::power-plants/}} Operational characteristics of power plants in the US. The task is to predict net generation capacity. 
\item \textbf{Electric Retail Service Territories}\footnote{\url{https://hifld-geoplatform.opendata.arcgis.com/datasets/geoplatform::electric-retail-service-territories/}} Areas served by electric utility companies. The task is to predict the utility ownership type. 
\item \textbf{EMA Medicines}\footnote{\url{https://www.ema.europa.eu/en/medicines/download-medicine-data}} Information on medicines authorized by the European Medicines Agency. The task is to predict authorization status. 
\item \textbf{External Clinician Dashboard}\footnote{\url{https://data.hrsa.gov/data/reports}} Performance metrics for clinicians in HRSA programs. The task is to predict clinician productivity scores. 
\item \textbf{FIF Cash Transfers}\footnote{\url{https://finances.worldbank.org/Trust-Funds-and-FIFs/Financial-Intermediary-Funds-Cash-Transfers/DS00029}} Cash transfers from FIFs to implementing agencies. The task is to predict the transfer amount. 
\item \textbf{FIF Commitments}\footnote{\url{https://finances.worldbank.org/Trust-Funds-and-FIFs/Financial-Intermediary-Funds-Commitments/DS00052}} Funding commitments made by Financial Intermediary Funds. The task is to predict the commitment value. 
\item \textbf{FIF Funding Decisions}\footnote{\url{https://finances.worldbank.org/Trust-Funds-and-FIFs/Financial-Intermediary-Funds-Funding-Decisions/DS00053}} Decisions on funding allocations by FIF governing bodies. The task is to predict the decision outcome. 
\item \textbf{Financial Management Medicaid}\footnote{\url{https://data.medicaid.gov/dataset/state-expenditures}} State expenditures on Medicaid programs and services. The task is to predict total program costs. 
\item \textbf{Financial Product Complaint}\footnote{\url{https://www.consumerfinance.gov/data-research/consumer-complaints/}} Consumer complaints regarding financial products and services. The task is to predict the response category. \item \textbf{Food Enforcement}\footnote{\url{https://open.fda.gov/apis/food/enforcement/}} Recall and enforcement actions for food products. The task is to predict the reason for the recall. 
\item \textbf{Food Event}\footnote{\url{https://open.fda.gov/apis/food/event/}} Adverse events and product complaints for food and supplements. The task is to predict the consumer outcome. 
\item \textbf{Food Prices}\footnote{\url{https://data.humdata.org/dataset/wfp-food-prices}} Historical food prices from markets worldwide. The task is to predict the price of staple crops. 
\item \textbf{Foreign Gift and Contract}\footnote{\url{https://studentaid.gov/data-center/school/foreign-gifts}} Reports of gifts or contracts from foreign sources to US colleges. The task is to predict the gift value. 
\item \textbf{FTS Funding}\footnote{\url{https://fts.unocha.org/}} Humanitarian aid flows tracked by the Financial Tracking Service. The task is to predict funding per crisis. 
\item \textbf{FTS Requirements and Funding}\footnote{\url{https://fts.unocha.org/}} Requirements vs. funding for humanitarian response plans. The task is to predict the funding gap. 
\item \textbf{Gainful Employment}\footnote{\url{https://studentaid.gov/data-center/school/ge}} Debt-to-earnings ratios for graduates of vocational programs. The task is to predict if a program passes standards. 
\item \textbf{Global Dams Database}\footnote{\url{https://www.energydata.info/dataset/global-dams-database}} Geographic and structural information on dams worldwide. The task is to predict dam capacity. 
\item \textbf{Global Power Plant}\footnote{\url{https://www.wri.org/data/global-power-plant-database}} Comprehensive database of power plants worldwide. The task is to predict annual electricity generation. 
\item \textbf{Grant}\footnote{\url{https://taggs.hhs.gov/}} General information on federal grants awarded by HHS. The task is to predict the project funding amount. 
\item \textbf{Health Professional Shortage Areas}\footnote{\url{https://data.hrsa.gov/topics/health-workforce/shortage-areas}} Regions with a shortage of healthcare providers. The task is to predict the shortage score. 
\item \textbf{Historic Perimeters Wildfires}\footnote{\url{https://hifld-geoplatform.opendata.arcgis.com/datasets/geoplatform::historic-perimeters-wildfires/}} Historical boundaries of major wildfires in the US. The task is to predict total acres burned. 
\item \textbf{Historical Earthquake Locations}\footnote{\url{https://hifld-geoplatform.opendata.arcgis.com/datasets/geoplatform::historical-earthquake-locations/}} Global database of significant historical earthquakes. The task is to predict earthquake magnitude. 
\item \textbf{Historical Volcanic Locations}\footnote{\url{https://hifld-geoplatform.opendata.arcgis.com/datasets/geoplatform::historical-volcanic-locations/}} Locations and eruption history of significant volcanoes. The task is to predict the eruption type. 
\item \textbf{Hospitals}\footnote{\url{https://hifld-geoplatform.opendata.arcgis.com/datasets/geoplatform::hospitals/}} Comprehensive list of US hospitals and their facilities. The task is to predict the number of hospital beds. 
\item \textbf{Hypertension Control}\footnote{\url{https://data.hrsa.gov/data/reports}} Clinical performance data on hypertension management. The task is to predict patient blood pressure control percentage. 
\item \textbf{IBRD Statement of Loans}\footnote{\url{https://finances.worldbank.org/Loans-and-Credits/IBRD-Statement-of-Loans-Latest-Available-Snapshot/DS00049}} Historical record of loans and guarantees issued by the IBRD. The task is to predict the loan status. 
\item \textbf{IDA Statement of Credits}\footnote{\url{https://finances.worldbank.org/Loans-and-Credits/IDA-Statement-of-Credits-and-Grants-Latest-Availabl/DS00048}} Records of credits, grants, and guarantees issued by IDA. The task is to predict the disbursement status. 
\item \textbf{IFC Advisory Projects}\footnote{\url{https://finances.worldbank.org/Advisory-Services/IFC-Advisory-Services-Projects/DS00010}} Metadata on advisory services provided by the IFC. The task is to predict the project budget. \item \textbf{IFC Investment Projects}\footnote{\url{https://finances.worldbank.org/Investment-Services/IFC-Investment-Services-Projects/DS00011}} Records of investment projects undertaken by the IFC. The task is to predict the investment amount. 
\item \textbf{Industry Payments Entity}\footnote{\url{https://www.cms.gov/openpayments}} Payments made by drug and device companies to teaching hospitals. The task is to predict the payment amount. 
\item \textbf{Industry Payments Project}\footnote{\url{https://www.cms.gov/openpayments}} Payments related to specific research projects or clinical trials. The task is to predict research funding. 
\item \textbf{Insurance Company Complaints}\footnote{\url{https://data.ct.gov/Business/Insurance-Company-Complaints/}} Consumer complaints filed against insurance companies. The task is to predict resolution status. 
\item \textbf{Journal Ranking}\footnote{\url{https://www.scimagojr.com/journalrank.php}} Scientific journal metrics including H-index and citations. The task is to predict the journal's impact factor. 
\item \textbf{Kickstarter Projects}\footnote{\url{https://www.kaggle.com/datasets/kemical/kickstarter-projects}} Funding goals and outcomes for Kickstarter campaigns. The task is to predict project success. 
\item \textbf{Lending Club Loan}\footnote{\url{https://www.lendingclub.com/info/download-data.action}} Information on loans issued through the Lending Club platform. The task is to predict the interest rate. 
\item \textbf{Local Government Renewable Action}\footnote{\url{https://www.wri.org/data/local-government-renewables-action-tracker}} Renewable energy initiatives taken by local governments. The task is to predict project capacity. 
\item \textbf{Local Law Enforcement}\footnote{\url{https://hifld-geoplatform.opendata.arcgis.com/datasets/geoplatform::local-law-enforcement-locations/}} Locations of local police and sheriff departments in the US. The task is to predict officer counts. 
\item \textbf{Managed Care Enrollment}\footnote{\url{https://data.medicaid.gov/dataset/managed-care-enrollment-report}} Enrollment statistics for Medicaid managed care plans. The task is to predict the number of enrollees. 
\item \textbf{Media Ranking}\footnote{\url{https://www.scimagojr.com/}} Rankings for media and social science publications. The task is to predict the impact factor. 
\item \textbf{Medically Underserved Areas}\footnote{\url{https://data.hrsa.gov/topics/health-workforce/shortage-areas}} Areas with populations lacking access to primary care. The task is to predict the underservice index. 
\item \textbf{Mercari Price Prediction}\footnote{\url{https://www.kaggle.com/c/mercari-price-suggestion-challenge}} Product descriptions and categories from Mercari. The task is to predict the listing price. 
\item \textbf{Michelin Ratings}\footnote{\url{https://www.kaggle.com/datasets/ngshiheng/michelin-guide-restaurants-2021}} Details on restaurants curated in the Michelin Guide. The task is to predict the award level. 
\item \textbf{MIGA Issued Projects}\footnote{\url{https://finances.worldbank.org/Guarantees/MIGA-Issued-Projects/DS00030}} Projects supported by MIGA investment guarantees. The task is to predict the maximum gross exposure. 
\item \textbf{MLR Summary Reports}\footnote{\url{https://data.medicaid.gov/dataset/medical-loss-ratio}} Medical Loss Ratio data for healthcare plans. The task is to predict the MLR percentage. 
\item \textbf{Mobile Home Parks}\footnote{\url{https://hifld-geoplatform.opendata.arcgis.com/datasets/geoplatform::mobile-home-parks/}} Geographic locations and capacities of mobile home parks. The task is to predict the number of lots. 
\item \textbf{Museums}\footnote{\url{https://www.imls.gov/research-evaluation/data-collection/museum-universe-data-file}} Information on museums and related organizations in the US. The task is to predict annual revenue. 
\item \textbf{NADAC Rates}\footnote{\url{https://data.medicaid.gov/dataset/national-average-drug-acquisition-cost}} Weekly survey of drug acquisition costs for retail pharmacies. The task is to predict the acquisition cost per unit. 
\item \textbf{National Average Drug Acquisition Cost}\footnote{\url{https://data.medicaid.gov/dataset/national-average-drug-acquisition-cost}} Survey of retail pharmacy drug acquisition costs. The task is to predict units costs. 
\item \textbf{Oil and Natural Gas Platforms}\footnote{\url{https://hifld-geoplatform.opendata.arcgis.com/datasets/geoplatform::oil-natural-gas-platforms/}} Offshore oil and gas platforms in US waters. The task is to predict platform status. 
\item \textbf{Orphan Designations}\footnote{\url{https://www.ema.europa.eu/en/medicines/download-medicine-data}} Medicines designated for rare diseases by the EMA. The task is to predict therapeutic area. 
\item \textbf{OSHA Accidents}\footnote{\url{https://www.osha.gov/fatalities}} Reports of workplace accidents and fatalities. The task is to predict injury severity. 
\item \textbf{Paediatric Investigation Plan}\footnote{\url{https://www.ema.europa.eu/en/medicines/download-medicine-data}} Research plans for the use of medicines in children. The task is to predict investigation status. 
\item \textbf{POL Terminal}\footnote{\url{https://hifld-geoplatform.opendata.arcgis.com/datasets/geoplatform::petroleum-terminals/}} Petroleum, Oil, and Lubricant storage terminals. The task is to predict storage capacity. 
\item \textbf{Power Plants}\footnote{\url{https://hifld-geoplatform.opendata.arcgis.com/datasets/geoplatform::power-plants/}} Details on fuel type and location of US power plants. The task is to predict the energy source. 
\item \textbf{Prepaid Financial Product}\footnote{\url{https://www.consumerfinance.gov/data-research/prepaid-accounts/}} Information on prepaid financial products and terms. The task is to predict fee structures. 
\item \textbf{Prison Boundaries}\footnote{\url{https://hifld-geoplatform.opendata.arcgis.com/datasets/geoplatform::prison-boundaries/}} Geospatial boundaries and capacities of US correctional facilities. The task is to predict population. 
\item \textbf{Ramen Ratings}\footnote{\url{https://www.kaggle.com/datasets/ankanhore545/top-ramen-ratings-2022}} Reviews and ratings for various ramen products globally. The task is to predict the star rating. 
\item \textbf{RASFF Window}\footnote{\url{https://ec.europa.eu/food/safety/rasff_en}} Food and feed safety alerts from the EU Rapid Alert System. The task is to predict the risk level. 
\item \textbf{RASNF Notification List}\footnote{\url{https://ec.europa.eu/safety-gate-alerts/screen/search}} The EU rapid alert system for dangerous non-food products. The task is to predict whether the product affects more than one country. 
\item \textbf{Recipient Executed Grants}\footnote{\url{https://finances.worldbank.org/Trust-Funds-and-FIFs/Recipient-executed-grants-commitments-disbursement/DS00075}} Commitments and disbursements for grants executed by recipients. The task is to predict grant value. 
\item \textbf{Schools}\footnote{\url{https://hifld-geoplatform.opendata.arcgis.com/datasets/geoplatform::public-schools/}} Locations and metadata for public K-12 schools. The task is to predict student counts. 
\item \textbf{SF Building Permits}\footnote{\url{https://data.sfgov.org/Housing-and-Buildings/Building-Permits/i98e-oved}} Building permit applications in San Francisco. The task is to predict construction cost. 
\item \textbf{Summary of Deposit}\footnote{\url{https://www.fdic.gov/bank/statistical/stats/}} Branch-level deposit data for US banks. The task is to predict total deposits per branch. 
\item \textbf{Tax Incentives}\footnote{\url{https://data.ct.gov/Business/Tax-Incentives/}} Business tax incentives granted by Connecticut. The task is to predict the tax credit amount. 
\item \textbf{Terms CC Plans}\footnote{\url{https://www.consumerfinance.gov/data-research/credit-card-data/}} Terms and conditions for various credit card plans. The task is to predict interest rates. 
\item \textbf{Tobacco Problem}\footnote{\url{https://open.fda.gov/apis/tobacco/problem/}} Health or product problems related to tobacco. The task is to predict the health problem type. 
\item \textbf{Total Contributions IBRD IDA IFC}\footnote{\url{https://finances.worldbank.org/}} Contributions to World Bank institutions by members. The task is to predict total contribution amounts. 
\item \textbf{Transmission Lines}\footnote{\url{https://hifld-geoplatform.opendata.arcgis.com/datasets/geoplatform::electric-power-transmission-lines/}} Electric power transmission infrastructure in the US. The task is to predict voltage levels. 
\item \textbf{Transmission Towers}\footnote{\url{https://hifld-geoplatform.opendata.arcgis.com/datasets/geoplatform::antenna-structure-registration/}} Structures for wireless and broadcast transmission. The task is to predict tower height. 
\item \textbf{US School Bus Fleet}\footnote{\url{https://hifld-geoplatform.opendata.arcgis.com/datasets/geoplatform::school-buses/}} Data on school bus fleets and fuel types in the US. The task is to predict bus counts. 
\item \textbf{Vehicles}\footnote{\url{https://www.fueleconomy.gov/feg/download.shtml}} Fuel economy data for cars sold in the US. The task is to predict the annual fuel cost. 
\item \textbf{Whisky Ratings}\footnote{\url{https://whiskyanalysis.com/index.php/database/}} Tasting notes and meta-critic scores for whiskies. The task is to predict the overall rating. 
\item \textbf{Wine Dataset}\footnote{\url{https://www.kaggle.com/datasets/manyregression/updated-wine-enthusiast-review}} Wine reviews and prices from Wine Enthusiast magazine. The task is to predict the score or price. 
\item \textbf{Workforce Demographics}\footnote{\url{https://data.hrsa.gov/data/reports}} Demographic information for the health professional workforce. The task is to predict regional workforce density. 
\item \textbf{Yelp Business}\footnote{\url{https://www.yelp.com/dataset}} Metadata on businesses including categories and reviews. The task is to predict the star rating. \end{enumerate}

\begin{table}[t]
\centering
\footnotesize
\caption{Median (IQR) of Runtime and MTEB Performance per Language Model}
\label{tab:runtime_summary}
\begin{tabularx}{\textwidth}{l >{\raggedright\arraybackslash}X c c}
\toprule
Language Model & Hugging Face & \makecell{Median (IQR)\\Runtime [s]} & \makecell{MTEB (En)\\Score} \\
\midrule
LM All-MiniLM-L12-v2 & sentence-transformers/all-MiniLM-L12-v2 & 12 [4, 48] & - \\
LM All-MiniLM-L6-v2 & sentence-transformers/all-MiniLM-L6-v2 & 3 [1, 12] & 56.03 \\
LM All-MPNet-base-v2 & sentence-transformers/all-mpnet-base-v2 & 13 [4, 55] & - \\
LM BGE-base & BAAI/bge-base-en-v1.5 & 11 [4, 32] & 65.14 \\
LM BGE-large & BAAI/bge-large-en-v1.5 & 35 [9, 100] & 65.89 \\
LM BGE-small & BAAI/bge-small-en-v1.5 & 12 [3, 42] & 64.30 \\
LM DeBERTa-v3-base & microsoft/deberta-v3-base & 28 [10, 103] & - \\
LM DeBERTa-v3-large & microsoft/deberta-v3-large & 48 [14, 182] & - \\
LM DeBERTa-v3-small & microsoft/deberta-v3-small & 12 [6, 35] & - \\
LM DeBERTa-v3-xsmall & microsoft/deberta-v3-xsmall & 20 [6, 75] & - \\
LM E5-base-v2 & intfloat/e5-base-v2 & 7 [2, 24] & 61.67 \\
LM E5-large-v2 & intfloat/e5-large-v2 & 16 [4, 59] & 62.79 \\
LM E5-small-v2 & intfloat/e5-small-v2 & 6 [2, 19] & 61.32 \\
LM F2LLM-0.6B & codefuse-ai/F2LLM-0.6B & 52 [14, 191] & 70.03 \\
LM F2LLM-1.7B & codefuse-ai/F2LLM-1.7B & 72 [17, 279] & 72.01 \\
LM F2LLM-4B & codefuse-ai/F2LLM-4B & 290 [74, 997] & 73.67 \\
LM FastText & - & 0 [0, 1] & - \\
LM Gemma-0.3B & google/gemma-3-270m & 47 [14, 157] & - \\
LM Jasper-0.6B & infgrad/Jasper-Token-Compression-600M & 22 [7, 77] & 74.75 \\
LM KALM-embed & HIT-TMG/KaLM-embedding-multilingual-mini-instruct-v1.5 & 56 [15, 210] & 71.29 \\
LM LLaMA-3.1-8B & meta-llama/Llama-3.1-8B & 249 [61, 1040] & - \\
LM LLaMA-3.2-1B & meta-llama/Llama-3.2-1B & 36 [9, 155] & - \\
LM LLaMA-3.2-3B & meta-llama/Llama-3.2-3B & 106 [26, 439] & - \\
LM LLaMA-Nemotron-Embed-1B-v2 & nvidia/llama-nemotron-embed-1b-v2 & 35 [8, 142] & - \\
LM ModernBERT-base & answerdotai/ModernBERT-base & 26 [8, 93] & - \\
LM ModernBERT-large & answerdotai/ModernBERT-large & 35 [11, 119] & - \\
LM OPT-0.1B & facebook/opt-125m & 13 [3, 45] & - \\
LM OPT-0.3B & facebook/opt-350m & 25 [6, 90] & - \\
LM OPT-1.3B & facebook/opt-1.3b & 57 [14, 219] & - \\
LM OPT-2.7B & facebook/opt-2.7b & 101 [24, 437] & - \\
LM OPT-6.7B & facebook/opt-6.7b & 233 [58, 939] & - \\
LM Qwen-3-0.6B & Qwen/Qwen3-Embedding-0.6B & 30 [9, 140] & 70.47 \\
LM Qwen-3-4B & Qwen/Qwen3-Embedding-4B & 156 [40, 653] & 74.61 \\
LM Qwen-3-8B & Qwen/Qwen3-Embedding-8B & 276 [72, 1174] & 75.23 \\
LM RoBERTa-base & FacebookAI/roberta-base & 7 [2, 25] & - \\
LM RoBERTa-large & FacebookAI/roberta-large & 20 [5, 73] & - \\
LM Sentence-T5-base & sentence-transformers/sentence-t5-base & 8 [3, 36] & 60.30 \\
LM Sentence-T5-large & sentence-transformers/sentence-t5-large & 17 [4, 62] & 77.67 \\
LM Sentence-T5-xl & sentence-transformers/sentence-t5-xl & 61 [14, 259] & 76.58 \\
LM Sentence-T5-XXL & sentence-transformers/sentence-t5-xxl & 268 [65, 1019] & 66.13 \\
LM UAE-large & WhereIsAI/UAE-Large-V1 & 24 [6, 99] & 66.40 \\
\bottomrule
\end{tabularx}
\end{table}

\subsection{Downsampling Strategy}\label{app:downsampling}
To ensure computational feasibility across our extensive 
benchmark of learners and encoders, we limit the maximum number 
of data points to $75{,}000$. For datasets exceeding the limit, 
they are downsampled using a fixed random seed for 
reproducibility. The sampling strategy depends on the task type: 
simple random sampling (uniform selection without replacement) 
for regression; stratified sampling to ensure that the class 
distribution in the subset matches the original target marginal 
distribution for classification. The threshold of $75{,}000$ is 
chosen to align with TabPFN-2.5's design specification for 
datasets of up to $50{,}000$ training samples 
\citep{grinsztajn2025tabpfn25advancingstateart}: under our 
$3$-fold cross-validation, this threshold ensures each training 
fold contains at most $50{,}000$ samples ($\approx 2/3 \times 
75{,}000$), with the remaining $25{,}000$ used for testing.

\subsection{String Taxonomy and Profiling Methodology}
\label{subsec:string-taxonomy}

To better characterize the string modalities present within the STRABLE benchmark, we introduced a semantic taxonomy and applied it to profile all text columns across the curated datasets. 

\paragraph{Semantic Taxonomy} 
We categorize all non-numeric columns into five distinct semantic types:
\begin{itemize}
    \item \textbf{Categorical:} Low-uniqueness, repeating labels (e.g., ``Red'', ``General Acute Care'').
    \item \textbf{Name:} Proper nouns representing people, organizations, places, or products (e.g., ``John Doe'', ``Max Mara'').
    \item \textbf{Structured Code:} Strings with recognizable, meaningful patterns (e.g., ZIP codes, ICD/NDC medical codes, URLs).
    \item \textbf{Free Text:} Multi-word prose containing natural language and stopwords (e.g., user reviews, medical notes).
    \item \textbf{Identifier:} Near-unique, opaque keys with no inherent semantics (e.g., UUIDs, hashes, auto-generated IDs).
    \item \textbf{Datetime:} Strings encoding temporal information (e.g., ``2024-03-15'', ``March 15, 2024'', ``Q1 2024'').

\end{itemize}

\paragraph{Profiling Methodology}\label{sec:profiling_methodology}
To classify these columns at scale, we first isolate all text columns using \texttt{skrub.TableVectorizer}. We then compute a suite of deterministic indices for each column, which include:
\begin{itemize}
    \item \emph{Dictionary Hit Rate \& Stopword Density:} To distinguish natural language prose from random strings.
    \item \emph{Symbol Density, Proportion Numeric, \& Pattern Matching:} To detect structured formatting (such as slashes and dashes), numeric content, and standard regex patterns (such as dates and currencies).
    \item \emph{Token Metrics:} Average words per cell and the proportion of multi-word entries.
    \item \emph{Uniqueness Ratio:} To differentiate primary keys from repeating categorical variables.
\end{itemize}
We apply a heuristic function based on these computed indices to classify each column into one of the five taxonomic tags.

\paragraph{Validation Protocol}
To ensure the accuracy of our heuristic profiler, we conducted a rigorous validation loop on a sample of 30 datasets. We manually annotated the semantic types of the string columns and compared them against the heuristic's output, utilizing a State-of-the-Art LLM-as-a-judge as a secondary verifier. By iteratively refining the heuristic thresholds based on failure cases in this sample, our automated profiler achieved a 97\% agreement with the ground truth annotations.

\section{Evaluation pipeline}

\subsection{Pipeline components}

To support the evaluation of the STRABLE benchmark, we use a comprehensive corpus of encoding and learning components. These are categorized into modular encoder-learner pipelines and end-to-end models.

\paragraph{Encoder-learner pipelines}\label{app:encoder_learner_pipelines} We consider combinations of following encoders and learners.

\textit{Baselines of encoders:}

\begin{itemize}[topsep=0pt,itemsep=1pt]
    \item \texttt{Tf-Idf+SVD}: encodes strings of a given column using tf-idf vectorization and truncated singular value decomposition for dimensionality reduction. We use the \texttt{Skrub} package \citep{skrub2026} with its default value $30$ as the dimension.
    \item \texttt{TargetEncoder} \citep{micci2001preprocessing}: encodes categorical variables based on the global target mean and target values for observations belonging to the category. Categories that are not present in the train set are encoded with the target mean.
    \item \texttt{LM-}: a family of encoders that use pre-trained language models as feature extractors. For models on \texttt{HuggingFace}, we use the \texttt{Sentence-Transformers} package to extract the embeddings; for \texttt{FastText}, we rely on its supported package. The encoding is then followed by Principal Component Analysis (PCA) to reduce each column's representation to $30$ components. PCA is fitted with only non-null values, preserving missing entries as NaNs.
    \item \texttt{TARTE} \citep{kim2025table}: a model pre-trained from large knowledgebases. The model takes a table as the input and generates embeddings per row, with dimension of $768$.
\end{itemize}

\textit{Baselines of learners:}
\begin{itemize}[topsep=0pt,itemsep=1pt]
    \item \texttt{Ridge} \citep{pedregosa2011scikit}: simple linear model with an efficient cross-validation for hyperparameter selection. We record the results of internally tuned estimator.
    \item \texttt{XGBoost} \citep{Chen_2016}: a representative gradient boosting decision trees learner. Models are configured with a maximum of $1,000$ iterations with early-stopping (patience=300) based on a validation set. We record results of both default and tuned estimators.
    \item \texttt{ExtraTrees} \citep{geurts2006extremely}: fits a number of randomized decision trees on various sub-samples of the dataset with averaging for improved prediction. Implemented via \texttt{scikit-learn==1.7.2}, which provides native missing value support. We record results of both default and tuned estimators.
    \item \texttt{RealMLP} \citep{holzmuller2024better}: an improved MLP with architectural changes and meta-tuned default hyperparameters, specifically optimized for tabular data. We use the default configuration.

    \item \texttt{TabM} \citep{gorishniy2025tabm}: a tabular deep learning model based on parameter-efficient ensembling of MLPs via BatchEnsemble, producing multiple predictions per object. We use the default configuration.

    \item \texttt{TabPFN-2.5} \citep{grinsztajn2025tabpfn25advancingstateart}: in-context learner leveraging prior-data fitted networks trained with synthetic data. We use the official \texttt{2.5} release, with the default configurations (``TabPFN-2.5'' and ``Real-TabPFN-2.5'' for regression and classification, respectively).
    \item \texttt{TabICLv2} \citep{qu2026tabiclv2betterfasterscalable}: a tabular foundation model for in-context learning that extends TabICL with a novel synthetic-data engine, a scalable softmax for longer contexts, and improved pretraining. We use the default configuration. TabICLv2 was pretrained on up to 100 features, 
    which is below the typical post-PCA feature count of our 
    benchmark (mean 416, max 1270 features after 30-PCA per 
    high-cardinality column; see \autoref{tab:pca_col_estim}); this 
    likely contributes to its slightly lower performance compared to 
    TabPFN-2.5, which was pretrained on up to 2000 features 
    (\autoref{fig:comparative-pareto_optimality}).
\end{itemize}

\paragraph{End-to-end models}

End-to-end models jointly process encoders and learners for a given table:
\begin{itemize}[topsep=0pt,itemsep=1pt]
    \item \texttt{ContextTab} \citep{spinaci2025contexttabsemanticsawaretabularincontext}: in-context learner that has been pre-trained from the real-world T4 dataset curated in \citet{gardner2024large}. The learner combines string encodings from \texttt{All-MiniLM-L6-v2} with the TabPFNv2 backbone. The model is instantiated from the \texttt{SAP\_RPT\_OSS} package using model defaults.
    \item \texttt{TabSTAR} \citep{arazi_tabstar_2025}: a pre-trained model from real world datasets with rich semantic information. The architecture integrates \texttt{e5-small-v2} that is tuned alongside the tabular backbone. The model is evaluated with the default parameters.
    \item \texttt{CatBoost} \citep{prokhorenkova2019catboostunbiasedboostingcategorical}: a gradient-boosted trees package commonly used to learn on tables. 
    Uses internal handling of categorical attributes
    and receives categorical feature indices explicitly from the encoding pipeline.
    We treat text features as categorical, encoded by CatBoost's categorical encoding, an improved version of target encoding.
    Models are configured with a maximum of $1,000$ iterations with early-stopping with \texttt{od\_type='Iter'} (patience=300) based on a validation set. We record results of both default and tuned estimators.
    \item \texttt{Mambular} \citep{thielmann2025mambularsequentialmodeltabular}: a tabular deep-learning architecture that treats features as a pseudo-sequence and processes them through Mamba state-space blocks. We use the \texttt{MambularClassifier} and \texttt{MambularRegressor} interfaces from \texttt{deeptab} with default parameters.

\end{itemize}

\paragraph{Note on \texttt{ConTextTab} pretraining-contamination risk.}\label{sec:contamination_contexttab}
\texttt{ConTextTab} \citep{spinaci2025contexttabsemanticsawaretabularincontext} is pretrained on 2.18M tables from the T4 corpus \citep{gardner2024large}, a Common Crawl and GitHub snapshot whose provenance has been shown to inflate \texttt{TabuLa-8B}'s results via train--test overlap \citep{gorla2026illusiongeneralizationreexaminingtabular}. Replicating the cell-level audit that the \texttt{ConTextTab} authors ran against CARTE requires gated access to the $\approx 2\,\text{TB}$ T4 release and is left for future work; the \texttt{ConTextTab} numbers we report should be read as subject to plausible pretraining overlap.

\paragraph{Cross-validation protocol}

To evaluate STRABLE, we employ a nested cross-validation protocol consisting of outer loop for performance measurement and inner loop for hyperparameter selection. For the outer loop, a dataset is partitioned into $3$ folds, with additional stratified sampling for classification tasks. In each iteration, one fold is held out as the test set, while the remaining folds constitute the training partition. Within each outer training partition, we perform hyperparameter selection via an internal $8$-fold cross-validation as done in \citet{erickson2025tabarenalivingbenchmarkmachine}.

\paragraph{Hyperparameter optimization and prediction}\label{para:hyperparameter-optimization}

For baseline estimators requiring hyperparameter optimization, we conduct a randomized search over $100$ iterations (including the default values). The configuration achieving the highest mean validation score in the inner loop is selected. Detailed search spaces can be found in \autoref{tab:strable_hyperparameter_space}. For final prediction, we employ two schemes: For learners that use a validation-set (\textit{e.g.,} \texttt{XGBoost}), we average the predictions of the $8$ folds, following \citet{erickson2025tabarenalivingbenchmarkmachine}; otherwise, we refit a model using the total training partition.

\paragraph{Evaluation metrics}

We report predictive performance and computational efficiency, with metrics of predictive power ($R^2$ for regression; AUROC for classification), durations of preprocessing and hyperparameter search, and inference latency. For regression tasks, we apply parameter-free target transformations (log, log1p, cbrt, arcsinh, signed-log) selected per dataset at preprocessing time using a skewness-minimization criterion; $R^2$ is reported on the transformed scale. We verify in Appendix E (\autoref{fig:raw_vs_transformed_label_rankings}) that this choice does not affect model rankings (Kendall's $\tau = 0.83$) between raw and transformed targets across 61 regression tasks).

\paragraph{Statistical significance tests}\label{app:stat_tests}

To assess the statistical significance of the performance differences between the $k$ evaluated pipelines across $N$ datasets, we first employ the \textbf{Friedman test} \citep{10.1214/aoms/1177731944}, a non-parametric equivalent of ANOVA for repeated measures. The null hypothesis states that all algorithms perform equivalently in terms of average rank.

Upon rejection of the null hypothesis ($p < 0.05$), we employ the \textbf{Conover-Iman post-hoc test} \citep{conover1979multiple} for pairwise comparisons. The test statistic for comparing two algorithms $i$ and $j$ is given by:

\begin{equation}
    T = \frac{|\bar{R}_i - \bar{R}_j|}{\sqrt{\hat{S}^2 \left( \frac{2}{N} \right)}}
\end{equation}

where $\bar{R}_i$ and $\bar{R}_j$ are the average ranks of the algorithms, and $\hat{S}^2$ is the pooled sample variance of the ranks. The difference between two algorithms is considered statistically significant if the p-value derived from the $t$-distribution is below the significance level $\alpha=0.05$.

\begin{table}[!h]
\small
\centering
\caption{Hyperparameter search space for STRABLE learners.}
\label{tab:strable_hyperparameter_space}
\begin{tabular}{lll}
\toprule
\multicolumn{1}{c}{\textbf{Methods}} & \multicolumn{1}{c}{\textbf{Parameters}} & \multicolumn{1}{c}{\textbf{Grid}} \\
\midrule
TabPFN-2.5 & - & Default parameters \\
\midrule
TabStar & - & Default parameters \\
\midrule
ContextTab & - & Default parameters \\
\midrule
Ridge Regression & Alpha $(\alpha)$ & $[0.01, 0.1, 1, 10, 100]$ \\
\midrule
\multirow{9}{*}{XGBoost} & Max depth & UniformInt [$2, 6$] \\
& Min child weight & LogUniform [$1, 100$] \\
& Subsample & Uniform [$0.5, 1$] \\
& Learning rate & LogUniform [$10^{-5}, 1$] \\
& Colsample by level & Uniform [$0.5, 1$] \\
& Colsample by tree & Uniform [$0.5, 1$] \\
& Gamma & LogUniform [$10^{-8}, 7$] \\
& L2 regularization ($\lambda$) & LogUniform [$1, 4$] \\
& L1 regularization ($\alpha$) & LogUniform [$10^{-8}, 100$] \\
\midrule
\multirow{7}{*}{CatBoost} & Max depth & UniformInt [$2, 6$] \\
& Learning rate & LogUniform [$10^{-5}, 1$] \\
& Bagging temperature & Uniform [$0, 1$] \\
& $l_2$-leaf regularization & LogUniform [$1, 10$] \\
& Random strength & UniformInt [$1, 20$] \\
& One hot max size & UniformInt [$0, 25$] \\
& Leaf estimation iterations & UniformInt [$1, 20$] \\
\midrule
\multirow{3}{*}{ExtraTrees} & Max features & \{sqrt, $0.5, 0.75, 1.0$\} \\
& Min samples split & LogUniformInt [$2, 32$] \\
& Min impurity decrease & Choice \{$0, 10^{-5}, 3\cdot 10^{-5}, 10^{-4}, 3\cdot 10^{-4}, 10^{-3}$\} \\
\bottomrule
\end{tabular}
\end{table}

\paragraph{Computational resources} 

The experimental evaluations were run on a CPU of 32 cores with an additional GPU for models that require GPU computations. The hardware was chosen based on availability. 
\begin{description}[itemsep=0.5pt, leftmargin = 1.25cm]
    \item \textbf{GPUs}: NVIDIA V100 (32GB VRAM), A100 (40GB VRAM), A40 (48GB VRAM)
    \item \textbf{CPUs}: AMD EPYC 7742 64-Core Processor, AMD EPYC 7702 64-Core Processor (512GB RAM), Intel(R) Xeon(R) CPU E5-2660 v2, Intel(R) Xeon(R) Gold 6226R CPU (256GB RAM)
\end{description}

The total CPU and GPU hours for the entire STRABLE experiment is of 842 days.

\begin{table}[H]
\footnotesize
\centering
\caption{Summary of Encoders and Architectures used in the STRABLE benchmark. Embedding dimensions, parameters and context length are taken from the English Massive Text Embedding Benchmark Leaderboard \citep{enevoldsen2025mmtebmassivemultilingualtext}}
\label{tab:llm_summary}
\begin{tabular}{llccc}
\toprule
\textbf{Category} & \textbf{Model Name} & \textbf{Dim ($d$)} & \textbf{Params} & \textbf{Context} \\
\midrule
\multirow{4}{*}{Statistical Baselines} 
& StringEncoder (Tf-Idf + SVD) & - & N/A & N/A \\
& TargetEncoder & - & N/A & N/A \\
& CatBoostEncoder & - & N/A & N/A \\
& FastText & 300 & N/A & N/A \\
\midrule
\multirow{5}{*}{Embedders (Contrastive)} 
& E5 (Small/Base/Large) & 384--1024 & 33M--335M & 512 \\
& BGE (Small/Base/Large) & 384--1024 & 0.1B--33.5B & 512 \\
& UAE-Large & 1024 & 335M & 512 \\
& All-MiniLM (L6/L12) & 384 & 22M--33.4M & 256 \\
& All-MPNet-Base-v2 & 768 & 110M & 384 \\
& KALM (Embed) & 896 & 0.5B & 512 \\
& Tarte & 768 & 25M & N/A \\
\midrule
\multirow{3}{*}{Encoder-only (MLM)} 
& RoBERTa (Base/Large) & 768--1024 & 125M--355M & 512 \\
& DeBERTa-v3 (XS/S/B/L) & 384--1024 & 22M--304M & 512 \\
& ModernBERT (Base/Large) & 768--1024 & 149M--395M & 8192 \\
\midrule
\multirow{1}{*}{Encoder-Decoder} 
& Sentence-T5 (Base/L/XL/XXL) & 768 & 0.5B--5B & 512 \\
\midrule
\multirow{7}{*}{Decoder-only (Causal)} 
& LLaMA-3.1 / 3.2 & 2048--4096 & 1B--8B & 128k \\
& LLaMA-Nemotron-Embed-1B-v2 & 2048 & 1B & 128k \\
& Qwen-3 (0.6B/4B/8B) & 1024--4096 & 0.6B--8B & 32k \\
& OPT (0.1B to 6.7B) & 768--4096 & 125M--6.7B & 2048 \\
& Gemma-0.3B & 768 & 300M & 128k \\
& F2LLM (0.6B/1.7B/4B) & 1024--2560 & 0.6B--4B & 1024 \\
& Jasper-0.6B & 2048 & 0.6B & 2048 \\
\midrule
\multirow{2}{*}{End-to-End Architectures} 
& ContextTab & 768 & 172M & 256 \\
& TabSTAR & 384 & 47.2M & 512 \\
\bottomrule
\end{tabular}
\end{table}

\clearpage

\section{Extended results}
\begin{tcolorbox}[colback=gray!10, colframe=gray!50, arc=2pt, boxrule=1pt, left=4pt, right=4pt, top=4pt, bottom=4pt]
\textbf{Take-aways on benchmarking tabular learning with strings:}
\begin{itemize}[leftmargin=1ex]
    \item Strings carry signal that complements numerical features: every learner benefits from including string columns
    \item Real-world string columns are mostly short and repetitive (median 17 characters), not free text
    \item Modular pipelines outperform end-to-end string-tabular architectures 
    \item Larger LLMs help only when paired with weak learners, or with appropriate post-processing of their embeddings, but they are not Pareto optimal (encoders are responsible for runtime)
    \item Decoder-only LLM embeddings need standard scaling or Matryoshka-style direct slicing rather than default PCA
    \item 30 principal components suffice; higher dimensions hurt performance and inflate runtime
    \item String length (avg. words per cell) is the dominant driver of ranking shifts; large LLMs only enter the top-10 on free-text-dominant tables
    \item STRABLE's 108 datasets yield rankings close to the oracle ($\tau \approx 0.95$), stable across application fields and data-preparation choices
\end{itemize}
\end{tcolorbox}

\begin{figure}[!ht]
    \centering
    \includegraphics[width=0.8\linewidth]{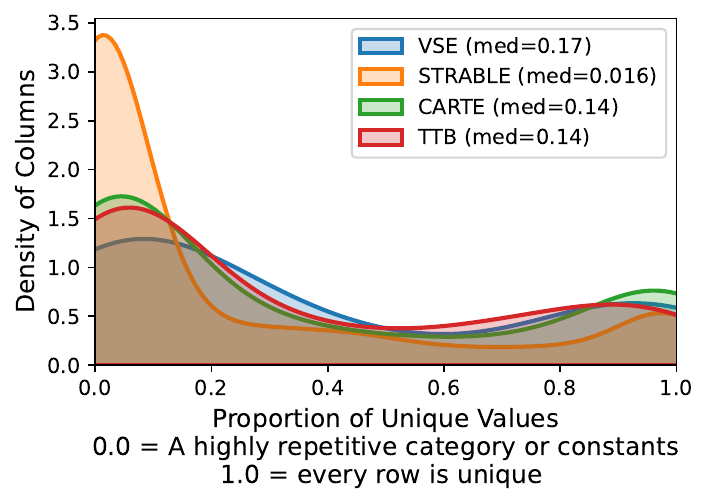}
    \caption{VSE stands for the datasets used in \citealp{grinsztajn2023vectorizingstringentriesdata}, CARTE represents the datasets used in \citealp{kim2024cartepretrainingtransfertabular} and TTB represents the datasets used in TextTabBench \cite{mraz2025benchmarkingfoundationmodelstabular}. STRABLE refers to the datasets introduced in this work. \textbf{STRABLE exhibits the lowest median proportion of unique text entries (0.016)}, compared to TTB (0.137), CARTE (0.136) and VSE (0.17).}
    \label{fig:uniqueness_ratio_comparison_4_benchmarks}

\begin{itemize}
    \setlength{\itemsep}{0pt} 
    \item \textbf{Avg Tokens / Cell}: average number of whitespace-separated tokens per cell within a sample of 1000 rows.
    \item \textbf{Avg Char / Cell}: average number of characters per cell for a sample of 1000 rows. 
    \item \textbf{Avg Unique Alphabetic Words / Cell}: average number of unique, case-insensitive alphabetic sequences (length $\geq$ 2) per cell, computed over a random sample of 1000 rows. This excludes numbers, punctuation, and single-letter characters. 
    \item \textbf{Avg Unique N-grams / Cell}: average number of unique character n-grams (length 2--4) per cell within a sample of 1000 rows. 
    \item \textbf{Proportion of Unique Values}: the ratio of unique values to total values in a column, indicating how repetitive vs. unique the entries are. \textbf{0.0} means all values are the same (e.g., a column of constants), while \textbf{1.0} means every value is unique (e.g., unique IDs or rich text). 
    \item \textbf{Text Col Ratio}: proportion of columns that are text. 
\end{itemize}

\textbf{STRABLE shows the lowest metrics.} In this regime, simple frequency-based methods such as Tf-Idf are well-suited and sufficient, as the discriminative signal is concentrated in a small set of recurring tokens rather than in semantic context — which explains the strong performance of Tf-Idf on the Pareto plot.
    \label{tab:four_benchmark_comparison}
    \captionof{table}[]{Median structural characteristics of text columns across the four benchmarks.}
    \small %
    \setlength{\tabcolsep}{4pt}
    \begin{tabular}{lcccccc}
    \toprule
     & \shortstack{Avg tokens\\/ Cell} & \shortstack{Avg char\\/ Cell} & \shortstack{Avg unique\\words / cell} & \shortstack{Avg unique\\n-grams / cell} & \shortstack{Prop. unique\\text entries} & \shortstack{Text col\\ratio} \\
    \midrule
    \textbf{CARTE} & 1.78 & 13.54 & 2.00 & 34.28 & 0.14 & 0.29 \\
    \textbf{STRABLE} & \textbf{1.31} & \textbf{11.07} & \textbf{1.35} & \textbf{26.81} & \textbf{0.02} & \textbf{0.59} \\
    \textbf{TTB} & 2.89 & 21.26 & 2.92 & 55.57 & 0.14 & 0.10 \\
    \textbf{VSE} & 1.56 & 13.89 & 1.98 & 35.33 & 0.17 & 0.33 \\
    \bottomrule
    \end{tabular}
\end{figure}

\begin{figure}[t!]
    \centering
    \includegraphics[width=\linewidth]{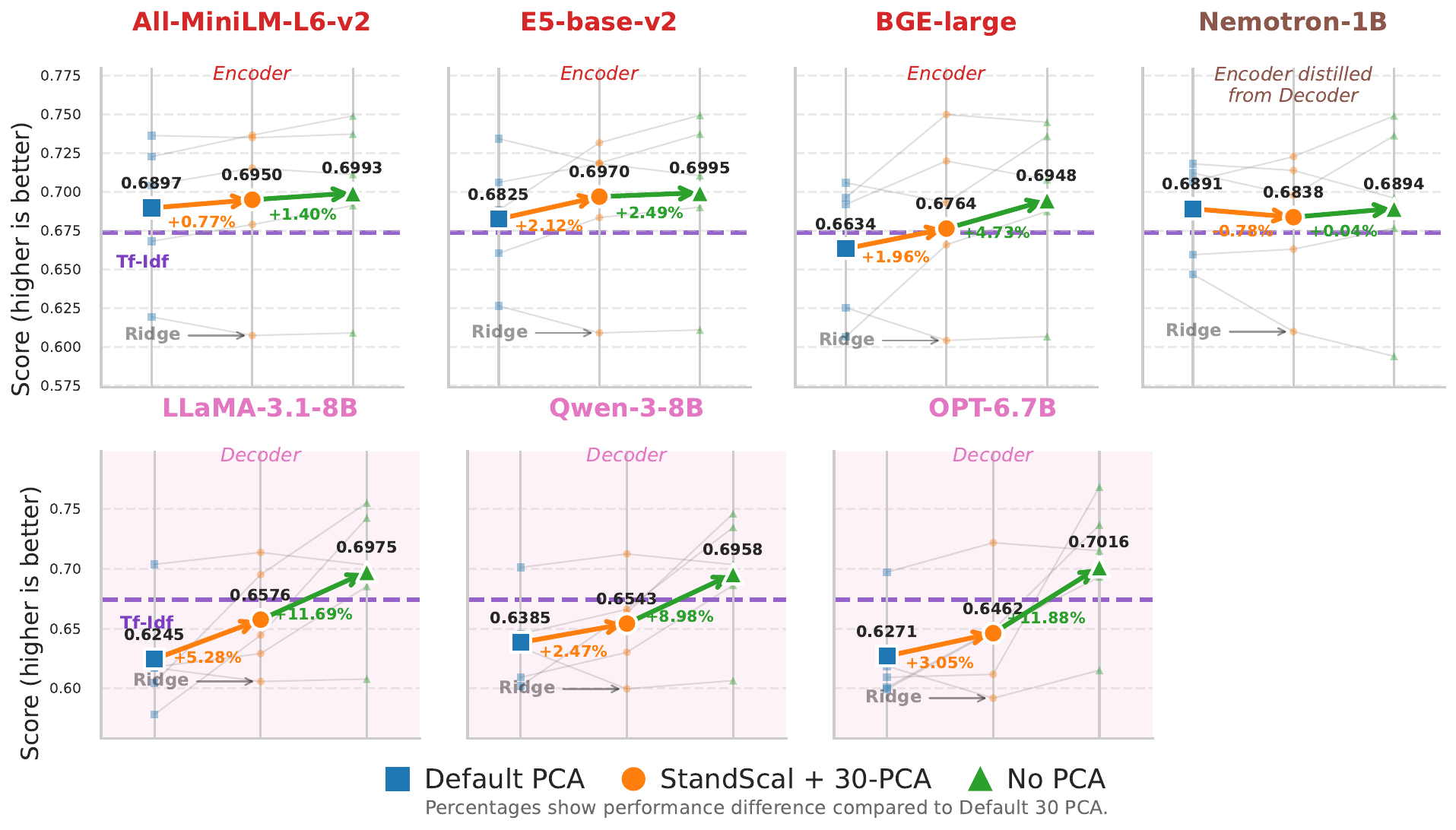}
    \caption{\textbf{Per-learner breakdown of post-processing strategies across LM encoders.} 
Same setup as \autoref{fig:pca_postprocessing_comparison}, but with each individual learner shown as a faint background line (Ridge, XGBoost, ExtraTrees, TabPFN-2.5, TabICLv2) in addition to the across-learner mean (foreground). Top row: encoder-only models and the distilled Nemotron-1B. Bottom row: decoder-only models. The post-processing trend is consistent across learners --- removing PCA helps decoders and is roughly neutral for encoders --- with Ridge (annotated by arrows) consistently the weakest learner across all encoders and post-processing variants. The spread of background lines indicates that the choice of learner has a smaller effect on score than the choice of post-processing for decoder-only models, while the opposite holds for encoder-only models.}
    \label{fig:pca_postprocessing_comparison_appendix}
\end{figure}

\begin{table}[ht!]
\centering
\caption{Per-dimension variance concentration (Gini coefficient \citep{RePEc:tpr:restat:v:61:y:1979:i:1:p:146-49}) of embeddings, computed per dataset and aggregated across 108 datasets. \textbf{Higher Gini indicates that variance is more concentrated in a small subset of dimensions.}}
\label{tab:gini_appendix}
\begin{tabular}{lcc}
\toprule
Model & Median Gini & Mean Gini \\
\midrule
MiniLM-L6-v2 (encoder) & 0.152 & 0.164 \\
E5-base-v2 (encoder)   & 0.113 & 0.126 \\
BGE-large (encoder)    & 0.123 & 0.130 \\
\midrule
Qwen3-8B (decoder, Matryoshka) & \textbf{0.236} & \textbf{0.242} \\
OPT-6.7B (decoder)             & \textbf{0.362} & \textbf{0.357} \\
LLaMA-3.1-8B (decoder)         & \textbf{0.422} & \textbf{0.407} \\
\bottomrule
\end{tabular}
\end{table}

\begin{table}[ht!]
\centering
\caption{\textbf{Pairwise comparisons of Gini coefficients across 108 STRABLE datasets.} Paired Wilcoxon signed-rank test \citep{c4091bd3-d888-3152-8886-c284bf66a93a} on Gini values per dataset, with $p$-values Holm-corrected for 15 pairwise comparisons \citep{Haynes2013}. $H_{0}:$ the paired differences in Gini Coefficients are different from zero. All decoder-vs-encoder comparisons are significant at $p < 10^{-17}$. The only marginal pair is E5-base-v2 vs BGE-large, two encoders with very similar Gini distributions.}
\label{tab:gini_pairwise}
\begin{tabular}{llc}
\toprule
Model A & Model B & Holm-corrected $p$ \\
\midrule
LLaMA-3.1-8B & MiniLM-L6-v2 & $2.8 \times 10^{-18}$ \\
LLaMA-3.1-8B & E5-base-v2   & $2.8 \times 10^{-18}$ \\
LLaMA-3.1-8B & BGE-large    & $2.8 \times 10^{-18}$ \\
Qwen3-8B     & MiniLM-L6-v2 & $2.8 \times 10^{-18}$ \\
Qwen3-8B     & E5-base-v2   & $2.8 \times 10^{-18}$ \\
Qwen3-8B     & BGE-large    & $2.8 \times 10^{-18}$ \\
OPT-6.7B     & MiniLM-L6-v2 & $2.8 \times 10^{-18}$ \\
OPT-6.7B     & E5-base-v2   & $2.8 \times 10^{-18}$ \\
OPT-6.7B     & BGE-large    & $2.8 \times 10^{-18}$ \\
\bottomrule
\end{tabular}
\end{table}

\begin{figure}[ht!]
    \centering
    \includegraphics[width=0.55\linewidth]{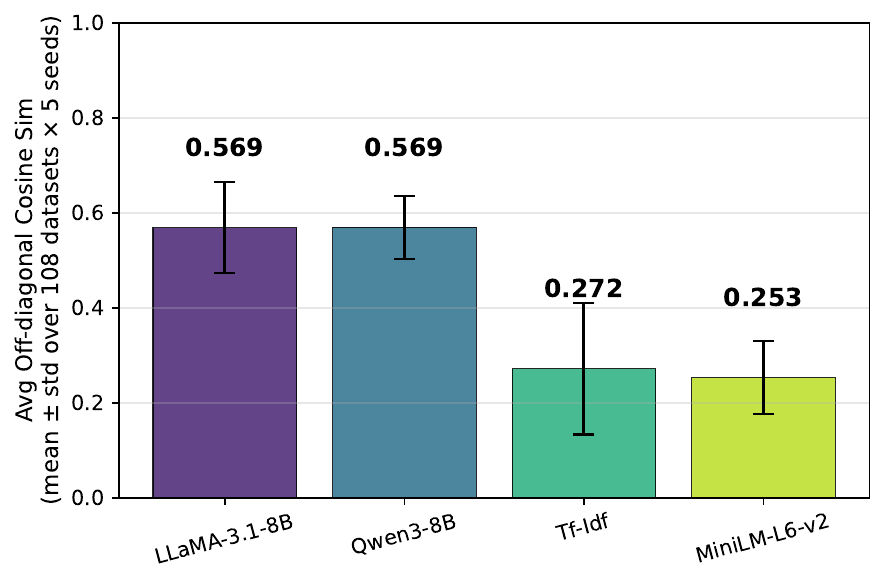}
    \caption{\textbf{Decoder-only models LLaMA-3.1-8B and Qwen-3-8B exhibit notably higher average similarity ($\approx$0.57) than encoder models such as MiniLM-L6-v2 ($\approx$0.25), indicating that unrelated strings tend to receive similar embeddings.} Average off-diagonal cosine similarity of raw string representations across the 108 STRABLE datasets (5 seeds each). Tf-Idf is shown as a baseline.}
    \label{fig:cosine_before}
\end{figure}

\begin{figure}[!ht]
    \centering
    \begin{subfigure}[b]{0.49\linewidth}
        \centering
        \includegraphics[width=\linewidth]{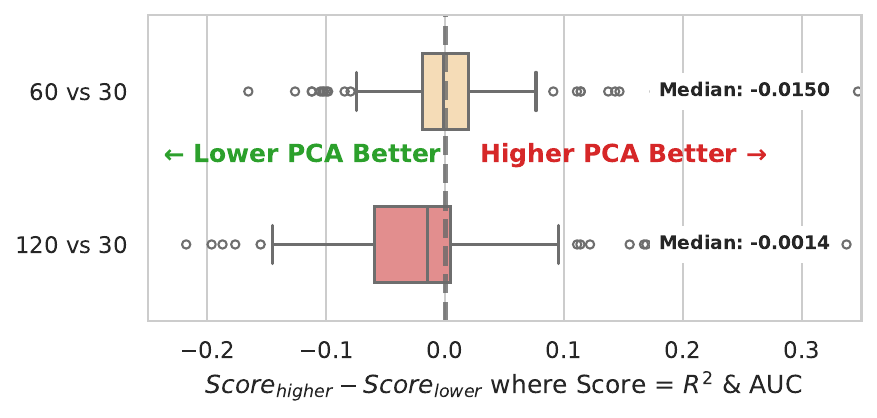}
        \caption{Score deltas}
        \label{fig:pca_score_delta}
    \end{subfigure}
    \hfill
    \begin{subfigure}[b]{0.49\linewidth}
        \centering
        \includegraphics[width=\linewidth]{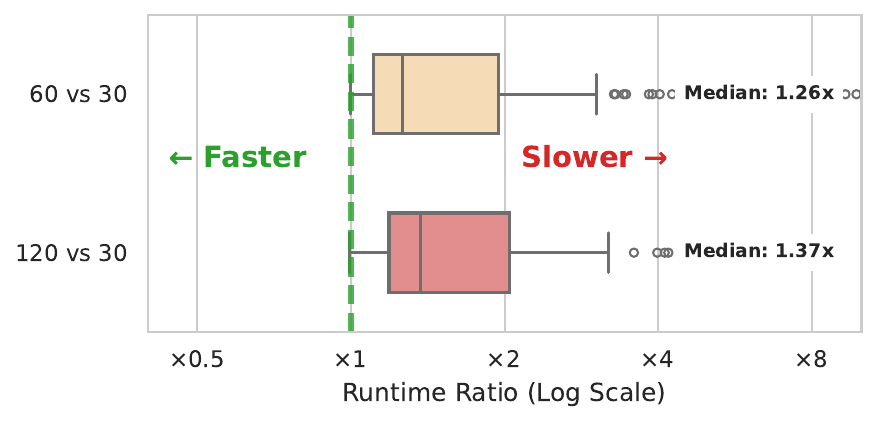}
        \caption{Runtime ratios (log scale)}
        \label{fig:pca_runtime_ratio}
    \end{subfigure}

    \caption{\textbf{Performance difference between LLaMA-3.1-8B + TabPFN-2.5 with PCA at 30, 60, and 120 dimensions.} \textbf{Left:} distribution of per-dataset score deltas (Score\textsubscript{higher} $-$ Score\textsubscript{lower}, where Score = $R^2$ \& AUC). A negative median indicates that the higher-dimensional PCA hurts performance. \textbf{Right:} runtime ratio on a log scale; values $>1$ indicate slower execution. Both comparisons use the 30-component setting as baseline.}
    \label{fig:llama_tabpfn_pca30_vs_60_vs_120_delta}
\end{figure}

\begin{table}[!ht]
\centering
\caption{Comparison of score variations and runtime. The median score deltas ($\Delta$) indicate a performance drop when shifting from 30 to 60 or 120 components, accompanied by a consistent increase in processing time.}
\label{tab:score_runtime_comparison}
\begin{tabular}{|l|c|c|}
\hline
\textbf{Comparison} & \textbf{Score $\Delta$ (Median)} & \textbf{Runtime} \\
\hline
60 vs 30  & -0.0105 & 1.26x slower  \\
\hline
120 vs 30 & -0.0014 & 1.37x slower \\
\hline
\end{tabular}
\end{table}

\begin{figure}[!ht]
    \centering
    \begin{subfigure}{\linewidth}
        \centering
        \includegraphics[width=0.7\linewidth]{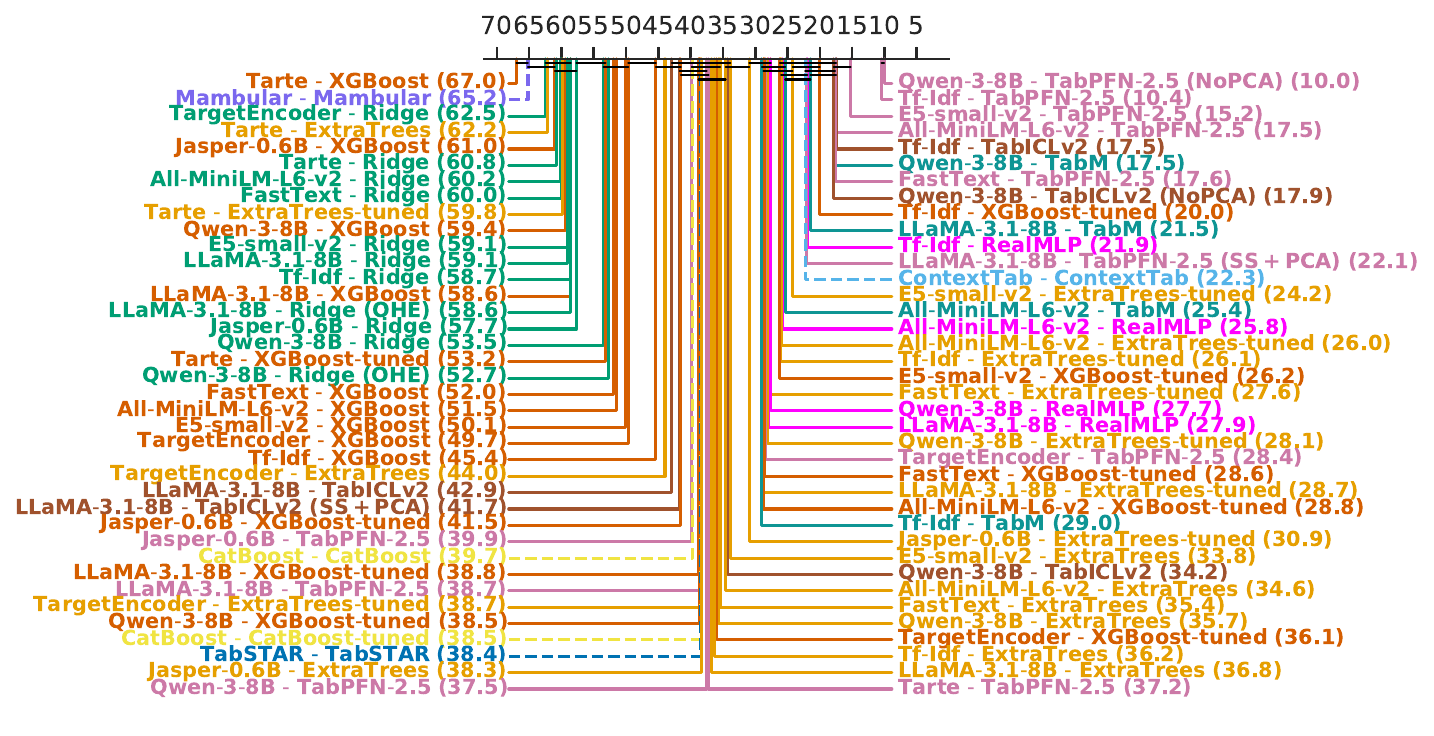}
        \caption{CD Diagram for all tasks}
        \label{fig:cd_diagram_all}
    \end{subfigure}

    \vspace{1.5em} %

    \begin{subfigure}{\linewidth}
        \centering
        \includegraphics[width=0.7\linewidth]{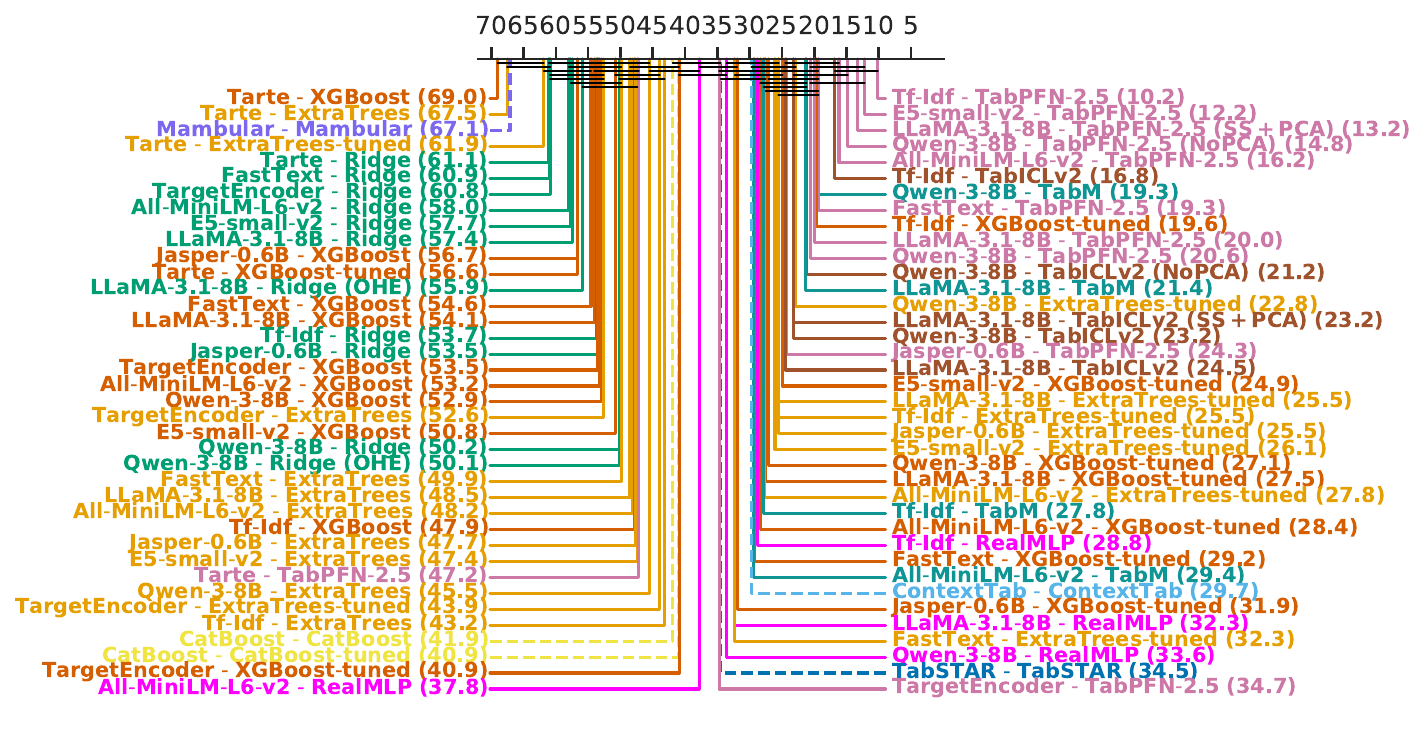}
        \caption{CD Diagram for Classification tasks}
        \label{fig:cd_diagram_cl}
    \end{subfigure}

    \vspace{1.5em}

    \begin{subfigure}{\linewidth}
        \centering
        \includegraphics[width=0.7\linewidth]{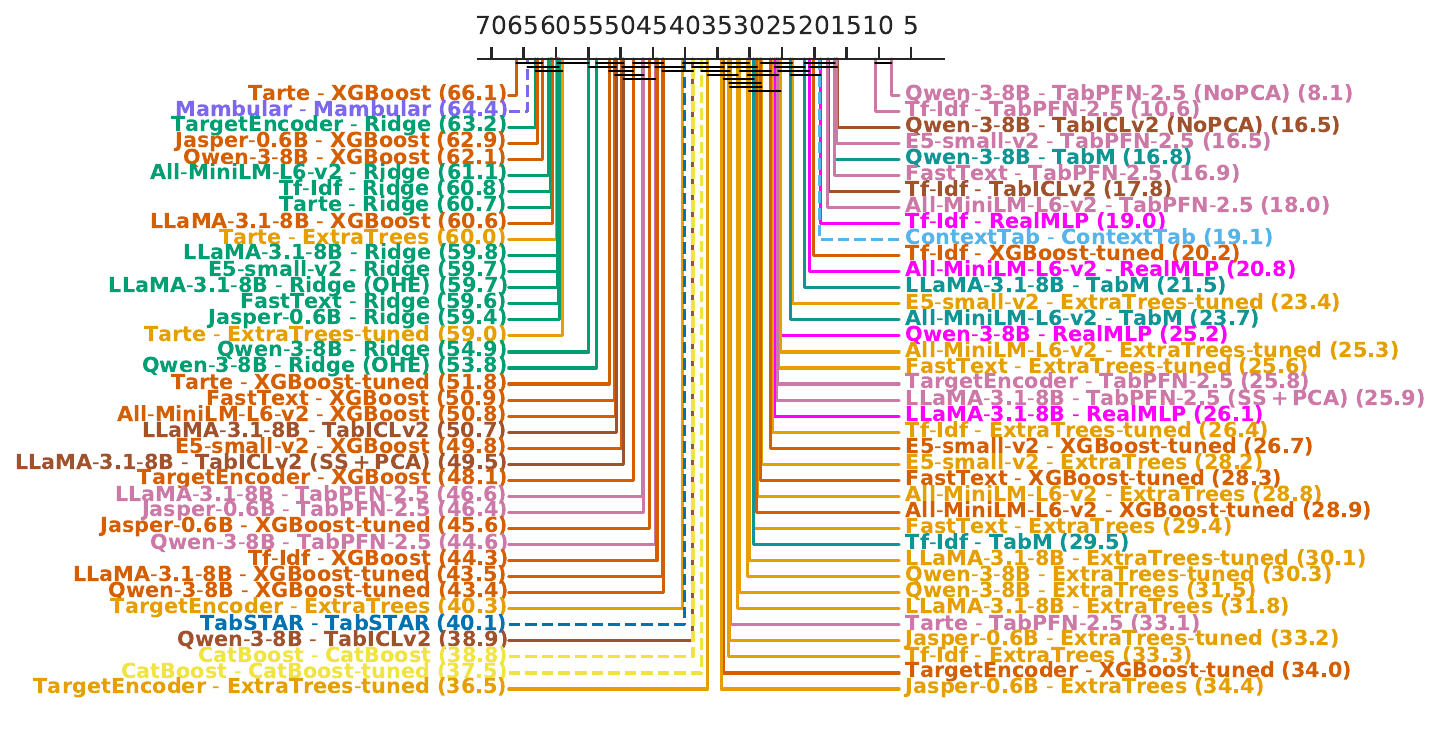}
        \caption{CD Diagram for Regression tasks}
        \label{fig:cd_diagram_re}
    \end{subfigure}

    \caption{\textbf{Critical Difference (CD) Diagrams across different task types.} Comparison of all pipelines using the Friedman test. The diagrams show mean ranks and groups of models that are not significantly different (connected by horizontal bars).}
    \label{fig:combined_cd_diagrams}
\end{figure}

\begin{figure}[!ht]
    \centering
    \includegraphics[width=0.7\linewidth]{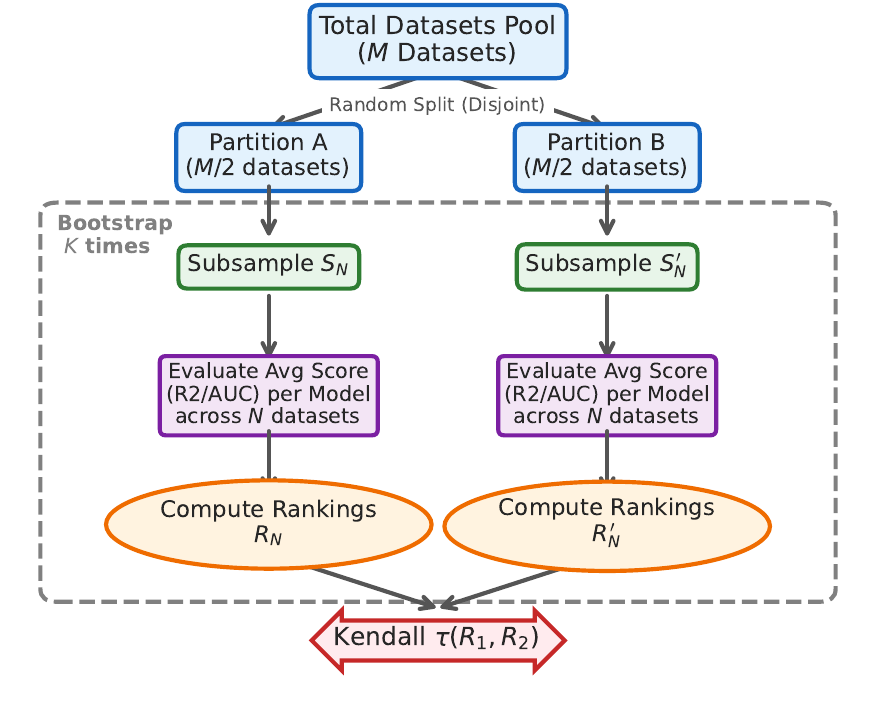}
    \caption{Sampling Diagram to produce the boostrap estimator of the Kendall-$\tau$ correlation between independent benchmarks of size N}
    \label{fig:sampling_diagram_kendalltau}
\end{figure}

\begin{figure*}[!ht]
    \centering
    \makebox[\textwidth]
    {\includegraphics[width=0.6\textwidth]{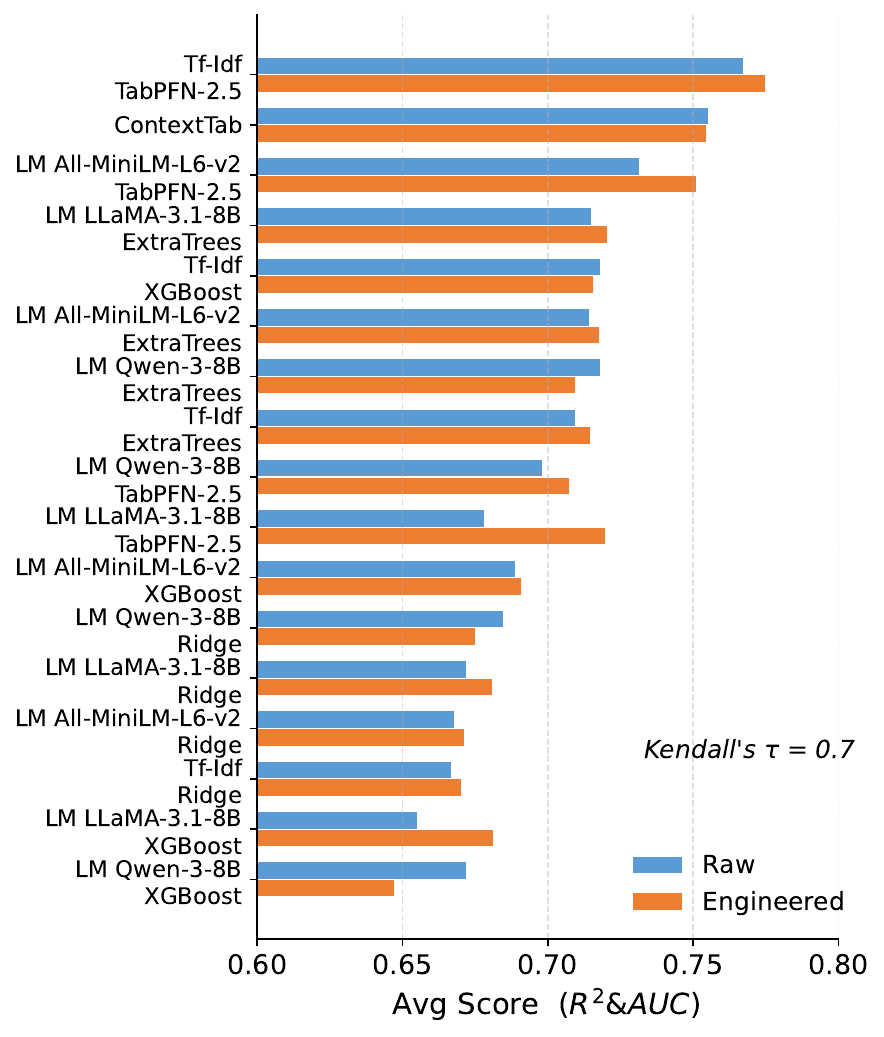}}%
    \caption{\textbf{Raw vs.\ engineered features.} Average score for three representative pipelines with and without manual feature engineering applied to 44/108 tables (date parsing, ordinal encoding, range 
    extraction, coordinate extraction, drug-strength parsing, fiscal year/quarter parsing). The Kendall-$\tau$ between the two rankings is reported.}
    \label{fig:raw_vs_engineered}
\end{figure*}

\begin{figure*}[!ht]
    \centering
    \makebox[\textwidth]
    {\includegraphics[width=0.9\textwidth]{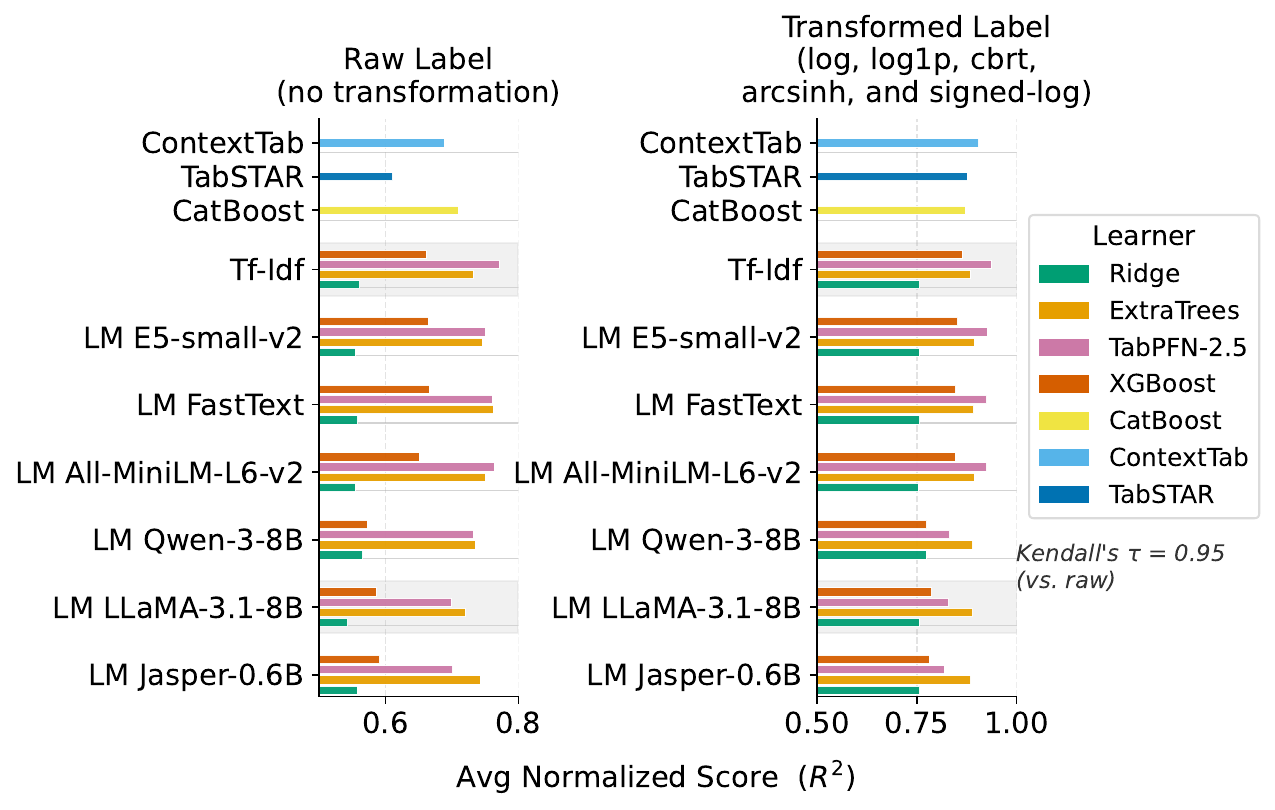}}%
    \caption{\textbf{Impact of Label Transformation on Model Rankings}: We evaluated 31 pipelines (28 modular and 3 end-to-end) across 61 regression tasks. The bar charts compare average normalized $R^2$ scores using raw targets versus transformed targets. While models perform worse on raw targets due to unmitigated skewness and outliers, the relative performance and ranking of the encoders remain consistent across both settings.}
    \label{fig:raw_vs_transformed_label_rankings}
\end{figure*}

\begin{figure*}[!ht]
    \centering
    \makebox[\textwidth]
    {\includegraphics[width=0.5\textwidth]{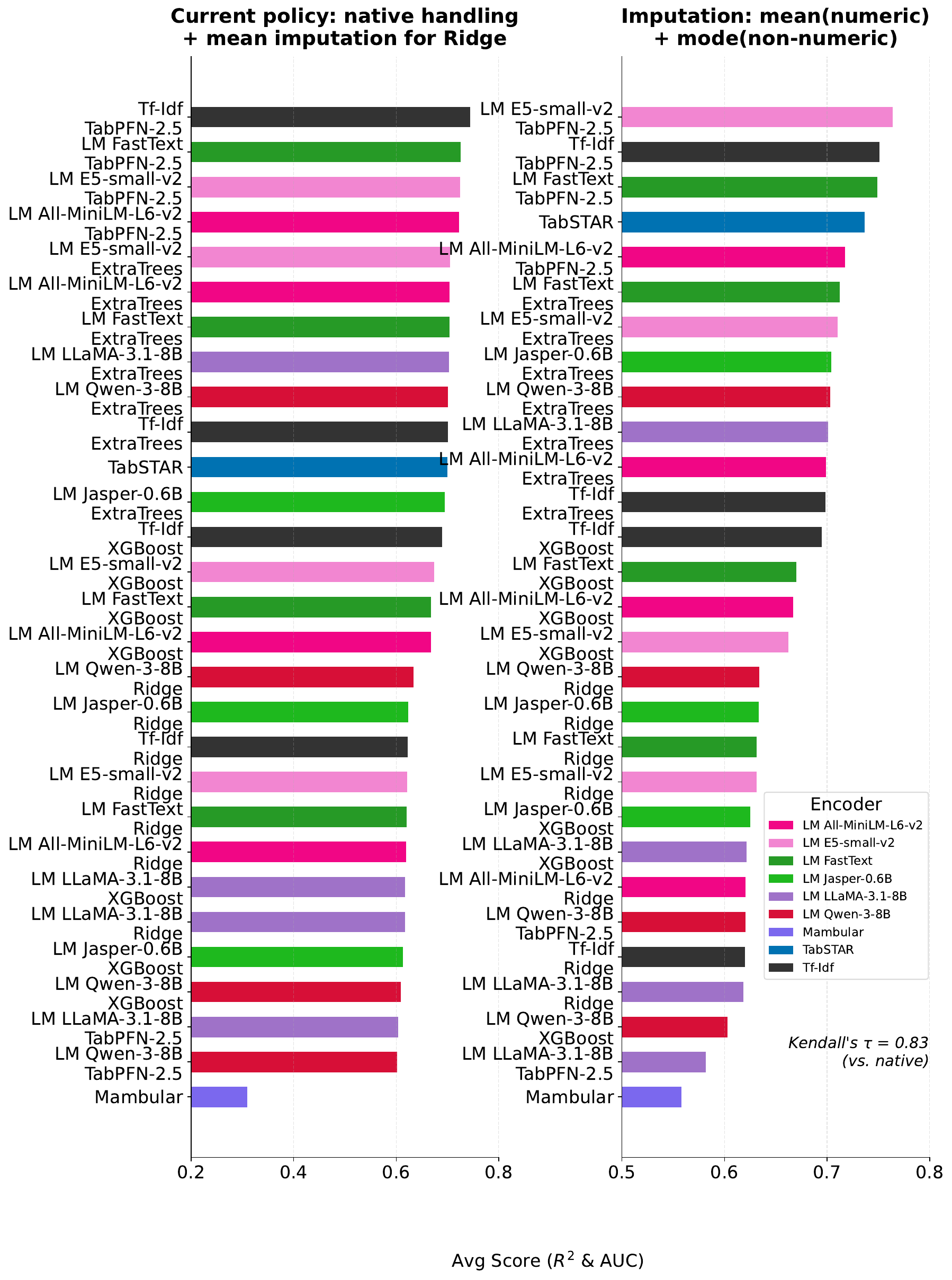}}%
    \caption{\textbf{Impact of Missing Values imputation on Model Rankings.} Comparison of pipeline rankings under two missing-value strategies: native handling per learner with mean imputation for Ridge only (left), and mean/mode imputation applied to all pipelines before encoding (right). Bars show average score ($R^2$ \& AUC) across 108 datasets, color-coded by encoder. Kendall's $\tau = 0.83$ between the two rankings, indicating that pipeline ordering is largely preserved regardless of imputation strategy.}
    \label{fig:missing_values_handling_none}
\end{figure*}

\begin{figure*}[!ht]
    \centering
    \makebox[\textwidth]{\includegraphics[width=0.6\textwidth]{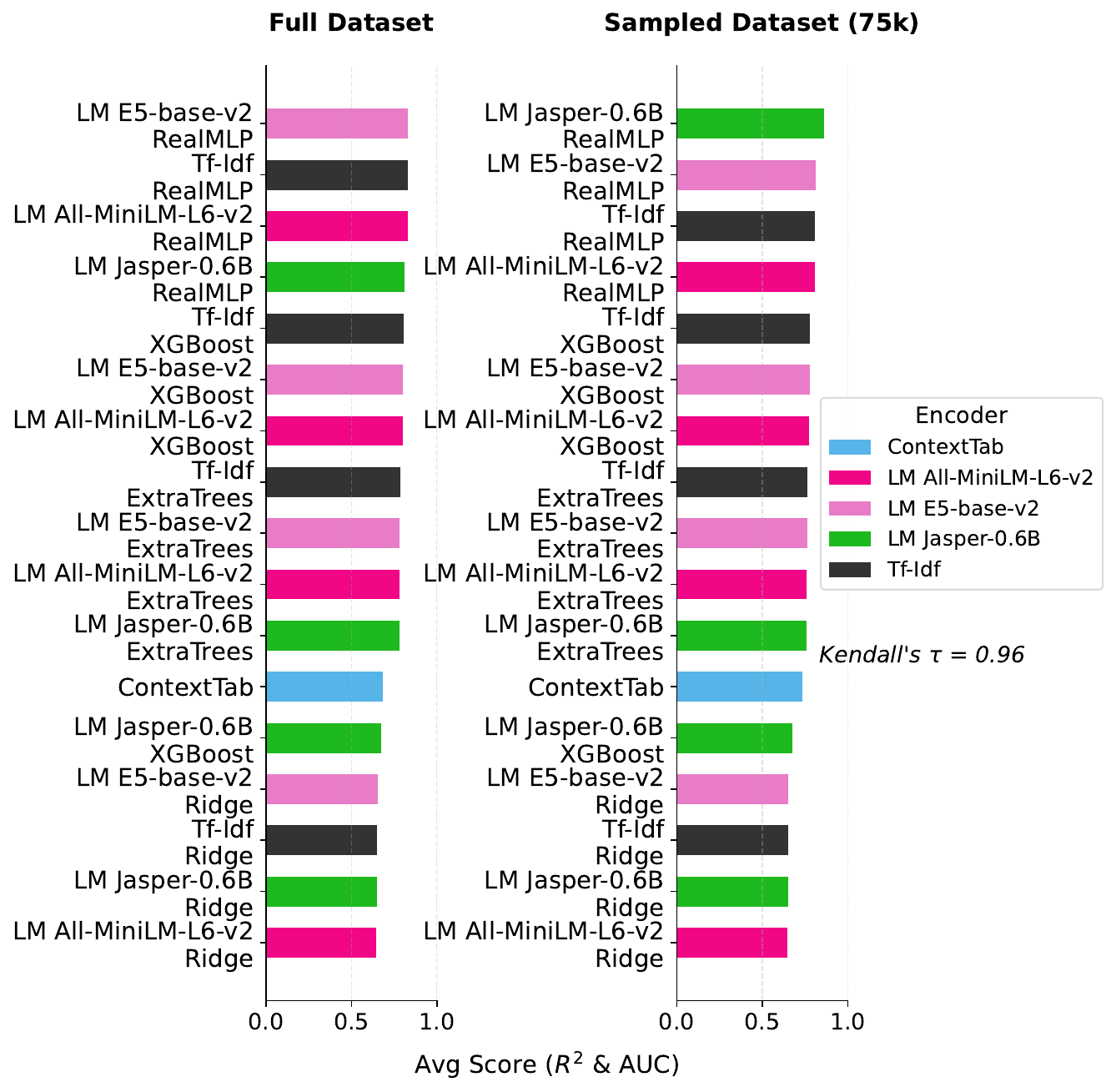}}
    \caption{\textbf{Pipeline rankings on full versus subsampled datasets ($n=75{,}000$).} Compute-intensive learners such as TabPFN-2.5 do not scale to large sample sizes; we therefore cap the number of data points at $75{,}000$ in our main benchmark (\autoref{app:downsampling}). To verify that this cap does not affect our findings, we re-run a subset of pipelines at full dataset sizes for the eight datasets that exceed the cap ($n \in \{77{,}213; 80{,}358; 109{,}766; 117{,}984; 150{,}346; 170{,}730; 183{,}960; 270{,}009\}$): four modular pipelines (\{Tf-Idf, All-MiniLM-L6-v2\} $\times$ \{XGBoost, ExtraTrees\}) and one end-to-end pipeline (ContextTab). Across all evaluated learners, the relative ordering of pipelines is preserved, and Tf-Idf consistently outperforms All-MiniLM-L6-v2 — suggesting that Tf-Idf's advantage in our benchmark reflects the nature of the strings in STRABLE rather than an artifact of subsampling.}
    \label{fig:full_vs_sampled_comparison}
\end{figure*}

\begin{figure*}[t!]
    \centering
    \makebox[\textwidth]
    {\includegraphics[width=\textwidth]{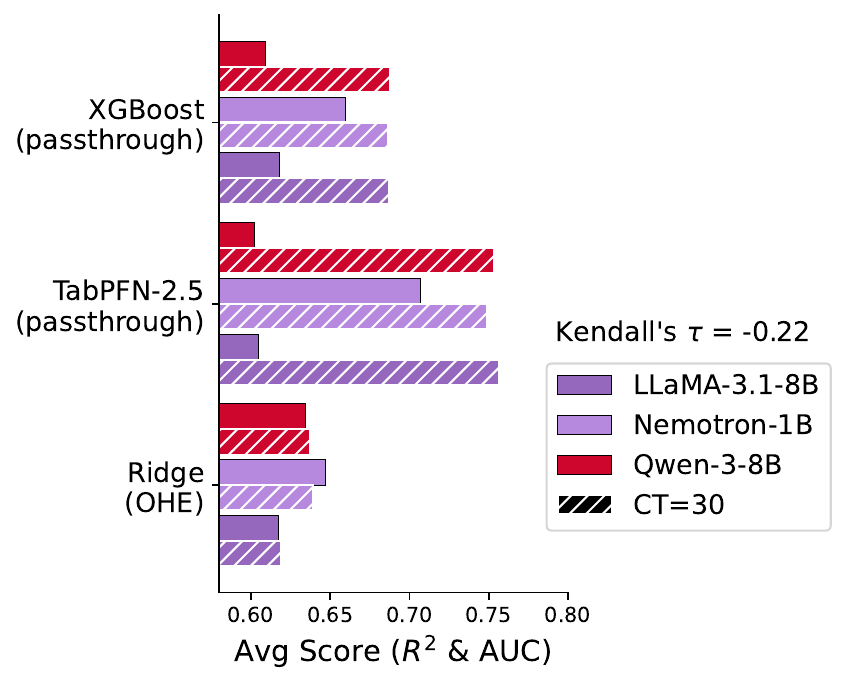}}%
    \caption{\textbf{With and without a 30-cardinality threshold.} Average score for three representative learners with and without cardinality threshold. The Kendall-$\tau$ between the two rankings is reported. Table \ref{tab:ohe-passthrough} shows that the difference between One Hot Encoding and Passthrough to treat features with cardinality threshold lower than 30 is negligible for XGBoost and TabPFN-2.5 (\autoref{tab:ohe-passthrough})}
    \label{fig:ridge_xgb_tabpfn_llama_qwen_nemotron_CT30}
\end{figure*}

\begin{figure*}[t!]
    \centering
    \makebox[\textwidth]
    {\includegraphics[width=\textwidth]{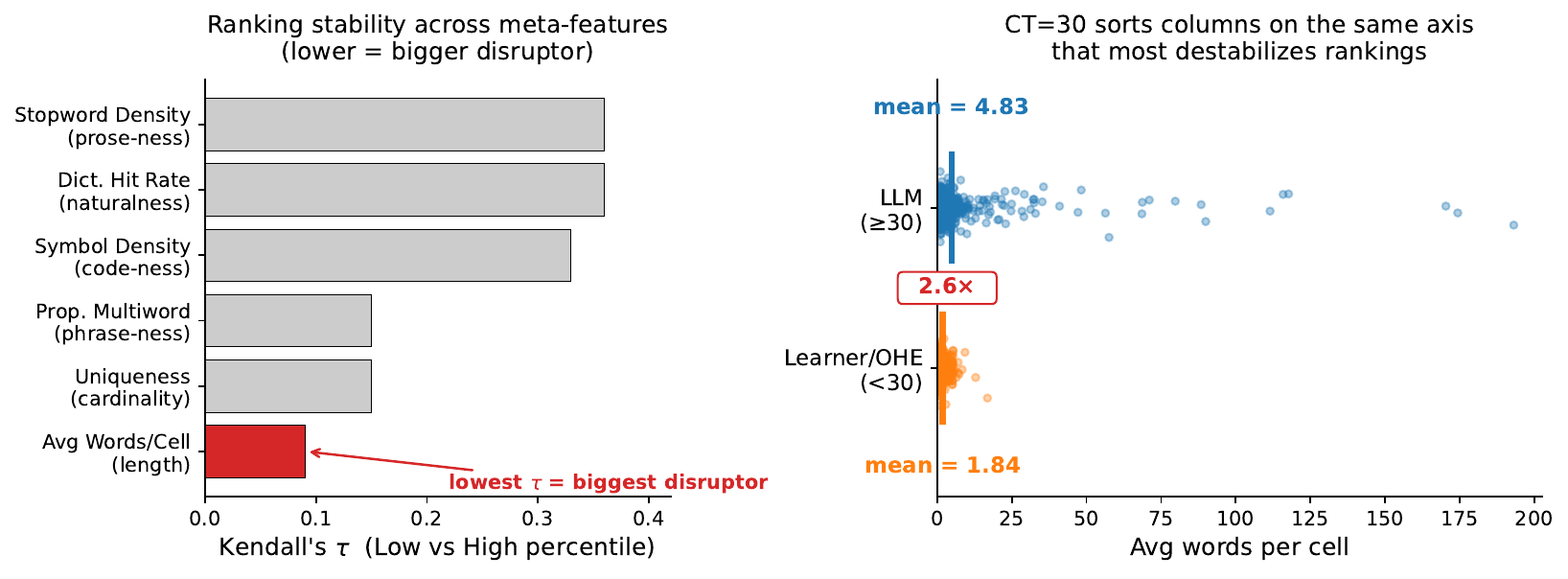}}%
    \caption{\textbf{Longer text is redirected to LLMs encoding}. From figure \ref{fig:ridge_xgb_tabpfn_llama_qwen_nemotron_CT30} we wanted to understand what kind of strings are encoded by the Language Models and what kinds are treated by the 30 Cardinality Threshold. As differentiating factor we use the average words per cell - being the string characteristic that most disrupts rankings. The picture shows that the average words per cell of features encoded by Language Models is 3 times more the length of features treated by OHE or by the native string handling of the learner.}
    \label{fig:CT_30_threshold_vs_string_index}
\end{figure*}

\begin{table}[!ht]
\centering
\caption{Score differences between one-hot and passthrough encoding of
categoricals are negligible for XGBoost and TabPFN-2.5.
$\Delta = \text{score}_{\text{passthrough}} - \text{score}_{\text{OHE}}$ is computed
per aligned $(\text{dataset}, \text{encoder})$ pair.
\textbf{$\overline{|\Delta|}$}: mean absolute score difference across pairs.
\textbf{$\max |\Delta|$}: largest absolute score difference observed.
\textbf{\% within 0.01}: fraction of pairs for which $|\Delta| < 0.01$.}
\small
\begin{tabular}{lcccc}
\toprule
Learner & $n$ $\overline{|\Delta|}$ & $\max |\Delta|$ & \% within 0.01 \\
\midrule
XGBoost     & $0.007$ & $0.083$ & 80\% \\
TabPFN-2.5  & $0.003$ & $0.060$ & 95\% \\
\bottomrule
\end{tabular}
\label{tab:ohe-passthrough}
\end{table}

\begin{table}[!ht]
\centering
\caption{Average score per learner by feature type.}
\label{tab:dtype_results}
\begin{tabular}{lccc}
\toprule
\textbf{Learner} & \textbf{Num-only} & \textbf{Num+Str} & \textbf{Str-only} \\
\midrule
CatBoost          & 0.475 & 0.695 & 0.624 \\
CatBoost-tuned    & 0.462 & 0.694 & 0.617 \\
ContextTab        & 0.464 & 0.729 & 0.677 \\
ExtraTrees        & 0.430 & 0.691 & 0.641 \\
ExtraTrees-tuned  & 0.484 & 0.705 & 0.650 \\
Ridge             & 0.328 & 0.623 & 0.580 \\
TabPFN-2.5        & 0.485 & 0.686 & 0.585 \\
TabSTAR            & 0.430 & 0.701 & 0.653 \\
XGBoost           & 0.464 & 0.646 & 0.579 \\
XGBoost-tuned     & 0.485 & 0.696 & 0.653 \\
\midrule
\textbf{Average}  & \textbf{0.451} & \textbf{0.687} & \textbf{0.626} \\
\bottomrule
\end{tabular}
\end{table}

\begin{table}[ht]
\centering
\caption{Domain-level string meta-features and ranking stability ($\tau$). $\tau$: Kendall's tau measuring encoder ranking stability (higher = more stable). \textbf{n}: number of datasets in the category. \textbf{Words/Cell}: average number of words per cell. \textbf{Uniqueness}: ratio of unique values to total entries. \textbf{Vocab Div.}: ratio of unique words to total words across the category's string columns (lower values indicate more repetitive vocabulary). }
\label{tab:domain-meta}
\renewcommand{\arraystretch}{1.3}
\begin{tabular}{lcccccc}
\toprule
\textbf{Category} & $\tau$ & \textbf{n} & \textbf{Words/Cell} & \textbf{Uniqueness} & \textbf{Vocab Div.} \\
\midrule
Food            & 0.250 &  6 & 4.708 & 0.300 & 0.113 \\
Education       & 0.475 & 10 & 5.824 & 0.354 & 0.145 \\
Commerce        & 0.616 &  5 & 5.617 & 0.341 & 0.157 \\
Social          & 0.693 &  4 & 2.035 & 0.073 & 0.035 \\
Energy          & 0.702 &  9 & 2.894 & 0.219 & 0.085 \\
Infrastructure  & 0.767 & 18 & 2.935 & 0.241 & 0.105 \\
Economy         & 0.792 & 26 & 3.380 & 0.169 & 0.066 \\
Health          & 0.834 & 30 & 4.913 & 0.256 & 0.083 \\
\bottomrule
\end{tabular}
\label{tab:domain_metafeatures_stability}
\end{table}

\begin{figure}[t]
    \centering
    \includegraphics[height=10cm]{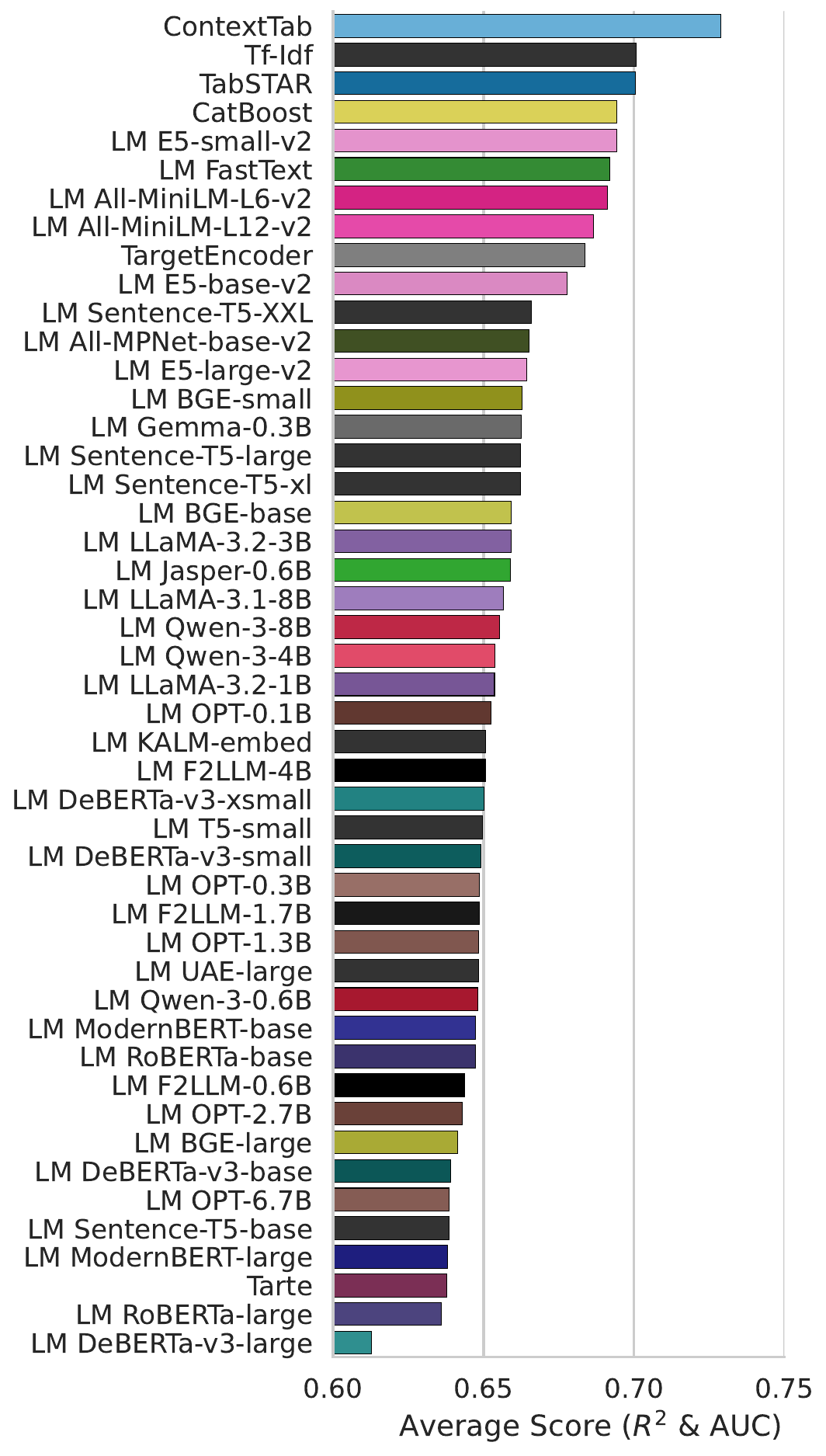}
    \caption{Average performance per encoder. ContextTab and TabSTAR performances - being end-to-end architectures - are based on their respective learners. Catboost encoder performance is based on the average between the default and tuned version of its respective learner. Seven encoders - All-MiniLM-L12-v2, E5-base-v2, E5-large-v2, LLaMA-3.2-1B, LLaMA-3.2-3B, Qwen-3-0.6B, Qwen-3-4B - were run fully on 4 learners - ExtraTrees, Ridge, XGBoost and XGBoost-tuned - and for 98\% of the cases for TabPFN-2.5. Nine encoders, which are our representative ones - All-MiniLM-L6-v2, E5-small-v2, FastText, Jasper-0.6B, LLaMA-3.1-8B, Qwen-3-8B, TargetEncoder, Tarte, Tf-Idf - were fully run for ExtraTrees, ExtraTrees-tuned, Ridge, TabPFN-2.5, XGBoost, XGBoost-tuned. The remaining 28 encoders were run on ExtraTrees, Ridge and XGBoost.}%
    \label{fig:avg_encoder_performance_all_llm}
\end{figure}

\begin{table}[h]
    \centering
    \caption{Pareto Optimality Table}
    \label{tab:pareto_optimality_score_runtime}
    \begin{tabular}{lllrr}
        \toprule
        \textbf{Encoder} & \textbf{Learner} & \textbf{Score} & \textbf{Runtime (s/1k)} \\
        \midrule
        TargetEncoder & Ridge & 0.6593 & 0.0924 \\
        TargetEncoder & ExtraTrees & 0.7290 & 0.1695 \\
        Tf-Idf & ExtraTrees & 0.7407 & 0.9679 \\
        LM E5-small-v2 & ExtraTrees & 0.7422 & 2.5152 \\
        Tf-Idf & TabICLv2 & 0.7799 & 2.9988 \\
        Tf-Idf & TabPFN-2.5 & 0.7891 & 6.6700 \\
        \bottomrule
    \end{tabular}
\end{table}

\begin{table}[htbp]
    \centering
    \caption{Summary statistics for the number of columns per dataset after applying 30-PCA. \textbf{TabICLv2 was trained on a maximum of 100 features, while TabPFN was trained on up to 2000}. This discrepancy in the training feature budget may explain why TabICLv2, despite being highly competitive, performs slightly worse than TabPFN in this high-dimensional regime.}
    \label{tab:pca_col_estim}
    \begin{tabular}{lr}
        \toprule
        \textbf{Statistic} & \textbf{Value} \\
        \midrule
        Count & 108.00 \\
        Mean  & 416.17 \\
        Std   & 254.61 \\
        Min   & 95.00 \\
        25\%  & 243.50 \\
        50\%  & 344.50 \\
        75\%  & 520.5 \\
        Max   & 1270.00 \\
        \bottomrule
    \end{tabular}
\end{table}

\end{appendices}

\clearpage
\section*{NeurIPS Paper Checklist}

\begin{enumerate}

\item {\bf Claims}
    \item[] Question: Do the main claims made in the abstract and introduction accurately reflect the paper's contributions and scope?
    \item[] Answer: \answerYes{} 
    \item[] Justification: The abstract and introduction (Sections 1 and 2) state four contributions, each of which is explicitly developed in a dedicated section: the benchmarking landscape and the gap that motivates STRABLE (Section 2), the curation methodology that yields 108 tables with raw strings (Section 3), the empirical study of approximately 445 pipelines (Section 4), and the analysis showing that the ranking produced by STRABLE is stable and close to the oracle ranking (Section 5). Limitations of these claims are stated in Section 6.
    \item[] Guidelines:
    \begin{itemize}
        \item The answer \answerNA{} means that the abstract and introduction do not include the claims made in the paper.
        \item The abstract and/or introduction should clearly state the claims made, including the contributions made in the paper and important assumptions and limitations. A \answerNo{} or \answerNA{} answer to this question will not be perceived well by the reviewers. 
        \item The claims made should match theoretical and experimental results, and reflect how much the results can be expected to generalize to other settings. 
        \item It is fine to include aspirational goals as motivation as long as it is clear that these goals are not attained by the paper. 
    \end{itemize}

\item {\bf Limitations}
    \item[] Question: Does the paper discuss the limitations of the work performed by the authors?
    \item[] Answer: \answerYes{} 
    \item[] Justification:  Section 6 contains an explicit ``Limitations'' paragraph noting that STRABLE reflects the string distribution of data-science tables rather than long-form text, so it enables only limited study of sentence-heavy tables, and that it does not address time-series specific validation protocols. 
    \item[] Guidelines:
    \begin{itemize}
        \item The answer \answerNA{} means that the paper has no limitation while the answer \answerNo{} means that the paper has limitations, but those are not discussed in the paper. 
        \item The authors are encouraged to create a separate ``Limitations'' section in their paper.
        \item The paper should point out any strong assumptions and how robust the results are to violations of these assumptions (e.g., independence assumptions, noiseless settings, model well-specification, asymptotic approximations only holding locally). The authors should reflect on how these assumptions might be violated in practice and what the implications would be.
        \item The authors should reflect on the scope of the claims made, e.g., if the approach was only tested on a few datasets or with a few runs. In general, empirical results often depend on implicit assumptions, which should be articulated.
        \item The authors should reflect on the factors that influence the performance of the approach. For example, a facial recognition algorithm may perform poorly when image resolution is low or images are taken in low lighting. Or a speech-to-text system might not be used reliably to provide closed captions for online lectures because it fails to handle technical jargon.
        \item The authors should discuss the computational efficiency of the proposed algorithms and how they scale with dataset size.
        \item If applicable, the authors should discuss possible limitations of their approach to address problems of privacy and fairness.
        \item While the authors might fear that complete honesty about limitations might be used by reviewers as grounds for rejection, a worse outcome might be that reviewers discover limitations that aren't acknowledged in the paper. The authors should use their best judgment and recognize that individual actions in favor of transparency play an important role in developing norms that preserve the integrity of the community. Reviewers will be specifically instructed to not penalize honesty concerning limitations.
    \end{itemize}

\item {\bf Theory assumptions and proofs}
    \item[] Question: For each theoretical result, does the paper provide the full set of assumptions and a complete (and correct) proof?
    \item[] Answer: \answerYes{} 
    \item[] Justification: Appendix A states Assumptions A.1--A.5 and provides complete proofs for Lemmas A.1--A.2, Propositions A.1--A.6, and Corollaries A.1--A.6. The basic homoskedastic setting (Sections A.1--A.4) is relaxed in Section A.5 to incorporate inductive biases, and Sections A.6--A.7 derive analogous bounds for top-1 disagreement. All theorems and lemmas are numbered, cross-referenced, and the assumptions used in each proof are stated explicitly. 
    \item[] Guidelines:
    \begin{itemize}
        \item The answer \answerNA{} means that the paper does not include theoretical results. 
        \item All the theorems, formulas, and proofs in the paper should be numbered and cross-referenced.
        \item All assumptions should be clearly stated or referenced in the statement of any theorems.
        \item The proofs can either appear in the main paper or the supplemental material, but if they appear in the supplemental material, the authors are encouraged to provide a short proof sketch to provide intuition. 
        \item Inversely, any informal proof provided in the core of the paper should be complemented by formal proofs provided in appendix or supplemental material.
        \item Theorems and Lemmas that the proof relies upon should be properly referenced. 
    \end{itemize}

    \item {\bf Experimental result reproducibility}
    \item[] Question: Does the paper fully disclose all the information needed to reproduce the main experimental results of the paper to the extent that it affects the main claims and/or conclusions of the paper (regardless of whether the code and data are provided or not)?
    \item[] Answer: \answerYes{} 
    \item[] Justification: The dataset corpus is downloadable from the URL in the abstract footnote, and all 108 datasets are individually documented in Section C.4 with their original sources and URLs. The minimal preprocessing is described in Section 3 and Section C.5. The full evaluation pipeline -- encoders, learners, end-to-end models, cross-validation protocol, hyperparameter search, prediction scheme, evaluation metrics, and statistical significance tests -- is detailed in Appendix D.1, with hyperparameter search spaces in Table D.1 and encoder specifications in Table D.2 and Table C.3. 
    \item[] Guidelines:
    \begin{itemize}
        \item The answer \answerNA{} means that the paper does not include experiments.
        \item If the paper includes experiments, a \answerNo{} answer to this question will not be perceived well by the reviewers: Making the paper reproducible is important, regardless of whether the code and data are provided or not.
        \item If the contribution is a dataset and\slash or model, the authors should describe the steps taken to make their results reproducible or verifiable. 
        \item Depending on the contribution, reproducibility can be accomplished in various ways. For example, if the contribution is a novel architecture, describing the architecture fully might suffice, or if the contribution is a specific model and empirical evaluation, it may be necessary to either make it possible for others to replicate the model with the same dataset, or provide access to the model. In general. releasing code and data is often one good way to accomplish this, but reproducibility can also be provided via detailed instructions for how to replicate the results, access to a hosted model (e.g., in the case of a large language model), releasing of a model checkpoint, or other means that are appropriate to the research performed.
        \item While NeurIPS does not require releasing code, the conference does require all submissions to provide some reasonable avenue for reproducibility, which may depend on the nature of the contribution. For example
        \begin{enumerate}
            \item If the contribution is primarily a new algorithm, the paper should make it clear how to reproduce that algorithm.
            \item If the contribution is primarily a new model architecture, the paper should describe the architecture clearly and fully.
            \item If the contribution is a new model (e.g., a large language model), then there should either be a way to access this model for reproducing the results or a way to reproduce the model (e.g., with an open-source dataset or instructions for how to construct the dataset).
            \item We recognize that reproducibility may be tricky in some cases, in which case authors are welcome to describe the particular way they provide for reproducibility. In the case of closed-source models, it may be that access to the model is limited in some way (e.g., to registered users), but it should be possible for other researchers to have some path to reproducing or verifying the results.
        \end{enumerate}
    \end{itemize}

\item {\bf Open access to data and code}
    \item[] Question: Does the paper provide open access to the data and code, with sufficient instructions to faithfully reproduce the main experimental results, as described in supplemental material?
    \item[] Answer: \answerYes{}{} 
    \item[] Justification: The full set of curated datasets and the code are openly available at the URLs in the abstract footnotes, and the complete list of 108 sources with their original URLs is given in Section C.4. Section 3, Section C.5 (sub-sampling), Section C.6 (string profiling), and Appendix D.1 (hyperparameter search spaces in Table D.1, cross-validation protocol, prediction scheme) collectively provide the instructions and configurations needed to reproduce the main results.
    \item[] Guidelines:
    \begin{itemize}
        \item The answer \answerNA{} means that paper does not include experiments requiring code.
        \item Please see the NeurIPS code and data submission guidelines (\url{https://neurips.cc/public/guides/CodeSubmissionPolicy}) for more details.
        \item While we encourage the release of code and data, we understand that this might not be possible, so \answerNo{} is an acceptable answer. Papers cannot be rejected simply for not including code, unless this is central to the contribution (e.g., for a new open-source benchmark).
        \item The instructions should contain the exact command and environment needed to run to reproduce the results. See the NeurIPS code and data submission guidelines (\url{https://neurips.cc/public/guides/CodeSubmissionPolicy}) for more details.
        \item The authors should provide instructions on data access and preparation, including how to access the raw data, preprocessed data, intermediate data, and generated data, etc.
        \item The authors should provide scripts to reproduce all experimental results for the new proposed method and baselines. If only a subset of experiments are reproducible, they should state which ones are omitted from the script and why.
        \item At submission time, to preserve anonymity, the authors should release anonymized versions (if applicable).
        \item Providing as much information as possible in supplemental material (appended to the paper) is recommended, but including URLs to data and code is permitted.
    \end{itemize}

\item {\bf Experimental setting/details}
    \item[] Question: Does the paper specify all the training and test details (e.g., data splits, hyperparameters, how they were chosen, type of optimizer) necessary to understand the results?
    \item[] Answer: \answerYes{} 
    \item[] Justification: Appendix D.1 specifies the nested cross-validation protocol, the randomized hyperparameter search over 100 iterations including defaults, and the prediction scheme. Hyperparameter search spaces for all tuned learners (Ridge, XGBoost, CatBoost, ExtraTrees) are listed in Table D.1; default-only configurations for TabPFN-2.5, TabSTAR, ContextTab, RealMLP, TabM, TabICLv2, and Mambular are explicitly stated. Preprocessing choices (PCA dimension, missing-value handling, target transformations, sub-sampling threshold) are detailed in Sections 3 and 4.
    \item[] Guidelines:
    \begin{itemize}
        \item The answer \answerNA{} means that the paper does not include experiments.
        \item The experimental setting should be presented in the core of the paper to a level of detail that is necessary to appreciate the results and make sense of them.
        \item The full details can be provided either with the code, in appendix, or as supplemental material.
    \end{itemize}

\item {\bf Experiment statistical significance}
    \item[] Question: Does the paper report error bars suitably and correctly defined or other appropriate information about the statistical significance of the experiments?
    \item[] Answer: \answerYes{} 
    \item[] Justification: Figure 1 (right panel) reports 95\% confidence intervals on average scores; Figure 5 reports median $\pm$ standard error of Kendall-$\tau$ across bootstrap subsamples. Pairwise statistical comparisons in the critical-difference diagrams (Figure 3, Figures E.4--E.6) use the Friedman test followed by the Conover-Iman post-hoc test at $\alpha = 0.05$, with the test statistic given in Equation D.1 and full description in Appendix D.1.
    \item[] Guidelines:
    \begin{itemize}
        \item The answer \answerNA{} means that the paper does not include experiments.
        \item The authors should answer \answerYes{} if the results are accompanied by error bars, confidence intervals, or statistical significance tests, at least for the experiments that support the main claims of the paper.
        \item The factors of variability that the error bars are capturing should be clearly stated (for example, train/test split, initialization, random drawing of some parameter, or overall run with given experimental conditions).
        \item The method for calculating the error bars should be explained (closed form formula, call to a library function, bootstrap, etc.)
        \item The assumptions made should be given (e.g., Normally distributed errors).
        \item It should be clear whether the error bar is the standard deviation or the standard error of the mean.
        \item It is OK to report 1-sigma error bars, but one should state it. The authors should preferably report a 2-sigma error bar than state that they have a 96\% CI, if the hypothesis of Normality of errors is not verified.
        \item For asymmetric distributions, the authors should be careful not to show in tables or figures symmetric error bars that would yield results that are out of range (e.g., negative error rates).
        \item If error bars are reported in tables or plots, the authors should explain in the text how they were calculated and reference the corresponding figures or tables in the text.
    \end{itemize}

\item {\bf Experiments compute resources}
    \item[] Question: For each experiment, does the paper provide sufficient information on the computer resources (type of compute workers, memory, time of execution) needed to reproduce the experiments?
    \item[] Answer: \answerYes{} 
    \item[] Justification: Appendix D.1 (``Computational resources'') reports the hardware used (NVIDIA V100/A100/A40 GPUs and AMD EPYC / Intel Xeon CPUs with up to 512\,GB RAM) and the total compute budget for the entire STRABLE experiment (842 CPU+GPU days).
    \item[] Guidelines:
    \begin{itemize}
        \item The answer \answerNA{} means that the paper does not include experiments.
        \item The paper should indicate the type of compute workers CPU or GPU, internal cluster, or cloud provider, including relevant memory and storage.
        \item The paper should provide the amount of compute required for each of the individual experimental runs as well as estimate the total compute. 
        \item The paper should disclose whether the full research project required more compute than the experiments reported in the paper (e.g., preliminary or failed experiments that didn't make it into the paper). 
    \end{itemize}
    
\item {\bf Code of ethics}
    \item[] Question: Does the research conducted in the paper conform, in every respect, with the NeurIPS Code of Ethics \url{https://neurips.cc/public/EthicsGuidelines}?
    \item[] Answer: \answerYes{} 
    \item[] Justification: The research conforms to the NeurIPS Code of Ethics. All datasets are aggregated from public institutional repositories (e.g., FDA, World Bank, HRSA, FCC, OpenML-style sources) and community-driven platforms, listed individually in Section C.4.
    \item[] Guidelines:
    \begin{itemize}
        \item The answer \answerNA{} means that the authors have not reviewed the NeurIPS Code of Ethics.
        \item If the authors answer \answerNo, they should explain the special circumstances that require a deviation from the Code of Ethics.
        \item The authors should make sure to preserve anonymity (e.g., if there is a special consideration due to laws or regulations in their jurisdiction).
    \end{itemize}

\item {\bf Broader impacts}
    \item[] Question: Does the paper discuss both potential positive societal impacts and negative societal impacts of the work performed?
    \item[] Answer: \answerNA{} 
    \item[] Justification: The paper introduces a benchmarking corpus and an empirical study of existing tabular learners on tables containing strings. The contribution is methodological -- providing a foundation for evaluating tabular learning pipelines -- and does not introduce new generative capabilities, surveillance technologies, or models with a direct path to harmful applications. Positive impacts (better empirical comparison of tabular methods, guidelines for practitioners, more rigorous evaluation protocols) follow naturally from the contribution and do not warrant a dedicated section.
    \item[] Guidelines:
    \begin{itemize}
        \item The answer \answerNA{} means that there is no societal impact of the work performed.
        \item If the authors answer \answerNA{} or \answerNo, they should explain why their work has no societal impact or why the paper does not address societal impact.
        \item Examples of negative societal impacts include potential malicious or unintended uses (e.g., disinformation, generating fake profiles, surveillance), fairness considerations (e.g., deployment of technologies that could make decisions that unfairly impact specific groups), privacy considerations, and security considerations.
        \item The conference expects that many papers will be foundational research and not tied to particular applications, let alone deployments. However, if there is a direct path to any negative applications, the authors should point it out. For example, it is legitimate to point out that an improvement in the quality of generative models could be used to generate Deepfakes for disinformation. On the other hand, it is not needed to point out that a generic algorithm for optimizing neural networks could enable people to train models that generate Deepfakes faster.
        \item The authors should consider possible harms that could arise when the technology is being used as intended and functioning correctly, harms that could arise when the technology is being used as intended but gives incorrect results, and harms following from (intentional or unintentional) misuse of the technology.
        \item If there are negative societal impacts, the authors could also discuss possible mitigation strategies (e.g., gated release of models, providing defenses in addition to attacks, mechanisms for monitoring misuse, mechanisms to monitor how a system learns from feedback over time, improving the efficiency and accessibility of ML).
    \end{itemize}
    
\item {\bf Safeguards}
    \item[] Question: Does the paper describe safeguards that have been put in place for responsible release of data or models that have a high risk for misuse (e.g., pre-trained language models, image generators, or scraped datasets)?
    \item[] Answer: \answerNA{} 
    \item[] Justification: The released asset is a curated collection of tabular datasets aggregated from public institutional and community sources (Section C.4). It does not contain pre-trained generative models, image data, free-form personal communications, or scraped content of a kind that would pose a high risk of misuse. We re-distribute the same content already publicly available from each original source.
    \item[] Guidelines:
    \begin{itemize}
        \item The answer \answerNA{} means that the paper poses no such risks.
        \item Released models that have a high risk for misuse or dual-use should be released with necessary safeguards to allow for controlled use of the model, for example by requiring that users adhere to usage guidelines or restrictions to access the model or implementing safety filters. 
        \item Datasets that have been scraped from the Internet could pose safety risks. The authors should describe how they avoided releasing unsafe images.
        \item We recognize that providing effective safeguards is challenging, and many papers do not require this, but we encourage authors to take this into account and make a best faith effort.
    \end{itemize}

\item {\bf Licenses for existing assets}
    \item[] Question: Are the creators or original owners of assets (e.g., code, data, models), used in the paper, properly credited and are the license and terms of use explicitly mentioned and properly respected?
    \item[] Answer: \answerYes{} 
    \item[] Justification: Every dataset source is individually cited and linked in Section C.4. All software dependencies used in the pipelines are cited as well as the embedding models listed in Table C.3 with their HuggingFace identifiers. Datasets are used in accordance with their original public terms of service.
    \item[] Guidelines:
    \begin{itemize}
        \item The answer \answerNA{} means that the paper does not use existing assets.
        \item The authors should cite the original paper that produced the code package or dataset.
        \item The authors should state which version of the asset is used and, if possible, include a URL.
        \item The name of the license (e.g., CC-BY 4.0) should be included for each asset.
        \item For scraped data from a particular source (e.g., website), the copyright and terms of service of that source should be provided.
        \item If assets are released, the license, copyright information, and terms of use in the package should be provided. For popular datasets, \url{paperswithcode.com/datasets} has curated licenses for some datasets. Their licensing guide can help determine the license of a dataset.
        \item For existing datasets that are re-packaged, both the original license and the license of the derived asset (if it has changed) should be provided.
        \item If this information is not available online, the authors are encouraged to reach out to the asset's creators.
    \end{itemize}

\item {\bf New assets}
    \item[] Question: Are new assets introduced in the paper well documented and is the documentation provided alongside the assets?
    \item[] Answer: \answerYes{} 
    \item[] Justification: The new asset introduced is the STRABLE corpus, which is documented in Section 3 (curation methodology), Section C.3 (sources and characteristics by domain), Section C.4 (per-dataset descriptions, sources, URLs, and target tasks for all 108 datasets), Section C.5 (sub-sampling protocol), and Section C.6 (semantic taxonomy and column-level profiling, with validation against manual annotation). The download link in Section C.1 provides anonymous access to the curated corpus.
    \item[] Guidelines:
    \begin{itemize}
        \item The answer \answerNA{} means that the paper does not release new assets.
        \item Researchers should communicate the details of the dataset\slash code\slash model as part of their submissions via structured templates. This includes details about training, license, limitations, etc. 
        \item The paper should discuss whether and how consent was obtained from people whose asset is used.
        \item At submission time, remember to anonymize your assets (if applicable). You can either create an anonymized URL or include an anonymized zip file.
    \end{itemize}

\item {\bf Crowdsourcing and research with human subjects}
    \item[] Question: For crowdsourcing experiments and research with human subjects, does the paper include the full text of instructions given to participants and screenshots, if applicable, as well as details about compensation (if any)? 
    \item[] Answer: \answerNA{} 
    \item[] Justification: The paper does not involve crowdsourcing or research with human subjects. The validation of the column-profiling heuristic in Section C.6 was performed by the authors themselves with a state-of-the-art LLM as a secondary verifier; no external annotators were recruited.
    \item[] Guidelines:
    \begin{itemize}
        \item The answer \answerNA{} means that the paper does not involve crowdsourcing nor research with human subjects.
        \item Including this information in the supplemental material is fine, but if the main contribution of the paper involves human subjects, then as much detail as possible should be included in the main paper. 
        \item According to the NeurIPS Code of Ethics, workers involved in data collection, curation, or other labor should be paid at least the minimum wage in the country of the data collector. 
    \end{itemize}

\item {\bf Institutional review board (IRB) approvals or equivalent for research with human subjects}
    \item[] Question: Does the paper describe potential risks incurred by study participants, whether such risks were disclosed to the subjects, and whether Institutional Review Board (IRB) approvals (or an equivalent approval/review based on the requirements of your country or institution) were obtained?
    \item[] Answer: \answerNA{} 
    \item[] Justification: The paper does not involve research with human subjects. All datasets are aggregated from existing public sources.
    \item[] Guidelines:
    \begin{itemize}
        \item The answer \answerNA{} means that the paper does not involve crowdsourcing nor research with human subjects.
        \item Depending on the country in which research is conducted, IRB approval (or equivalent) may be required for any human subjects research. If you obtained IRB approval, you should clearly state this in the paper. 
        \item We recognize that the procedures for this may vary significantly between institutions and locations, and we expect authors to adhere to the NeurIPS Code of Ethics and the guidelines for their institution. 
        \item For initial submissions, do not include any information that would break anonymity (if applicable), such as the institution conducting the review.
    \end{itemize}

\item {\bf Declaration of LLM usage}
    \item[] Question: Does the paper describe the usage of LLMs if it is an important, original, or non-standard component of the core methods in this research? Note that if the LLM is used only for writing, editing, or formatting purposes and does \emph{not} impact the core methodology, scientific rigor, or originality of the research, declaration is not required.
    \item[] Answer: \answerYes{} 
    \item[] Justification: LLMs are a core component of the studied pipelines: a wide range of pre-trained language models (encoder-only, decoder-only, and encoder--decoder; full list in Table C.3 and Table D.2) are used as string encoders within the modular pipelines evaluated on STRABLE. Their use, configuration, and post-processing are described in Section 4. Additionally, an LLM-as-a-judge was used as a secondary verifier in the validation of the string-column profiling heuristic (Section C.6); the heuristic itself is deterministic and rule-based.
    \item[] Guidelines:
    \begin{itemize}
        \item The answer \answerNA{} means that the core method development in this research does not involve LLMs as any important, original, or non-standard components.
        \item Please refer to our LLM policy in the NeurIPS handbook for what should or should not be described.
    \end{itemize}

\end{enumerate}
\end{document}